\title{Topological Generalization Bounds for Discrete-Time Stochastic Optimization Algorithms}
\theoremstyle{plain}
\newtheorem{theorem}{Theorem}[section]
\newtheorem{proposition}[theorem]{Proposition}
\newtheorem{lemma}[theorem]{Lemma}
\theoremstyle{definition}
\newtheorem{definition}[theorem]{Definition}
\newtheorem{assumption}{Assumption}
\theoremstyle{remark}
\newtheorem{remark}[theorem]{Remark}
\newtheorem{example}[theorem]{Example}
\newcommand{\ie}{\textit{i.e.}\xspace}
\newcommand{\eg}{\textit{e.g.}\xspace}
\newcommand{\Eof}[2][]{\mathds{E}_{#1} \left[ #2 \right]}
\newcommand{\Rd}{{\mathds{R}^d}}
\newcommand{\upperbox}{\overline{\dim}_{\text{\normalfont{B}}}}
\newcommand{\set}[1]{\left\{ #1 \right\}}
\newcommand{\dimph}[1][]{\dim_{\mathrm{PH}}^{#1}}
\newcommand{\dimphnew}[1][]{\dim_{\mathrm{PH}}^{\text{\normalfont{NEW}}}}
\newcommand{\normof}[1]{\left\Vert #1 \right\Vert}
\newcommand{\zcal}{\mathcal{Z}}
\newcommand{\wcal}{\mathcal{W}}
\newcommand{\datadist}{\mu_z^{\otimes n}}
\newcommand{\risk}{\mathcal{R}}
\newcommand{\er}{\widehat{\mathcal{R}}_S}
\newcommand{\sqrtfrac}[2]{\sqrt{\frac{#1}{#2}}}
\newcommand{\muw}{\mu_{w\vert S}}
\newcommand{\Mag}{\mathrm{\mathbf{Mag}}}
\newcommand{\PMag}{\mathrm{\mathbf{PMag}}}
\newcommand{\wcalto}[2][k]{\mathcal{W}_{#1 \to #2}}
\newcommand{\R}{\mathds{R}}
\newcommand{\rhosp}{\rho_S^{(p)}}
\newcommand{\pathmst}[2]{\set{#1 \to #2}}
\renewcommand{\paragraph}[1]{{\vspace{0.45mm}\noindent \bf #1}.}
\crefname{appendix}{Appendix}{Appendix}
\Crefname{Appendix}{Appendix}{Appendix}
\author{%
  Rayna Andreeva$^{*}$$^{\text{1}}$, Benjamin Dupuis$^{*}$$^{\text{2,3}}$, Rik Sarkar$^{\text{1}}$, Tolga Birdal\textsuperscript{\textdagger}$^{\text{4}}$, Umut \c{S}im\c{s}ekli\textsuperscript{\textdagger}$^{\text{2,3}}$\\
    $^{\text{1}}$ School of Informatics, University of Edinburgh, UK\\
      $^{\text{2}}$ INRIA, France\\ 
      $^{\text{3}}$ CNRS, Ecole Normale Supérieure, PSL Research University, France\\
      $^{\text{4}}$ Department of Computing, Imperial College London, UK\\
      \\
      \footnotesize{$^{*}$\textsuperscript{\textdagger} indicate equal contributions.}
}
\begin{document}
\maketitle

\begin{abstract}

We present a novel set of rigorous and computationally efficient topology-based complexity notions that exhibit a strong correlation with the generalization gap in modern deep neural networks (DNNs). DNNs show remarkable generalization properties, yet the source of these capabilities remains elusive, defying the established statistical learning theory. Recent studies have revealed that properties of training trajectories can be indicative of generalization. Building on this insight, state-of-the-art methods have leveraged the topology of these trajectories, particularly their fractal dimension, to quantify generalization. Most existing works compute this quantity by assuming continuous- or infinite-time training dynamics, complicating the development of practical estimators capable of accurately predicting generalization without access to test data. In this paper, we respect the discrete-time nature of training trajectories and investigate the underlying topological quantities that can be amenable to topological data analysis tools. This leads to a new family of reliable topological complexity measures that provably bound the generalization error, eliminating the need for restrictive geometric assumptions. These measures are computationally friendly, enabling us to propose simple yet effective algorithms for computing generalization indices. Moreover, our flexible framework can be extended to different domains, tasks, and architectures. Our experimental results demonstrate that our new complexity measures correlate highly with generalization error in industry-standards architectures such as transformers and deep graph networks. Our approach consistently outperforms existing topological bounds across a wide range of datasets, models, and optimizers, highlighting the practical relevance and effectiveness of our complexity measures.
\end{abstract}

\section{Introduction}
\label{sec:intro}
Generalization, a hallmark of model efficacy, is one of the most fundamental attributes for certifying any machine learning model.
Modern deep neural networks (DNN) display remarkable generalization abilities that defy the current wisdom of machine learning (ML) theory \cite{zhang_understanding_2017,zhang_understanding_2021}. 
The notion can be formalized through the \emph{risk} minimization problem, which consists of minimizing the function:
\begin{align}
    \label{eq:risk}
    \risk (w) := \Eof[z\sim\mu_z]{\ell(w,z)},
\end{align}
where $z \in \zcal := \mathcal{X}\times \mathcal{Y}$ denotes the data, distributed according to a probability distribution $\mu_z$ on the data space $\zcal$. In practice, as $\mu_z$ is unknown, ML algorithms focus on minimizing the empirical risk,
\begin{align}
    \label{eq:er}
    \er (w) = \frac{1}{n} \sum_{i=1}^n \ell(w,z_i),
\end{align}
where $S := (z_1,\dots,z_n) \sim \datadist := \mu_z \otimes \dots \otimes \mu_z$, which means that $(z_1,\dots,z_n)$ are independent samples from $\mu_z$. In many applications, the minimization of \eqref{eq:er} is achieved by discrete stochastic optimization algorithms, such as stochastic gradient descent (SGD) or the ADAM \cite{kingma2017adam} method. Such algorithms generate a sequence of iterates in $\Rd$, denoted $\wcal_S := \set{w_k}_{k\geq 0}$, which depends on the data $S$, the initialization $w_0\in\Rd$, and some additional randomness $U$, \eg, the random batch indices in SGD. The \emph{generalization error} characterizing the model's performance is then defined as:
\begin{align}
    \label{eq:gen_error}
    G_S(w_k) := \risk(w_k) - \er(w_k).
\end{align}
The empirical risk \eqref{eq:er} typically has numerous local minima, which raises the question of how to characterize their generalization properties. Recently, training trajectories (\emph{cf.}, Figure~\ref{fig:Teaser}a) have been shown to be paramount to answer this question \cite{xu_towards_2023,fu_learning_2023}. Indeed, these trajectories can quantify the quality of a local minimum in a compact way, because they depend simultaneously on the algorithm, the hyperparameters, and the data, which is crucial for obtaining satisfactory bounds \cite{jiang_fantastic_2019}. 
A wide family of trajectory-dependent bounds has been developed \cite{xu_towards_2023,fu_learning_2023,lyu_understanding_2023,arora_fine-grained_2019,humayun_training_2023}. For instance, several results on stochastic gradient Langevin dynamics \cite{mou_generalization_2017,pensia_generalization_2018,luo_generalization_2022}, continuous Langevin dynamics \cite{mou_generalization_2017} and SGD \cite{neuInformationTheoreticGeneralizationBounds2021} take into account the impact of the whole trajectory on the generalization error. 
\begin{figure}[t]
        \centering
        \includegraphics[width=\textwidth]{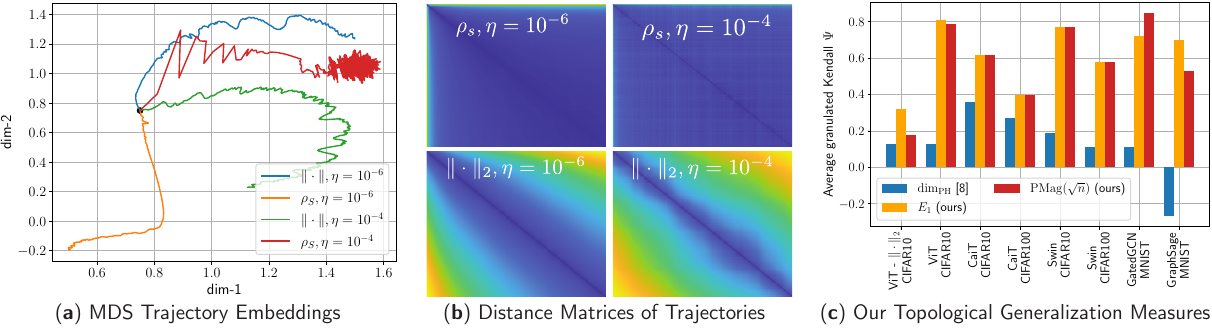}
        \caption{We devise a novel class of complexity measures that capture the topological properties of discrete training trajectories. These generalization bounds correlate highly with the test performance for a variety of deep networks, data domains and datasets. 
        Figure shows different trajectories (\textbf{a}) embedded using multi-dimensional scaling based on the distance-matrices (\textbf{b}) computed using either the Euclidean distance ($\|\cdot\|_2$) between weights as in~\cite{birdal2021intrinsic} or via the loss-induced pseudo-metric ($\rho_S$) as in~\cite{dupuis2023generalization}. (\textbf{c}) plots the \emph{average granulated Kendall coefficients} for two of our generalization measures ($\boldsymbol{E}_{\alpha}$ and $\PMag(\sqrt{n})$) in comparison to the state-of-the-art persistent homology dimensions~\cite{birdal2021intrinsic,dupuis2023generalization} for a range of models, datasets, and domains, revealing significant gains and practical relevance.\vspace{-4mm}}
        \label{fig:Teaser} 
    \end{figure}
    
Parallel to these developments, several studies have brought to light the empirical links between topological properties of DNNs and their generalization performance \cite{naitzat2020topology,magai2022topology,perez2021characterizing,rieck2018neural,watanabe2022topological}, hereby making new connections with topological data analysis (TDA) tools \cite{adams2021topology}. These studies focus on the structural changes across the different layers of the network \cite{magai2023deep} or on the final trained network \cite{perez2021characterizing,rieck2018neural,watanabe2022topological}, and are almost exclusively empirical. This partially inspired a new class of trajectory-dependent bounds focusing on topological properties of the trajectories. 
In particular, recent studies \cite{simsekli2020hausdorff,dupuis2023generalization,hodgkinson_generalization_2022,dupuis2024setpacbayes,birdal2021intrinsic,andreeva_metric_2023} have proposed to relate the generalization error to various kinds of intrinsic \emph{fractal} dimensions \cite{falconer_fractal_2014,mattila_geometry_1999} that characterize the learning trajectory.
Informally, these bounds provide the guarantee that with probability at least $1 - \zeta$, we have:\footnote{We use $\lesssim$ in informal statements to indicate that absolute constants and/or small terms are missing.}
    \begin{align}
        \label{eq:trajectory-fractal-bound-informal}
        \sup_{w \in \wcal_S} G_S(w) \lesssim \sqrtfrac{\dim (\wcal_S) + \mathrm{IT} + \log(1/\zeta)}{n},
    \end{align}
where $\dim (\wcal_S)$ denotes various equivalent fractal dimensions, in particular the persistent homology dimension (PH-dim) \cite{birdal2021intrinsic,dupuis2023generalization} and the magnitude dimension \cite{andreeva_metric_2023}.
The term $\mathrm{IT}$ is an information-theoretic quantity that takes different forms among different studies. Despite providing rigorous links between the topology of the trajectory and generalization, these bounds have major drawbacks. First and foremost, as noted in \cite{sefidgaran_rate-distortion_2022,sefidgaran_data-dependent_2024,camuto2021fractal}, fractal-trajectory bounds, such as Equation \eqref{eq:trajectory-fractal-bound-informal}, do not apply to discrete-time algorithms. This creates a discrepancy between these theoretical results and the TDA-inspired methods to numerically evaluate them on commonly used discrete algorithms \cite{birdal2021intrinsic,dupuis2023generalization,andreeva_metric_2023}. Additionally, existing bounds rely on intricate geometric assumptions, such as Ahlfors-regularity \cite{simsekli2020hausdorff,hodgkinson_generalization_2022} or geometric stability \cite{dupuis2023generalization}, that are not realistic in a practical, discrete setting.

Previous attempts were made to address this discretization issue. Specifically, under the assumption that the training dynamics possess a stationary measure $\muw^\infty$ for $T\to\infty$ ($T$ is the number of iterations), it was shown in \cite{camuto2021fractal} that with probability $1 - \zeta$ over $S\sim\datadist$ and $w\sim \muw^\infty$, we have:
    \begin{align}
        \label{eq:stationary-bounds-informal}
        G_S(w) \lesssim \sqrtfrac{\dim (\muw) + \mathrm{IT} + \log(1/\zeta)}{n},
    \end{align}
where $\dim (\muw)$ corresponds to the fractal dimension of the measure $\mu_w$ (see \cite{pesin2008dimension} for formal definitions). While this was an important step, this bound only becomes practically relevant when the number of iterations grows to infinity, which is never attained in real-life experiments. Other attempts make use of so-called finite fractal dimensions \cite{sachs_generalization_2023} or fine properties of the Markov transition kernels associated with the dynamics \cite{hodgkinson_generalization_2022}. However, these studies also rely on impractical assumptions and involve intricate quantities which make them not amenable to numerical evaluation.

Despite the theoretical limitations of existing topology-dependent generalization bounds, TDA-inspired tools have been developed to numerically estimate the proposed intrinsic dimensions in practical settings. Two particular methods have emerged and successfully demonstrate correlation with the generalization error, based on \emph{persistent homology} \cite{birdal2021intrinsic,dupuis2023generalization} (PH-dim) and \emph{metric space magnitude} \cite{andreeva_metric_2023} (magnitude dimension); these two dimensions are equivalent for compact metric spaces \cite{andreeva_metric_2023}. Because of the limitations discussed above, existing theories do not account for these experiments, conducted with finite-time discrete algorithms. Moreover, existing empirical studies \cite{birdal2021intrinsic,dupuis2023generalization,andreeva_metric_2023,simsekli2020hausdorff} only consider very simple models and small (image) datasets. Because of their lack of theoretical foundations, it is not clear whether they could be extended to more practical setups.

\paragraph{Contributions}
In this paper, we investigate the building blocks of PH and magnitude dimensions, in order to propose new topology-inspired generalization bounds that rigorously apply to widely used discrete-time stochastic optimization algorithms, and experimentally test our new topological complexities\footnote{Our term ``topological complexity'' should not be confused with the homonym topological invariant.} on practically relevant DNN architectures. 
Our detailed contributions are as follows:
\begin{itemize}[itemsep=2pt,topsep=1pt,leftmargin=.125in]
\item We start by establishing the first theoretical links between generalization and a new kind of computationally thrifty topological complexity measure, the \emph{$\alpha$-weighted lifetime sums} \cite{schweinhart2020fractal,schweinhart2021persistent}. %
\item We propose and elaborate on another novel topological complexity, \emph{positive magnitude} ($\PMag$), a slightly modified version of magnitude \cite{leinster2013magnitude,meckes2013positive}. %
We rigorously link $\PMag$ with the generalization error, by relying on a new proof technique. %
Overall, our generalization bounds, rooted in TDA, admit the following generic form:
    \begin{align*}
        \sup_{w \in \wcal_S} G_S(w) \lesssim \sqrtfrac{\text{(Topological complexity)} + \mathrm{IT} + \log(1/\zeta)}{n}.
    \end{align*}
    \item We then provide a flexible computational implementation based upon dissimilarity measures between neural nets (Figure~\ref{fig:Teaser}b), which enables quantifying generalization across different architectures and models, without the need for domain or problem-specific analysis as done in~\cite{kiani2024hardness,behboodi2022pac}. 
    \item Unlike existing trajectory-based studies \cite{birdal2021intrinsic,dupuis2023generalization} operating on small models, our experimental evaluation is extensive. We consider several vision transformers \cite{dosovitskiy_image_2021} and graph neural networks (GNN) \cite{gori2005new} trained on multiple datasets spanning regular and irregular data domains (\emph{cf.} Figure~\ref{fig:Teaser}c). 
    Our results demonstrate that the novel measures we introduce correlate strongly with the test performance across different architectures, hyperparameters and data modalities.
\end{itemize}
All the proofs of the main results are presented in the appendix, along with additional experiments. We will make our entire implementation publicly available under: \href{https://github.com/rorondre/TDAGeneralization}{https://github.com/rorondre/TDAGeneralization}.

\section{Technical Background}
\label{sec:technical-background}
Our generalization indicators will be based upon $\alpha$-weighted lifetime sums and magnitude, capturing different topological features, as we shortly dicsuss below. Let $(X,\rho)$ be a finite pseudometric space.

\paragraph{$\alpha$-weighted lifetime sums} Persistent homology (PH) is an important concept in the analysis of geometric complexes \cite{boissonat_geometrical_2018}. We focus on the persistent homology of degree $0$ ($\mathrm{PH}^0$). Informally, it consists in tracking the ``connected components'' of a finite set at different scales. We provide in Sections \ref{sec:ph-background} and \ref{sec:mst-background} exact definitions of this notion. For simplicity, we present here an equivalent formulation of the $\alpha$-weighted lifetime sums based on minimum spanning trees (MST) \cite{kozma2006minimal,schweinhart2020fractal}.

A tree over $X$ is a connected acyclic undirected graph (a set of edges) whose vertices are the points in $X$. Given an edge $e$ linking the points $a$ and $b$, we define its \emph{cost} as $|e| := \rho(a,b)$. An MST $\mathcal{T}$ on $X$ is a tree minimizing the \emph{total cost} $\sum_{e\in \mathcal{T}} |e|$. The $\alpha$-weighted lifetime sums $\boldsymbol{E}_\alpha^{\rho}$ are then written as: 
\begin{align*}
    \forall \alpha \geq 0, ~\boldsymbol{E}_\alpha^{\rho} (X) := \sum\nolimits_{e\in \mathcal{T}} |e|^\alpha.
\end{align*}
The celebrated \emph{persistent homology dimension} (PH-dim)~\cite{adams2020fractal}, of a compact pseudometric space $(A,\rho)$ is then defined as $\dim_{\mathrm{PH}}^\rho(A) = \inf_{\alpha \geq 0}\set{\exists C>0, \forall Y \subset A \text{ finite, } \boldsymbol{E}_\alpha (Y) \leq C }$. The PH-dim has been proven to be related to generalization error for different pseudometrics $\rho$ \cite{birdal2021intrinsic,dupuis2023generalization}.

\paragraph{Magnitude}
Magnitude is a recently introduced topological invariant \cite{leinster2013magnitude} which encodes many important invariants from geometric measure theory and integral geometry \cite{leinster2013magnitude,meckes2013positive,meckes2015magnitude}.
Magnitude can be interpreted as the effective number of distinct points in a space \cite{leinster2013magnitude}. For $s > 0$, we define a \emph{weighting} of the modified space $(X,s\rho)$ as a map $\beta : X \to \R$, such that $\forall a \in X, ~\sum_{b\in X}e^{-s\rho(a,b)} \beta(b) = 1$. Given such a weighting $\beta$, the magnitude function of $(X,s\rho)$ is defined as
\begin{align}
    \label{eq:magnitude-def-main}
    \Mag^{\rho}(sX) := \sum\nolimits_{a \in X} \beta(a).
\end{align}
The parameter $s > 0$ should be interpreted as a ``scale'' through which we look at the set $(X,\rho)$. We present in \cref{sec:mag-background} additional properties of this function. Note that magnitude is usually defined in metric spaces; we show in \cref{sec:magnitude-pseudo-metric-spaces} that we can seamlessly extend it to the pseudometric setting. Magnitude can be extended to (infinite) compact spaces \cite{leinster2013magnitude,meckes2013positive} and, as for PH, an intrinsic dimension, the \emph{magnitude dimension}, can be defined from magnitude by the formula $\dim_{\mathrm{Mag}}^\rho(A) = \lim_{s\to\infty} \frac{\log \Mag(sA)}{\log(s)}$. It is known that $\dim_{\mathrm{PH}}^\rho$ and $\dim_{\mathrm{Mag}}^\rho$ coincide for compact metric spaces \cite{meckes2015magnitude,schweinhart2020fractal,andreeva_metric_2023}. As a result, $\dim_{\mathrm{Mag}}^\rho$ has also been proposed as a topological generalization indicator \cite{andreeva_metric_2023}.

\paragraph{Total mutual information} Prior intrinsic dimension-based studies relied on ``mixing'' assumptions (\cite[Assumption H$5$]{simsekli2020hausdorff}, \cite[Assumption H$1$]{birdal2021intrinsic}, \cite{sefidgaran_data-dependent_2024,camuto2021fractal}) or various mutual information terms \cite{hodgkinson_generalization_2022,dupuis2023generalization} to take into account the statistical dependence between the data and the training trajectory. Recently, a new framework was proposed in \cite{dupuis2024setpacbayes} to unify these approaches by proving data-dependent uniform generalization bounds using simpler and smaller information-theoretic (IT) terms. By leveraging these methods, we derive new generalization bounds involving the same IT terms for all our introduced topological complexities. More precisely, they take the form of a \emph{total mutual information} between the data $S$ and the training trajectory $\wcal_S$. This term is denoted $\mathrm{I}_\infty(S,\wcal_S)$ and measures the dependence between $S$ and $\wcal$. We refer to Appendix \ref{sec:it-background} and \cite{hodgkinson_generalization_2022,van_erven_renyi_2014} for exact definitions.

\vspace{-2mm}
\section{Main Theoretical Results}\vspace{-2mm}
\label{sec:main-theoretical-results}
We now introduce our learning-theoretic setup (\cref{sec:mathematical-setup}) before delving into our main theoretical results in Sections \ref{sec:ph-bounds} and \ref{sec:magnitude-bounds}.

\subsection{Mathematical setup}
\label{sec:mathematical-setup}

\paragraph{Random trajectories} 
The primary goal of our theory is to prove uniform\footnote{By ``uniform'', we mean the worst error over a set; it should not be confused with uniform convergence.} generalization bounds over the training trajectory $\set{w_k, ~k\geq 0}$. We are mostly interested in the behavior near local minima of $\er$. To this end, we observe the trajectory between iterations $t_0$ and $T$, where $t_0 \in \mathds{N}$ is the number of iterations before reaching (near) a local minimum and $T \geq t_0$ is the total number of iterations.
Therefore, we consider the set $\wcalto[t_0]{T} := \set{w_i,~t_0 \leq i \leq T}$, which we call the \emph{random trajectory}.
Note that $\wcalto[t_0]{T}$ is a \emph{set}, \ie, it does not contain any information about the time-dependence. 
Moreover, our setup allows the random times $t_0$ and $T$ to depend on the data $S$ through the choice of a stopping criterion as opposed to being fixed predetermined times.

\paragraph{General Lipschitz conditions} The topological quantities described in \cref{sec:technical-background}, as well as the intrinsic dimensions introduced in prior works \cite{simsekli2020hausdorff,birdal2021intrinsic,andreeva_metric_2023,dupuis2023generalization,dupuis2024setpacbayes}, require a notion of distance between parameters (in $\Rd$) to be computed. In the case of fractal-based generalization bounds, two cases have already been considered: the Euclidean distance \cite{simsekli2020hausdorff} and the data-dependent pseudometric defined in \cite{dupuis2023generalization}. In our work, we emphasize that both examples are particular cases of a more general family of pseudometrics on the parameter space $\Rd$. In order to fully characterize this family of pseudometrics, we define the data-dependent map $\boldsymbol{L}_S : \Rd \longrightarrow \mathds{R}^n$ by $\boldsymbol{L}_S (w) = (\ell(w,z_1),\dots,\ell(w,z_n))$.
To fit into our framework, a pseudometric must satisfy the following general Lipschitz condition.
\begin{definition}[$(q,L,\rho)$-Lipschitz continuity]
\label{def:lipschitz-general-case}
For any pseudo-metric $\rho$ on $\Rd$ and $q\geq 1 $, we will say that $\ell$ is $(q,L,\rho)$-Lipschitz in $w$ when $\forall w,w'\in \Rd,~ \normof{\boldsymbol{L}_S(w) - \boldsymbol{L}_S(w')}_q \leq L n^{1/q} \rho(w,w')$.
\vspace{-1mm}
\end{definition}

A wide variety of distances have been proposed to compare the weights of two DNNs \cite{donnat2018tracking}. The above condition restricts our analysis to a family of pseudometrics containing the following examples.

\begin{example}[Data-dependent pseudometrics]
    \label{example:data-dep-pseudometrics}
    For any $p\geq 1$, we define the pseudometrics $\rho_S^{(p)}(w,w') := n^{-1/p}\normof{\boldsymbol{L}_S(w) - \boldsymbol{L}_S(w')}_p $. The case $\rho_S^{(1)}$ corresponds to the ``data-dependent pseudometric'' used in \cite{dupuis2023generalization}; we will denote it $\rho_S := \rho_S^{(1)}$.
\end{example}

\begin{example}[Euclidean distance]
    If $\ell(w,z)$ is $L$-Lipschitz continuous in $w$, \ie, $|\ell(w,z) -\ell(w',z)| \leq L \|w-w'\|$ for all $z$, then $\ell$ is $(p,L,\normof{\cdot}_2)$-Lipschitz continuous for every $p \geq 1$.
\end{example}

\paragraph{Assumptions}
Given an $(q,L,\rho)$-Lipschitz continuous (pseudo-)metric, our approach relies only on a single assumption of a bounded loss function. For the case of the pseudometric $\rho_S$ (Example \ref{example:data-dep-pseudometrics}), this assumption is already made in \cite{dupuis2023generalization,dupuis2024setpacbayes}.

\begin{assumption}
    \label{ass:boundedness}
    We assume that the loss $\ell$ is bounded in $[0,B]$, with $B>0$ a constant.\vspace{-1mm}
\end{assumption}

The boundedness of $\ell$ is classically assumed in the fractal / TDA literature \cite{dupuis2023generalization,hodgkinson_generalization_2022,dupuis2024setpacbayes}. In particular, this assumption is valid for the usual $0-1$ loss. In \cite{dupuis2023generalization}, it is shown that the proposed theory seems to be experimentally valid even for unbounded losses. Our experimental findings suggest that this observation also applies to our work.

\subsection{Persistent homology related generalization bounds}
\label{sec:ph-bounds}
In contrast to all existing fractal dimension-based bounds \cite{simsekli2020hausdorff,birdal2021intrinsic,camuto2021fractal,dupuis2023generalization},
we propose new generalization bounds that apply to practical discrete stochastic optimizers with a finite number of iterations. To this end, our key idea involves replacing the intrinsic dimension with intermediary quantities that are used to compute them numerically. Following \cite{birdal2021intrinsic,andreeva_metric_2023}, this points us towards the two quantities, $\boldsymbol{E}_\alpha$ and $\Mag$, defined in \cref{sec:technical-background}.
We are now ready to state the first generalization bound in terms of the $\alpha$-weighted lifetime sums, where we denote $\boldsymbol{E}_\alpha^\rho$ for $\boldsymbol{E}_\alpha^{\rho}(\wcalto[t_0]{T})$.

\begin{restatable}{theorem}{thmEAlpha}
    \label{thm:E-alpha-bound}
    Let $\rho$ be a pseudometric on $\Rd$.
    Supposes that Assumption \ref{ass:boundedness} holds and that $\ell$ is $(q,L, \rho)$-Lipschitz, for $q \geq 1$. Then, for all $\alpha \in [0, 1]$, with probability at least $1 - \zeta$, we have:
    \begin{align*}
        \sup_{t_0\leq i \leq T} G_S(w_i) \leq   2B\sqrtfrac{2 \log \left( 1 + K_{n,\alpha} \boldsymbol{E}_\alpha^{\rho} \right) }{n} + \frac{2B}{\sqrt{n}} + 3B \sqrtfrac{\mathrm{I}_\infty(S,\wcalto[t_0]{T}) + \log(1/\zeta)}{2n},
    \end{align*}
    with $K_{n,\alpha} := 2 \left( 2 L \sqrt{n} / B \right)^\alpha$.
\end{restatable}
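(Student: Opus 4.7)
The plan is to reduce the apparently large trajectory $\wcalto[\tau]{T}$ to a finite, data-dependent ``cover'' whose cardinality is controlled by $\boldsymbol{E}_\alpha^\rho$, and then apply a data-dependent uniform generalization inequality of the flavor developed in \cite{dupuis2024setpacbayes} to obtain the $\mathrm{I}_\infty$ dependence. The argument has three ingredients: (i) a Markov-type estimate on MST edge lengths, (ii) a PAC-Bayes style uniform bound over the resulting cluster representatives, and (iii) a Lipschitz transfer to the whole trajectory followed by an optimization in the scale parameter $\delta>0$.

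First, fix $\delta>0$ and let $\mathcal{T}$ be the MST of $(\wcalto[\tau]{T},\rho)$. Since each edge with $|e|>\delta$ contributes at least $\delta^\alpha$ to $\boldsymbol{E}_\alpha^\rho=\sum_{e\in\mathcal{T}}|e|^\alpha$, a Markov-type count gives $\#\{e\in\mathcal{T}:|e|>\delta\}\leq \boldsymbol{E}_\alpha^\rho/\delta^\alpha$. Cutting the long edges produces a forest whose connected components partition $\wcalto[\tau]{T}$ into at most $N_\delta\leq 1+\boldsymbol{E}_\alpha^\rho/\delta^\alpha$ clusters. Choosing a representative per cluster yields a (data-dependent) set $\hat{\wcal}_\delta\subset\wcalto[\tau]{T}$ with $\log|\hat{\wcal}_\delta|\leq \log \boldsymbol{E}_\alpha^\rho+\alpha\log(1/\delta)+O(1)$.

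Second, I would invoke the data-dependent uniform PAC-Bayes inequality from \cite{dupuis2024setpacbayes} applied to $\hat{\wcal}_\delta$: with probability at least $1-\zeta$,
$$\sup_{\hat w\in\hat{\wcal}_\delta} G_S(\hat w) \leq 2B\sqrtfrac{2\log|\hat{\wcal}_\delta|}{n}+\frac{B}{\sqrt{n}}+3B\sqrtfrac{\mathrm{I}_\infty(S,\hat{\wcal}_\delta)+\log(1/\zeta)}{2n}.$$
By monotonicity of the R\'enyi $\infty$-mutual information under deterministic postprocessing, $\mathrm{I}_\infty(S,\hat{\wcal}_\delta)\leq \mathrm{I}_\infty(S,\wcalto[\tau]{T})$, which matches the IT term in the theorem. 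The factor $2$ in front of $\log \boldsymbol{E}_\alpha^\rho$ in the final statement should absorb the symmetrization / ghost-sample doubling needed to link $\er$ and $\risk$ through the pseudometric $\rho$.

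Third, for each $w_i\in\wcalto[\tau]{T}$, the $(q,L,\rho)$-Lipschitz hypothesis converts the $\rho$-proximity between $w_i$ and its cluster representative $\hat w_i\in\hat{\wcal}_\delta$ into an additive $O(L\delta)$ cost on both $\er(w_i)-\er(\hat w_i)$ and its ghost-sample counterpart controlling $\risk(w_i)-\risk(\hat w_i)$. Choosing $\delta\asymp B/(L\sqrt n)$ balances the $\sqrt{\alpha\log(1/\delta)/n}$ complexity term against the $L\delta$ Lipschitz residual, producing the explicit $\alpha\log(8L\sqrt n/B)$ prefactor and the $2B/\sqrt n$ remainder appearing in the statement.

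The main obstacle is the Lipschitz transfer in (iii): cutting the MST at scale $\delta$ only guarantees that the points inside a cluster are connected by paths of short edges, not that they are pairwise $\rho$-close, so a naive one-shot Lipschitz step would lose a factor proportional to the cluster size. Overcoming this requires telescoping the difference $\ell(w_i,z)-\ell(\hat w_i,z)$ along the short MST edges inside the cluster and using that the Lipschitz hypothesis is stated in terms of an $L^q$-norm over the sample, which can be combined with the $\alpha$-weighting so that the intra-cluster cost is absorbed into $\boldsymbol{E}_\alpha^\rho$ itself rather than blowing up. I expect this telescoping, rather than a generic covering-number estimate, to be the place where the specific form of $\boldsymbol{E}_\alpha^\rho$ (and the restriction $\alpha\in[0,1]$) truly enters.
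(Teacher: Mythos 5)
Your high-level plan — reduce the trajectory to a finite cover whose size is controlled by $\boldsymbol{E}_\alpha^\rho$, feed it into the data-dependent PAC-Bayes machinery of \cite{dupuis2024setpacbayes}, transfer to the full trajectory by Lipschitz continuity, and optimize $\delta \asymp B/(L\sqrt n)$ — is the same skeleton the paper uses (the paper first proves a covering-number version, Theorem \ref{thm:starting-point}, then specializes). The difference, and the gap, is entirely in how $\boldsymbol{E}_\alpha^\rho$ controls the cover.

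Your Markov-type count on long MST edges gives at most $1+\boldsymbol{E}_\alpha^\rho/\delta^\alpha$ connected clusters after cutting edges of length $>\delta$, but — as you correctly flag — these clusters are not $\rho$-balls of radius $O(\delta)$: a cluster may contain a long chain of sub-$\delta$ edges and hence have arbitrarily large $\rho$-diameter. This means the number of clusters is \emph{not} a covering number, and the quantity you actually need for the Lipschitz/Massart step (namely a bound on $N_\delta^\rho(\wcalto[\tau]{T})$) does not follow. The telescoping fix you sketch does not close the gap: the per-point transfer error $|\ell(w_i,z)-\ell(\hat w_i,z)|$ scales with the path length from $w_i$ to its representative, which can be $\delta\cdot(\text{cluster size})$ rather than $O(\delta)$. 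Noting that $\sum_{|e|\le\delta}|e|\le\delta^{1-\alpha}\boldsymbol{E}_\alpha^\rho$ controls the total intra-cluster edge mass, but you need a worst-case (sup over $i$) bound, not an average one, so this does not rescue the argument; nor does the $L^q$ form of the Lipschitz assumption help, since Hölder already reduces it to the $\rho$-distance between the two points.

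The paper avoids this by proving a genuinely different and stronger inequality (Lemma \ref{lemma:e-alpha-packing}): for $\alpha\in[0,1]$, $\boldsymbol{E}_\alpha^\rho(X)\ge\tfrac12 P_\delta^\rho(X)\,\delta^\alpha$, where $P_\delta^\rho$ is the packing number. Combined with the elementary $N_{2\delta}^\rho\le P_\delta^\rho$ (Lemma \ref{lemma:cov-pack-pseudometric}), this yields a bona fide covering bound $N_\delta^\rho\le 2^{\alpha+1}\boldsymbol{E}_\alpha^\rho/\delta^\alpha$, after which Theorem \ref{thm:starting-point} and the choice $\delta=B/(L\sqrt n)$ finish the job. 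The proof of that lemma is a combinatorial argument on the MST: for each center $x_i$ of a maximal $\delta$-packing, one follows the MST path until it first leaves $\bar B_\delta(x_i)$, shows via an involution that these path-segments can be made essentially disjoint, and then uses the subadditivity $\sum_e|e|^\alpha\ge\bigl(\sum_e|e|\bigr)^\alpha$ — which is exactly where $\alpha\in[0,1]$ enters, not in any telescoping step. Your proposal is missing this lemma, and without it the reduction from $\boldsymbol{E}_\alpha^\rho$ to a covering number does not go through.
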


The term $\mathrm{I}_\infty(S,\wcalto[t_0]{T})$ is the total mutual information (MI) term that is defined in Sections \ref{sec:technical-background} and \ref{sec:it-background}. It measures the statistical dependence between the random set $\wcalto[t_0]{T}$ and the data $S\sim \datadist$. Such MI terms appear in previous works related to fractal-based generalization bounds \cite{simsekli2020hausdorff,camuto2021fractal,dupuis2023generalization,hodgkinson_generalization_2022}. Our proof technique, presented in \cref{sec:proof-starting-pointds}, makes use of a recently introduced PAC-Bayesian framework for random sets \cite{dupuis2024setpacbayes} to introduce this MI term. It is also shown in \cite{dupuis2024setpacbayes} that the MI term $\mathrm{I}_\infty(S,\wcalto[t_0]{T})$ is tighter than those appearing in the aforementioned works. 

We highlight the fact that Theorem \ref{thm:E-alpha-bound} is fundamentally different from the persistent homology dimension (PH-dim) based bounds studied in \cite{birdal2021intrinsic,dupuis2023generalization}. Indeed, while the growth of $\boldsymbol{E}_\alpha$ for increasing finite subsets of the trajectory are used in \cite{birdal2021intrinsic} to estimate the PH-dim, it does not provide any formal link between the generalization error and the value of $\boldsymbol{E}_\alpha$. Therefore, the above theorem could not be cast as a corollary of these previous studies.
Another important characteristic of the above theorem (as well as the results of \cref{sec:magnitude-bounds}) is to be non-asymptotic, \ie, it is true for every $n \in \mathds{N}^*$. This is an improvement over the fractal dimensions-based bounds presented in \cite{simsekli2020hausdorff,birdal2021intrinsic,dupuis2023generalization,dupuis2024setpacbayes}.

\vspace{-2mm}
\subsection{Positive magnitude (\texorpdfstring{$\PMag$}{Lg}) and related generalization bounds}
\label{sec:magnitude-bounds}

\begin{figure}[t]
\begin{subfigure}{.45\textwidth}
  \centering
  \includegraphics[width=.9\linewidth]{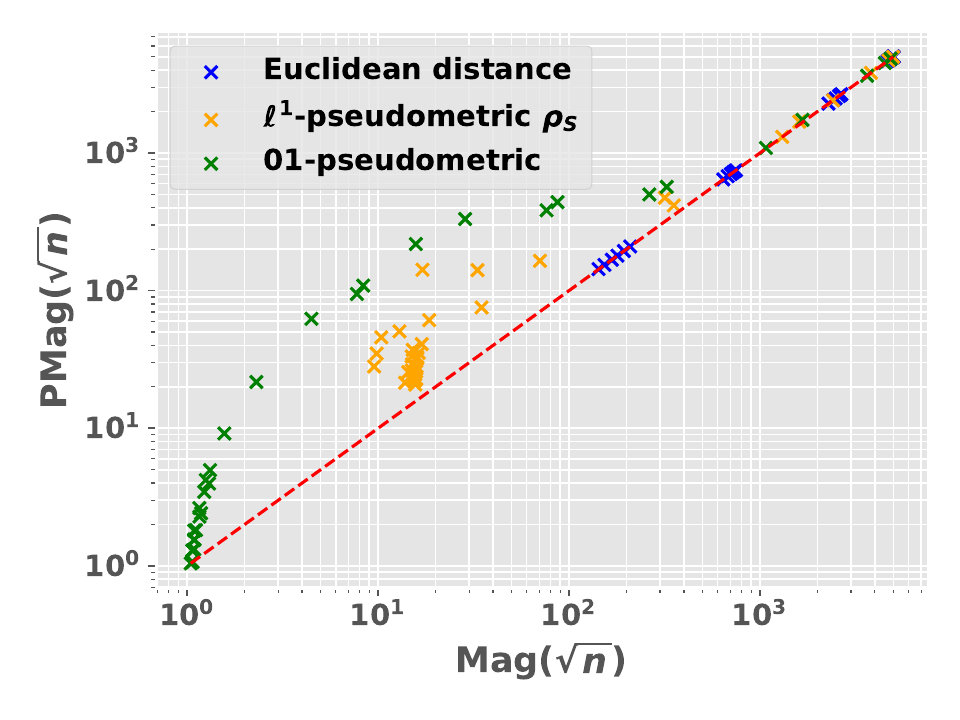}
  \caption{Comparison of $\Mag$ and $\PMag$.}
  \label{fig:mag-vs-posmag}
\end{subfigure}
\hfill
\begin{subfigure}{.45\textwidth}
  \centering
  \includegraphics[width=.9\linewidth]{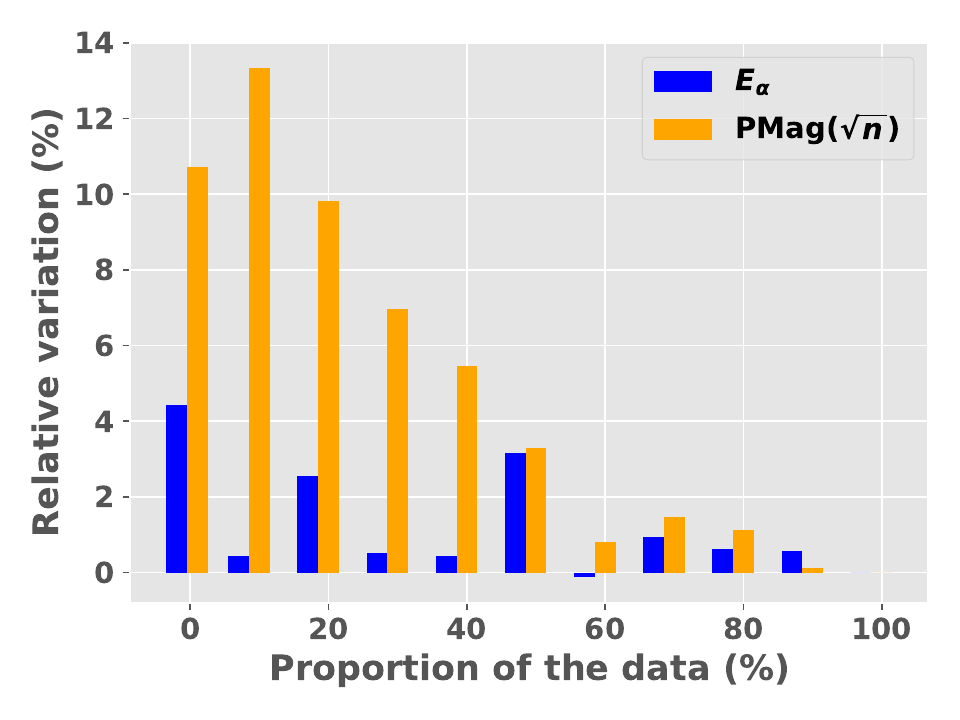}
  \caption{Relative variation of $\boldsymbol{E}_1$ and $\Mag$.}
  \label{fig:batch-experiment-l1}
\end{subfigure}%
\caption{\textit{Left:} Comparison of $\Mag$ and $\PMag$ (for $s = \sqrt{n}$), for different (pseudo)metrics (ViT on CIFAR$10$). \textit{Right:} relative variation of the quantities $\boldsymbol{E}_\alpha(\wcalto[t_0]{T})$ and $\Mag(\sqrt{n}\wcalto[t_0]{T})$, with respect to the proportion of the data used to estimated $\rho_S^{(1)}$ (ViT on CIFAR$10$).\vspace{-3mm}}
\label{fig:fig}
\end{figure}

Recent preliminary experimental results displayed a correlation between the generalization error of DNNs and magnitude \cite{andreeva_metric_2023}. To provide a theoretical justification for this behavior, it would be tempting to mimic the proof of Theorem \ref{thm:E-alpha-bound} and build on existing covering arguments. However, while lower bounds of magnitude in terms of covering numbers have been derived in \cite{meckes2015magnitude}, they appear to be impractical in our case. Another possibility would be to use the magnitude dimension bounds of \cite{andreeva_metric_2023}. Yet, this could not apply to our finite and discrete setting where the dimension is $0$.
Hence, we identify a new quantity, closely related to magnitude, while being more relevant to learning theory. 
With the notations of \cref{sec:technical-background}, we fix a finite metric space $(X,\rho)$ and a weighting $\beta_s : X \longrightarrow \mathds{R}$ of $(X,s\rho)$, where $s>0$ is a ``scale'' parameter. We define the positive magnitude as
\begin{align}
    \label{eq:positive-mag-main}
    \forall s > 0,~\PMag^\rho(sX) := \sum\nolimits_{a\in X} \beta_s (a)_+,
\end{align}
where $x_+ := \max(x,0)$ denotes the positive part of $x$. To avoid harming the readability of the paper, we refer to Appendix \ref{sec:pseudo-positive-mag} for the extension of $\PMag$ to the pseudometric case.
Based on a new theoretical approach, we prove that the positive magnitude can be used to upper bound the generalization error (see the proof in \cref{sec:proof-magnitude-bounds}). This leads to the following theorem:

\begin{restatable}{theorem}{thmMagnitudeBound}
    \label{thm:magnitude-bound-single-lambda}
    Let $\rho$ be a pseudometric such that $(\wcal,\lambda \rho)$ admits a positive magnitude (according to Definition \ref{def:positive-magnitude-pseudo}) for every $\lambda > 0$. We assume that $\ell$ is $(q,L,\rho)$-Lipschitz continuous with $q\geq 1$. Then, for any $s > 0$, we have with probability at least $1 - \zeta$ that
    \begin{align*}
        \sup_{t_0 \leq i \leq  T} G_S(w_i) \leq  \frac{2}{s} \log\PMag^\rho \left( Ls \wcalto[t_0]{T} \right) + s \frac{B^2}{n} + 3B \sqrtfrac{\mathrm{I}_\infty(S,\wcalto[t_0]{T}) + \log(1/\zeta)}{2n}.
    \end{align*}
\end{restatable}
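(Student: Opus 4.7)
The plan is to combine the set-valued PAC-Bayesian framework of \cite{dupuis2024setpacbayes} with a posterior and prior built directly from the magnitude weighting, so that $\log \PMag^{\rho}(Ls \wcalto[\tau]{T})$ arises as a KL-type complexity term and cancels cleanly against a Lipschitz transfer step.

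Write $\wcal := \wcalto[\tau]{T}$ and let $\beta$ be a weighting of $(\wcal, Ls \rho)$, so that $\sum_{b \in \wcal} \beta(b)\, e^{-Ls\rho(a,b)} = 1$ for every $a \in \wcal$. Since $\beta(b) \leq \beta(b)_+$, the quantity $Z_a := \sum_{b \in \wcal} \beta(b)_+ e^{-Ls\rho(a,b)}$ satisfies $Z_a \geq 1$. For each $a \in \wcal$, I introduce the localized posterior $Q_a(b) := \beta(b)_+ e^{-Ls\rho(a,b)} / Z_a$ on $\wcal$ and the global prior $P(b) := \beta(b)_+ / \PMag^{\rho}(Ls \wcal)$. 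A direct calculation then gives
\begin{align*}
\mathrm{KL}(Q_a \| P) \;=\; \log\!\bigl( \PMag^{\rho}(Ls\wcal) / Z_a \bigr) \;-\; Ls\, \mathbb{E}_{b \sim Q_a}[\rho(a,b)] \;\leq\; \log \PMag^{\rho}(Ls \wcal) \;-\; Ls\, \mathbb{E}_{b \sim Q_a}[\rho(a,b)].
\end{align*}

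Next I apply the set-PAC-Bayesian bound of \cite{dupuis2024setpacbayes} in Catoni form with a free parameter $\lambda > 0$ to obtain, with probability at least $1 - \zeta$ simultaneously in $a \in \wcal$,
\begin{align*}
\mathbb{E}_{b \sim Q_a}[G_S(b)] \;\leq\; \frac{\mathrm{KL}(Q_a \| P)}{\lambda} \;+\; \frac{\lambda B^2}{cn} \;+\; 3B \sqrt{\frac{\mathrm{I}_\infty(S, \wcal) + \log(1/\zeta)}{2n}},
\end{align*}
where $c$ is the constant coming from a Hoeffding-type sub-Gaussian moment bound on $G_S(w)$ at fixed $w$. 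The $(q, L, \rho)$-Lipschitz assumption together with H\"older's inequality makes $\er$ $L$-Lipschitz with respect to $\rho$; combined with a matching bound on $\risk$ (see the obstacle discussion below), this yields the transfer $G_S(a) \leq \mathbb{E}_{b \sim Q_a}[G_S(b)] + 2L\, \mathbb{E}_{b \sim Q_a}[\rho(a,b)]$. Substituting the KL estimate and choosing $\lambda = s/2$ causes the terms proportional to $\mathbb{E}_{b \sim Q_a}[\rho(a,b)]$ to cancel exactly, leaving $\tfrac{2}{s} \log \PMag^{\rho}(Ls \wcal) + s B^2/n$ (up to the absolute constant $c$). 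Taking the supremum over $a \in \wcal$ is free because the right-hand side does not depend on $a$.

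The main obstacle I anticipate is the Lipschitz control of the population risk $\risk$ against a data-dependent pseudometric: while $\er$ is automatically $L$-Lipschitz in $\rho$ by Definition \ref{def:lipschitz-general-case}, $\risk$ is not, in general, Lipschitz in $\rho_S$. The standard way to bypass this inside PAC-Bayes is to perform the change of measure directly on the exponentiated gap $e^{\lambda G_S}$ using only Hoeffding moment control at each fixed $w$, so that pointwise Lipschitzness is required only on the empirical side. A second subtlety is that both $Q_a$ and $P$ depend on $\wcal$ (and, when $\rho = \rho_S$, also on $S$); this dependence is precisely what the set-PAC-Bayes framework of \cite{dupuis2024setpacbayes} accommodates, at the cost of the total mutual information term $\mathrm{I}_\infty(S, \wcalto[\tau]{T})$ appearing in the final bound.
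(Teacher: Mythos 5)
Your proposal takes a genuinely different route from the paper, and the central idea --- building a posterior $Q_a(b) \propto \beta(b)_+ e^{-Ls\rho(a,b)}$ and a prior $P(b) \propto \beta(b)_+$ directly from the magnitude weighting so that $\log \PMag$ appears as a KL term --- is elegant and shows good intuition for where the complexity measure comes from. However, there is a genuine gap at the transfer step, and the obstacle you flag at the end is not a subtlety you can bypass as written: it is precisely what makes the argument break at the level of $G_S$.

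The transfer $G_S(a) \leq \E_{b \sim Q_a}[G_S(b)] + 2L\, \E_{b \sim Q_a}[\rho(a,b)]$ requires both $\er$ and $\risk$ to be $L$-Lipschitz in $\rho$. The Lipschitz hypothesis of Definition~\ref{def:lipschitz-general-case} controls only the vector of \emph{empirical} losses $\boldsymbol{L}_S$, hence only $\er$; the population risk $\risk(w) = \Eof[z\sim\mu_z]{\ell(w,z)}$ has no reason to be Lipschitz with respect to a data-dependent pseudometric such as $\rho_S^{(p)}$, which is the main case of interest. Your proposed escape --- ``perform the change of measure directly on $e^{\lambda G_S}$'' --- does not remove this issue, because the magnitude weighting only enters through $Q_a$, and to convert $\E_{b\sim Q_a}[G_S(b)]$ back into $G_S(a)$ (or into $\sup_a G_S(a)$) you still need to control the variation of the \emph{full} gap $G_S$ over the support of $Q_a$. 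The paper avoids the problem by never touching $\risk$ with the weighting: it first invokes \cite{dupuis2024setpacbayes} in its Rademacher-complexity form to get $\sup_{\tau \le i \le T} G_S(w_i) \leq 2\,\mathrm{Rad}(\ell,\wcalto[\tau]{T},S) + 3B\sqrtfrac{\mathrm{I}_\infty + \log(1/\zeta)}{2n}$, then bounds $\mathrm{Rad}$ via Jensen and Lemma~\ref{lemma:rad-mgf-magnitude}, whose proof applies the weighting identity at the Rademacher argmax $a_\epsilon$ and passes through the pointwise inequality $\rho_S \le L\rho$. All Lipschitz arguments there live entirely on the empirical side.

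The fix for your proposal is not cosmetic but structural: move the Donsker--Varadhan change of measure from $G_S$ to the Rademacher process. Concretely, with $a_\epsilon := \mathrm{argmax}_{a\in\wcal}\sum_i \epsilon_i \ell(a,z_i)$ and your $Q_{a_\epsilon},P$, the transfer $\sup_w \frac1n\sum_i \epsilon_i\ell(w,z_i) \le \E_{b\sim Q_{a_\epsilon}}[\frac1n\sum_i \epsilon_i\ell(b,z_i)] + L\,\E_{Q_{a_\epsilon}}[\rho(a_\epsilon,b)]$ is now legitimate (it uses only the empirical Lipschitz control), and combined with your KL estimate $\mathrm{KL}(Q_{a_\epsilon}\|P) \le \log\PMag - Ls\,\E_{Q_{a_\epsilon}}[\rho(a_\epsilon,b)]$ and Donsker--Varadhan with $\lambda = s$, the $\rho$-terms cancel and Hoeffding's lemma under $P$ gives exactly the conclusion of Lemma~\ref{lemma:rad-mgf-magnitude}, i.e.\ $\Eof[\epsilon]{\exp\{\frac{s}{n}\sup_w\sum_i\epsilon_i\ell(w,z_i)\}} \le e^{s^2B^2/(2n)}\PMag(Ls\,\wcal)$. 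In that sense your posterior/prior construction is a nice Gibbs-variational reinterpretation of the paper's MGF argument --- but applied where you apply it, the Lipschitz gap is real.
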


We now present a quick sketch of the proof of \Cref{thm:magnitude-bound-single-lambda}, in order to highlight its key elements.

\begin{proof}
    \textit{(Sketch)} Let $\wcal$ be a data-dependent random compact set (\eg, $\wcalto[t_0]{T}$). The proof is based on two technical elements. The first is a framework recently proposed in \cite{dupuis2024setpacbayes} for uniform generalization bounds for random sets. These results give that with high probability we have a bound of the form:
    \begin{align*}
        \sup_{w \in \wcal} G_S(w) \lesssim \mathrm{Rad}(\ell, \wcalto[t_0]{T}) + \sqrtfrac{\mathrm{I}_\infty(S,\wcalto[t_0]{T}) + \log(1/\zeta)}{n},
    \end{align*}
    where $\mathrm{Rad}(\ell, \wcalto[t_0]{T})$ is the celebrated Rademacher complexity \cite{barlettRademacherGaussianComplexities2002}, whose definition is given in \Cref{sec:rademacher-complexity}. The second technical element is a new link between the Rademacher complexity of a compact set and its positive magnitude. This result is discussed in \Cref{sec:proof-magnitude-bounds}.
\end{proof}

The IT term ($I_\infty$) in the above result is the same as in Theorem \ref{thm:E-alpha-bound}.
Given a fixed (finite) set $\wcal$ and a big enough $s$, we establish $\Mag(s\wcal) = \PMag(s\wcal)$. Moreover, we present in Figure \ref{fig:mag-vs-posmag} an empirical comparison of $\Mag$ and $\PMag$, showing a small and almost monotonic relation between both quantities. Therefore, Theorem \ref{thm:magnitude-bound-single-lambda} may be seen as the first theoretical justification of the empirical relationship between magnitude and the generalization error observed in \cite{andreeva_metric_2023}.

A natural choice for the scale $s$ would be $s\approx \sqrt{n}$, ensuring a convergence rate in $n^{-1/2}$. However, our empirical evaluations (see \cref{sec:empirical-analysis}, in particular, Table \ref{table:big-table}) revealed that small values of $s$ (we typically use $s = 10^{-2}$) can also provide good correlation with the generalization error. This could be explained by the fact that $\PMag(s\wcal) \to 1$ as $s \to 0$, \emph{i.e.}, the bound may not diverge when $s\to 0$. For our topological complexities to be computationally efficient, we focus our experiments on fixed values of $s$ (in $\set{\sqrt{n},10^{-2}}$). We further analyze the sensitivity of part of our experiments to the value of $s$ in \Cref{sec:sensitivity_to_s}.
We will omit the trajectory and denote $\Mag(s)$ and $\PMag(s)$.

\begin{remark}
    As it is explained in \Cref{sec:proof-magnitude-bounds}, a key element in the proof of \Cref{thm:magnitude-bound-single-lambda} is a newly discovered link between the celebrated Rademacher complexity \cite{barlettRademacherGaussianComplexities2002} and positive magnitude. This is an additional contribution of our work, which might be of independent interest. Moreover, this relation extends beyond the case of finite sets and applies in particular to compact trajectories (or hypothesis sets) $\wcal$. We refer the reader to \Cref{rq:extension-to-compact-sets} and \cref{lemma:extension-to-compact-sets} for more details.
\end{remark}

\vspace{-2mm}
\section{Computational Considerations}\vspace{-2mm}
\label{sec:computational_aspects}
We now detail the numerical estimation of the topological complexities mentioned above. 

\paragraph{Computation of $\boldsymbol{E}_\alpha$} We compute $\boldsymbol{E}_\alpha$ by using the \texttt{giotto-ph} library introduced in \cite{perez_giotto-ph_2021,bauer_ripser_2021}. This setup is inspired by PH frameworks used in \cite{birdal2021intrinsic,dupuis2023generalization}. This technique uses the equivalent formulation of $\boldsymbol{E}_\alpha$ in terms of PH (see \cref{sec:ph-background} for details). Theorem \ref{thm:E-alpha-bound}, and its proof (presented in \cref{sec:proof-ph-bounds}) suggest that the relevant value of $\alpha$ is $1$; similar to \cite{birdal2021intrinsic}, this is what we used in our experiments.

\newcolumntype{B}{>{\scriptsize}l}

\begin{figure}[t]
\begin{center}
\begin{footnotesize}
\begin{sc}
\tabcolsep=0.14cm
\scalebox{0.8}{
\begin{tabular}{@{} l| B B l l| B B l l| B B l l | B B l l @{}} 
\toprule
{Model-dataset} & \multicolumn{4}{c}{ViT-CIFAR$10$}  & \multicolumn{4}{c}{Swin-CIFAR$100$}& \multicolumn{4}{c}{GraphSage-MNIST} & \multicolumn{4}{c}{GatedGCN-MNIST} \\
\toprule
 {Compl.-Metric} & {\footnotesize$\boldsymbol{\psi}_{\text{lr}}$} & {\footnotesize$\boldsymbol{\psi}_{\text{bs}}$} & {$\boldsymbol{\Psi}$ } &  {$\tau$} & {\footnotesize$\boldsymbol{\psi}_{\text{lr}}$} & {\footnotesize$\boldsymbol{\psi}_{\text{bs}}$} & {$\boldsymbol{\Psi}$ } &  {$\tau$}& {\footnotesize$\boldsymbol{\psi}_{\text{lr}}$} & {\footnotesize$\boldsymbol{\psi}_{\text{bs}}$} & {$\boldsymbol{\Psi}$ } &  {$\tau$} & {\footnotesize$\boldsymbol{\psi}_{\text{lr}}$} & {\footnotesize$\boldsymbol{\psi}_{\text{bs}}$} & {$\boldsymbol{\Psi}$ } &  {$\tau$}\\ 
\midrule
$\dimph$ - $\rho_S$ \cite{dupuis2023generalization} & 0.93 & -0.67 & 0.13 &  0.61 & 0.69 & -0.47 & 0.11 &  0.50 &  -0.28 & -0.26 & -0.27 & -0.35 & 0.15 & 0.07 & 0.11 & -0.06\\
$\Mag(\sqrt{n})$ - $\rho_S$ & 0.68 & 0.62 & 0.65 &  0.64 & 0.56 & 0.47 & 0.51 &  0.53 & 0.69 & 0.71 & \textbf{0.70} & \textbf{0.79} & 0.85 & 0.97 & \textbf{0.91} & \textbf{0.88}\\
 $\Mag(0.01)$ - $\rho_S$ & 0.41 & 0.58 & 0.50 &  0.47 & 0.31 & 0.47 & 0.39 &  0.33 & 0.24 & 0.10 & 0.17 & 0.36 & 0.35 & 0.35 & 0.35 & 0.49 \\
$\PMag(\sqrt{n})$ - $\rho_S$ & 0.91 & 0.67 & \underline{0.79} & \underline{0.85} & 0.69 & 0.47 & \underline{0.58} &  \underline{0.62} & 0.59 & 0.46 & \underline{0.53} & 0.59 & 0.73 & 0.97 & \underline{0.85} & \underline{0.84} \\
$\PMag(0.01)$ - $\rho_S$ & 0.86 & 0.40 & 0.50 &  0.80 & 0.71 & 0.58 & \textbf{0.64} &  \textbf{0.68} & 0.24 & 0.10 & 0.17 & 0.36 & 0.35 & 0.35 & 0.35 & 0.49\\
 $\boldsymbol{E}_\alpha$ - $\rho_S$ & 0.95 & 0.67 & \textbf{0.81} &  \textbf{0.86} &  0.69 & 0.47 & \underline{0.58}& \underline{0.62} & 0.67 & 0.74 & \textbf{0.70} & \underline{0.77} & 0.48 & 0.97 & 0.72 & 0.74\\
 \midrule
 $\dimph$ - $\normof{\cdot}_2$ \cite{birdal2021intrinsic} & 0.93 & -0.67 & 0.13 &  0.61 & 0.69 & -0.47 & 0.34 &  0.51 & 0.32 & 0.81 & 0.56 & 0.51 & -0.12 & 0.70 & 0.29 & 0.33 \\
 $\Mag(\sqrt{n})$ - $\normof{\cdot}_2$ \cite{andreeva_metric_2023} & 0.95 & -0.59 & 0.13 &  \underline{0.73} &  0.71 & -0.57 & 0.07 &  \underline{0.53} & 0.75 & 0.77 & \textbf{0.76} & \textbf{0.61}  &0.77 & 0.76 & 0.77 & 0.52 \\
 $\Mag(0.01)$ - $\normof{\cdot}_2$ \cite{andreeva_metric_2023} & 0.95 & -0.60 & 0.17 &  0.72 & 0.69 & -0.44 & 0.12 &  \underline{0.53} & 0.75 & 0.74 & \underline{0.74} & \underline{0.60}  & 0.77 & 0.42 & 0.60 & \underline{0.47} \\
$\PMag(\sqrt{n})$ - $\normof{\cdot}_2$ & 0.95 & -0.59 & 0.18 &  \underline{0.73} & 0.71 & -0.57 & 0.07 &  \underline{0.53} & 0.75 & 0.74 & \underline{0.74} & \underline{0.60}  & 0.77 & 0.93  & \textbf{0.85}  & \textbf{0.54} \\
$\PMag(0.01)$ - $\normof{\cdot}_2$ & 0.55 & 0.71 & \textbf{0.63} &  0.58 & 0.64 & 0.51 & \underline{0.58} &  0.46 & 0.75 & -0.05 & 0.35 & 0.51 & 0.60 & -0.47 & 0.06 & 0.26\\
 $\boldsymbol{E}_\alpha$ - $\normof{\cdot}_2$ & 0.95 & -0.31 & \underline{0.32} &  \textbf{0.76} & 0.63 & 0.75 & \textbf{0.74}  & \textbf{0.74} & 0.75 & 0.74  & \underline{0.74} & \underline{0.60}  &0.77 & 0.93 & \underline{0.84} & \textbf{0.54}  \\
 \midrule
 $\dimph$ - $01$ \cite{dupuis2023generalization} & 0.95 & -0.20 & 0.37 &  0.72 & 0.64 & 0.04 & 0.34 &  0.51 & 0.0 & -0.13 & -0.07 & 0.0 & 0.14 & 0.00 & 0.07 & 0.00\\
  $\Mag(\sqrt{n})$ - $01$ & 0.95 & 0.67 & \textbf{0.81} &  \underline{0.88} & 0.69 & 0.47 & \textbf{0.58} &  \textbf{0.62} & 0.64 & 0.68 & \textbf{0.66} & \textbf{0.75} & 0.78 & 0.85 & \textbf{0.82} & \textbf{0.82} \\
 $\Mag(0.01)$ - $01$ & 0.84 & 0.33 & 0.59 &  0.75 & 0.61 & 0.27 & 0.44 &  0.50 & 0.13 & 0.11 & 0.12 & 0.26 & 0.10 & 0.10 & 0.10 & 0.25 \\
$\PMag(\sqrt{n})$ - $01$ & 0.95 & 0.64 & \underline{0.80} &  \textbf{0.89} & 0.69 & 0.47 & \textbf{0.58} &  \textbf{0.62} & 0.63 & 0.65 & \underline{0.64} & \underline{0.74} & 0.76 & 0.83 & \underline{0.79} & \underline{0.80}\\
$\PMag(0.01)$ - $01$ & 0.84 & 0.36 & 0.60 &  0.76 & 0.65 & 0.49 & \underline{0.57} &  0.54 & 0.13 & 0.11 & 0.12 & 0.26 & 0.10 & 0.10 & 0.10 & 0.25 \\
 $\boldsymbol{E}_\alpha$ - $01$ & 0.95 & 0.67 & \textbf{0.81} &  0.87 & 0.69 & 0.47 & \textbf{0.58} &  \underline{0.61} & 0.63 & 0.68 & \textbf{0.66}  & \underline{0.74} & 0.78 & 0.85 & \textbf{0.82} & \textbf{0.82} \\
\bottomrule
\end{tabular}
}
\end{sc}
\end{footnotesize}
\end{center}
\captionof{table}{Correlation coefficients associated with the different topological complexities.}
\label{table:big-table}
\vskip -0.2in
\end{figure}

\paragraph{Computation of $\Mag$ and $\PMag$} 
Different methods exist to evaluate magnitude \cite{limbeck2024metric}. We use the Krylov approximation method \cite{salim2021q}, which is based on pre-conditioned conjugate gradient iteration, implemented in the Python library \texttt{krypy.linsys.Cg} to solve for the magnitude weights. We then sum over the weights to compute $\Mag$, and sum over the positive weights to obtain $\PMag$.

\paragraph{Distance matrix estimation.} Given a finite set (\ie, a trajectory) $\wcal \subset \Rd$, the calculation of our topological complexities requires computing the \emph{distance matrix} $D_\rho := (\rho(w,w'))_{w,w'\in \wcal}$. For large DNNs, this may become challenging. Depending on $\rho$, we propose the following solutions.

\begin{itemize}[itemsep=0.1pt,topsep=0pt,leftmargin=.125in]
    \item Case $1$: If $\rho$ is the Euclidean distance, for large DNNs (in our case for the transformer experiments) storing the whole trajectory is challenging. In that case, we use sparse random projections inspired by the Johnson-Lindenstrauss lemma \cite{vershynin_high-dimensional_2020} to project the trajectories onto a lower-dimensional subspace. We use the implementation in \texttt{scikit-learn} \cite{scikit-learn} so that, with high probability, the relative variation of the distance matrices is at most $5\%$, see \cref{sec:johnson-lindenstrauss-appendix} for details.  

    \item Case $2$: If $\rho$ is of the form $\rho_S^{(q)}$ as in Example \ref{example:data-dep-pseudometrics}, then the computation of $D_\rho$ requires the evaluation of the model on the entire dataset at each iteration, which becomes intractable for large DNNs. In \cite[Figure $3$]{dupuis2023generalization}, the authors show that the PH-dim based on the pseudometric $\rho_S = \rho_S^{(1)}$ is very robust to a random subsampling of a training dataset, \ie when $\rho_S$ is replaced by $\rho_B$ with $B\subseteq S$ and $|B|/|S| \ll 1$.  Figure \ref{fig:batch-experiment-l1} shows that $\boldsymbol{E}_\alpha$ and positive magnitude are also robust to this subsampling. We mainly used $|B|/|S| = 10\%$. We refer the reader to \cref{sec:hyperparameters_details} for details.
\end{itemize}

\paragraph{Generalization error} Our theory, like many trajectory-based studies \cite{simsekli2020hausdorff,birdal2021intrinsic,dupuis2023generalization,andreeva_metric_2023} predicts upper bounds on the worst-case generalization error over the trajectory $\wcalto[t_0]{T}$. Yet, experiments in previous works mainly reported the error at the last iteration. To estimate the worst-case error in a computationally feasible way, we periodically evaluated the test risk between times $t_0$ and $T$ (every $100$ iterations) and reported (\texttt{worst test risk} - \texttt{final train risk}) as the error in our experiments. This is consistent as we start the trajectory $\wcalto[t_0]{T}$ from a weight $w_{t_0}$ already in a local minimum. 
Our main conclusions are still valid if the final generalization gap is used. This observation, which is to the best of our knowledge new, is briefly discussed in \cref{sec:final-vs-worst}.

\vspace{-3mm}
\section{Empirical Analysis}\vspace{-3mm}
\label{sec:empirical-analysis}

In what follows, we study our complexity measures on a variety of datasets and model architectures. We first explain the setup and the evaluation metrics before delving into the results and analysis.

\paragraph{Setup} 
Given a DNN and a dataset, we start from a pre-trained weight vector $w_{t_0}$, yielding high training accuracy on classification tasks. By varying the learning rate ($\eta$) and the batch size ($b$), we define a grid of $6\times 6$ hyperparameters. For each pair $(\eta,b)$, we compute the training trajectory $\wcalto[t_0]{T}$ for $5\times 10^3$ iterations.
Unless specified, we use the ADAM optimizer \cite{kingma2017adam}.
Based on $\wcalto[t_0]{T}$, we estimate distance matrices as described in \cref{sec:computational_aspects}. For the sake of clarity, we focus on $3$ relevant pseudometrics: (i) the Euclidean distance $\normof{\cdot}_2$ as in \cite{birdal2021intrinsic}, (ii) the data-dependent pseudometric $\rho_S$, used in \cite{dupuis2023generalization,andreeva_metric_2023}, and (iii)  the $01$-loss distance. For (ii), $\rho_S$ is computed based on the \emph{surrogate} loss used in training (\eg, the cross-entropy loss), while the reported generalization error is always based on \emph{accuracy gap} ($01$-loss), which is of interest in most applications (see \cref{sec:computational_aspects}). For the last one (iii) $\rho$ is defined as in Example \ref{example:data-dep-pseudometrics}, but with $\ell$ being the $01$-loss; we call it $01$-pseudometric and denote it by $01$ in the tables. This last setup matches exactly our theoretical requirements.

In terms of DNN architectures, we focus on practically relevant models, while previous studies mainly considered small networks \cite{birdal2021intrinsic,hodgkinson_generalization_2022,dupuis2023generalization,sefidgaran_data-dependent_2024}. We examine two different families of architectures. The first family consists of vision transformers (ViT \cite{touvron2021training}, CaiT \cite{touvron2021going}, Swin \cite{liu2021swin}, see Table \ref{table:transformers_architecture}), each evaluated on both the CIFAR$10$ \cite{krizhevsky_cifar-10_2014} and CIFAR$100$ \cite{cifar100} datasets. Moreover, we also tested our theory on graph neural networks (GNN) architectures, namely GatedGCN \cite{bresson2017residual} and GraphSage \cite{hamilton2017inductive} trained on the Super-pixel MNIST dataset \cite{dwivedi2023benchmarking}. To the best of our knowledge, this is the first time these kinds of topological complexities have been evaluated on transformers and GNNs. We ran the experiments on 18 NVIDIA 2080Ti (11 GB) GPUs.

\paragraph{Granulated Kendall's coefficients.} We assess the correlation between our complexities and the generalization error by using the granulated Kendall's coefficients (GKC) \cite{jiang_fantastic_2019}. While the classical Kendall's coefficients (KC) \cite{kendall_new_1938} (denoted $\tau$) measures the correlation between two quantities, it may fail to capture their causal relationship. Instead, one ``granulated'' coefficient is defined in \cite{jiang_fantastic_2019} for each hyperparameter (\ie, $\boldsymbol{\psi}_{\mathrm{LR}}$ for $\eta$ and $\boldsymbol{\psi}_{\mathrm{BS}}$ for $b$); it measures the correlation when only this hyperparameter is varying. In Table \ref{table:big-table}, we report $\tau$, $\boldsymbol{\psi}_{\mathrm{LR}}$ and $\boldsymbol{\psi}_{\mathrm{BS}}$, and the averaged GKC, $\boldsymbol{\Psi} := (\boldsymbol{\psi}_{\mathrm{LR}} + \boldsymbol{\psi}_{\mathrm{BS}}) / 2$, for several models, datasets and topological complexities. In Figures \ref{fig:psi_plot_l1} and \ref{fig:psi_plot_optimizers}, we represent our topological complexities in the plane $(\boldsymbol{\psi}_{\mathrm{BS}}, \boldsymbol{\psi}_{\mathrm{LR}})$; the red square indicates the region of best correlation (the coefficients are in $[-1,1]$, their sign is the sign of the correlation). It should be noted that a scaling of this constant $B$, coming from \Cref{ass:boundedness}, would not impact the correlation between generalization and topological complexities that is observed in our experiments.

\vspace{-1.5mm}
\subsection{Analysis}
\vspace{-1.5mm} As explained above, we focus our main experiments on the quantities $\boldsymbol{E}_1$, $\Mag(\sqrt{n})$, $\PMag(\sqrt{n})$, $\Mag({10^{-2}})$ and $\PMag({10^{-2}})$, each computed for the $3$ pseudometrics discussed above ($\normof{\cdot}_2$, $\rho_S$, $01$). In the interest of comparison, we also compute the PH-dim (proposed in \cite{birdal2021intrinsic} for the $\normof{\cdot}_2$ and in \cite{dupuis2023generalization} for $\rho_S$), which is thus tested for the first time on transformers and GNNs. 

\begin{figure}[t]
    \vskip -0.2in
    \centering
    \begin{subfigure}[t]{0.35\linewidth}
        \centering
        \includegraphics[trim={0 0 0cm 0}, width=\linewidth]{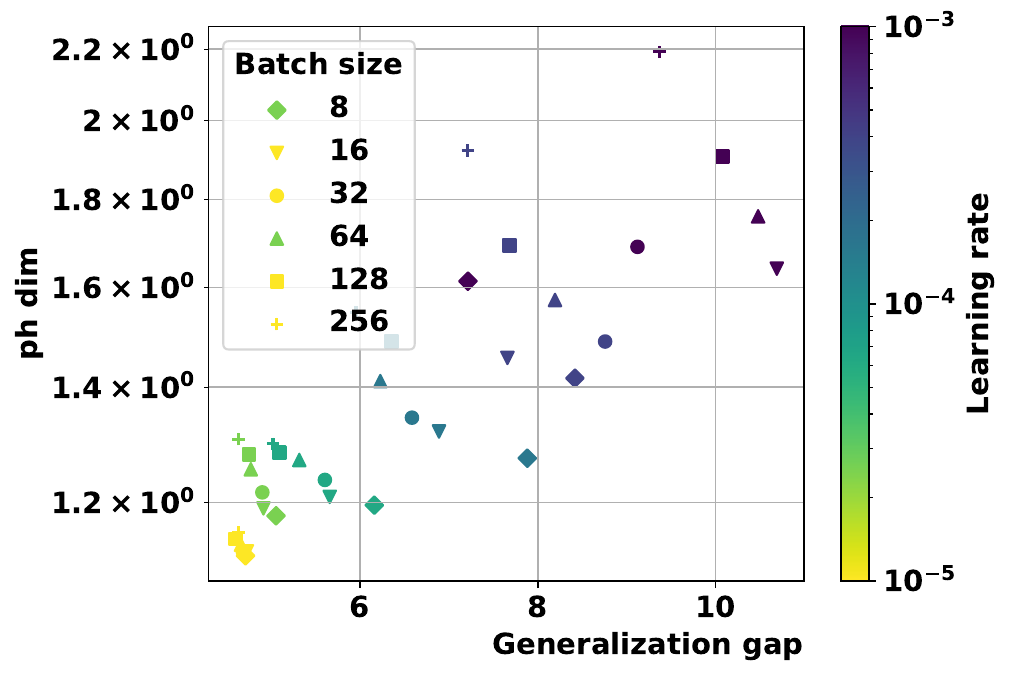}
    \end{subfigure}\hfill
    \begin{subfigure}[t]{0.32\textwidth}
        \centering
        \includegraphics[trim={0 0 0cm 0},width=\linewidth]{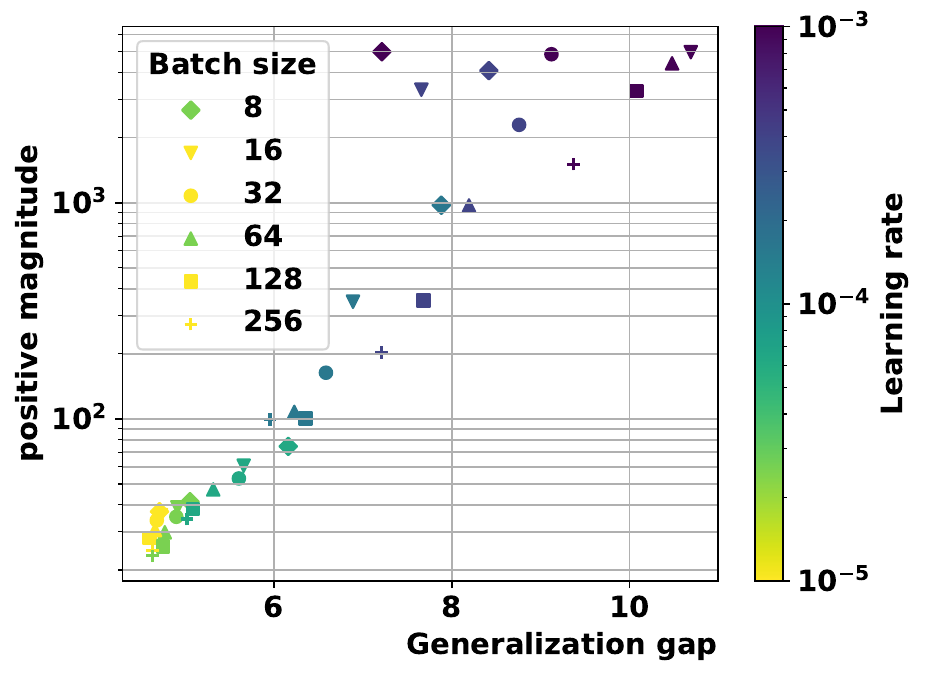}
    \end{subfigure}\hfill
    \begin{subfigure}[t]{0.32\textwidth}
        \centering
        \includegraphics[width=\linewidth]{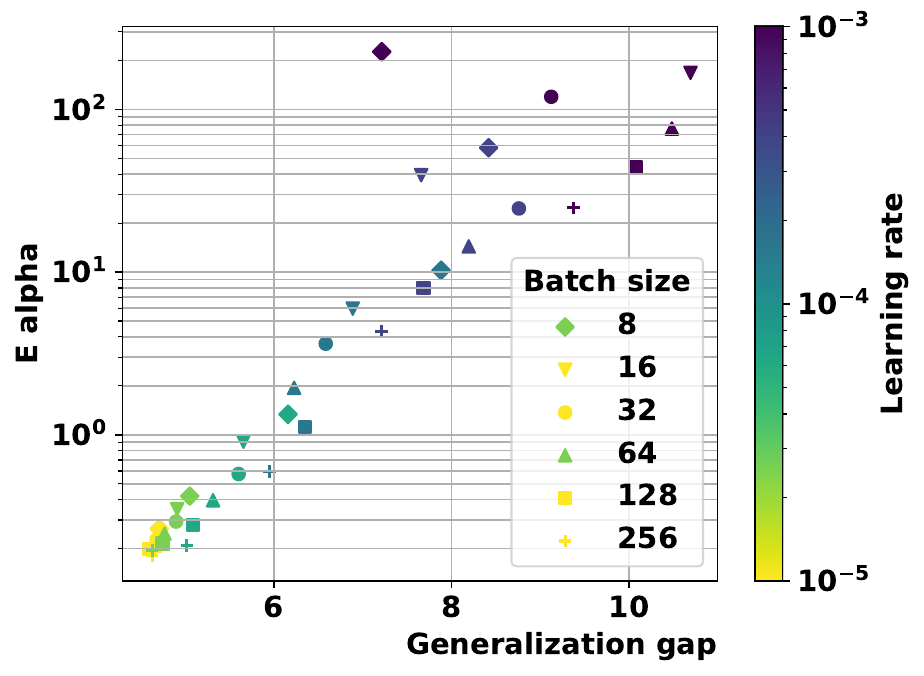}
    \end{subfigure}
    \caption{$\rho_S$-based complexity measures vs. generalization gap for a ViT trained on CIFAR$10$: $\dimph$ (\textit{left}), $\PMag(\sqrt{n})$ (\textit{middle}), and $\boldsymbol{E}_1$ (\textit{right}).}
    \label{fig:vit-cifar10-several-main}
    \vskip -0.2in
\end{figure}

\paragraph{Performance on vision transformers} We see in Table \ref{table:big-table} and Figure \ref{fig:vit-cifar10-several-main} (additional graphical representation is given in \Cref{sec:additional-graphical-representations}) that our proposed topological complexities consistently outperform the PH dimensions across several vision transformer models and datasets. This suggests that PH-dim, previously tested only on small architectures, is less scalable to industry-standards models with more parameters. Figure \ref{fig:psi_plot_l1}, including all (\texttt{model}, \texttt{dataset}) pairs for the pseudometric $\rho_S$, reveals important observations. First, we notice that the GKC of our topological complexities are both positive and close to $1$, indicating that they are indeed good measures of generalization. We note that for most models and datasets, $\dimph$ has a small or negative $\boldsymbol{\psi}_{\mathrm{BS}}$, indicating that it has less ability to explain generalization for varying batch-sizes. 
As it was observed in \cite{dupuis2023generalization} for PH-dim, our complexities computed from the pseudometric $\rho_S$ correlate very well with the generalization gap while this gap is based on the $01$ loss.

\paragraph{Performance on GNNs} An important aspect of our framework is the ability to seamlessly encapsulate different data domains. In particular, the possibility of using different pseudometrics can help define topological complexities that naturally take into account the internal symmetries of GNNs, without any model-specific analysis \cite{kiani2024hardness,behboodi2022pac}. The results of Table \ref{table:big-table} and Figure \ref{fig:psi_plot_l1} confirm that our proposed topological complexities outperform PH-dim and correlate strongly with the generalization error for GNNs. Additionally, it may be observed that $\Mag(\sqrt{n})$ performs significantly well for GNNs, and in particular better than $\PMag(\sqrt{n})$. This points us towards the idea that further theory would be desirable to formally relate magnitude to the generalization error in that case\footnote{We shall underline that, while $\Mag$ with the Euclidean distance was empirically proposed as a complexity measure in \cite{andreeva_metric_2023}, a theoretical justification for $\Mag$ results in Table \ref{table:big-table} is still missing for moderate values of $s$.}.

\paragraph{Comparison of the topological complexities} In Table \ref{table:big-table} and Figures \ref{fig:vit-cifar10-several-main} and \ref{fig:psi_plot_l1}, it can be seen that $\boldsymbol{E}_1$ and $\PMag(\sqrt{n})$ perform equally well for the image and graph experiments across multiple datasets, models, and data domains. We see in Table \ref{table:big-table} that most topological complexities perform better with data-dependent metrics (\ie, $\rho_S$ and $01$) than with the Euclidean distance, for transformer-based experiments. This extends results obtained for PH-dim in \cite{dupuis2023generalization}, for smaller architectures. However, the poor performance of Euclidean-based complexities may also be partially caused by the projections applied to the Euclidean distance matrices to make them memory-wise computable (see \cref{sec:computational_aspects}). This is a remaining limitation of our algorithms. On the other hand, the $01$ and $\rho_S$ data-dependent pseudometrics seem to yield similar performance in all experiments. 

\paragraph{Ablations} 
In Figure \ref{fig:psi_plot_optimizers}, we reveal that changing the optimizer has little effect on the observed correlation (for the same model and dataset). Interestingly, we note that the PH-dim, computed with pseudometric $\rho_S$ and obtained from the SGD trajectories, exhibits high GKCs. 
This observation agrees with the results in \cite{dupuis2023generalization}.
Figure \ref{fig:vit-cifar10-several-main} further displays the typical behavior of several topological complexities for ViT and CIFAR$10$. In addition to the correlation of our proposed complexities being stronger than for the PH-dim, we observe that $\boldsymbol{E}_\alpha$ and $\PMag(\sqrt{n})$ seem to better correlate with the generalization gap for small learning rates. 
Finally, it is consistently observed in Table \ref{table:big-table} and Figures \ref{fig:psi_plot_l1} and \ref{fig:psi_plot_optimizers} that using a relatively high value of the (positive) magnitude scale ($s=\sqrt{n}$) yields better correlations than small values ($s=10^{-2}$). However, both cases still provide satisfying correlation, comforting the robustness of magnitude as a generalization indicator.

\begin{figure}[t]
    \centering
    \begin{subfigure}[t]{0.5\linewidth}
        \centering
        \includegraphics[trim={0 0 0 1cm}, width=\linewidth]{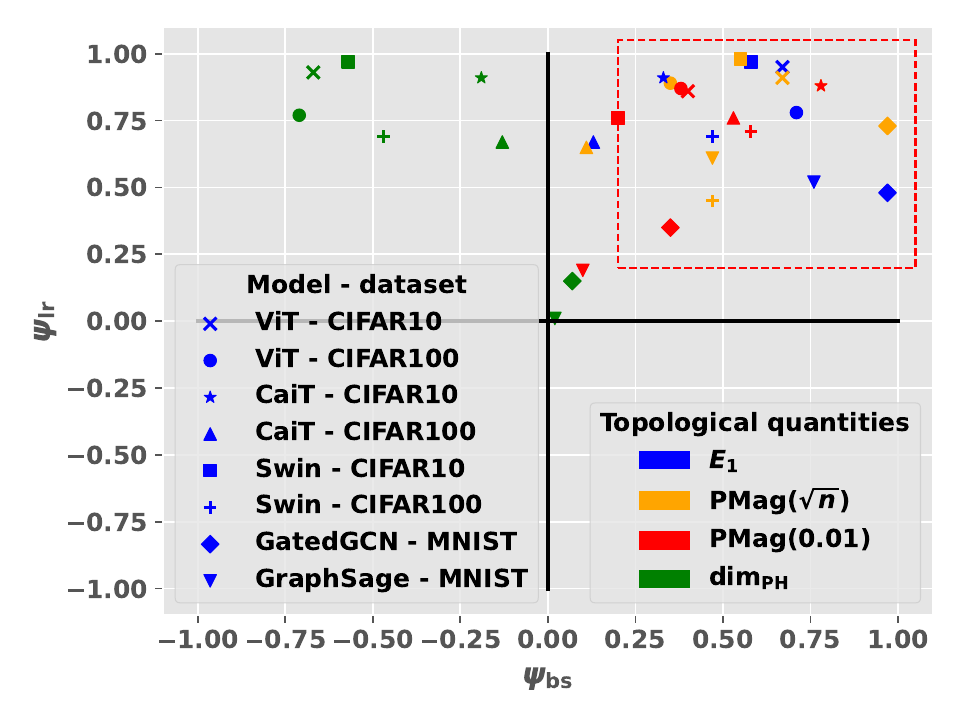}\vspace{-3pt}
        \caption{$\rho_S^{(1)}$ pseudometric}  
        \label{fig:psi_plot_l1}
    \end{subfigure}%
    \hfill
    \begin{subfigure}[t]{0.5\textwidth}
        \centering
        \includegraphics[trim={0 0 0 1cm}, width=\linewidth]{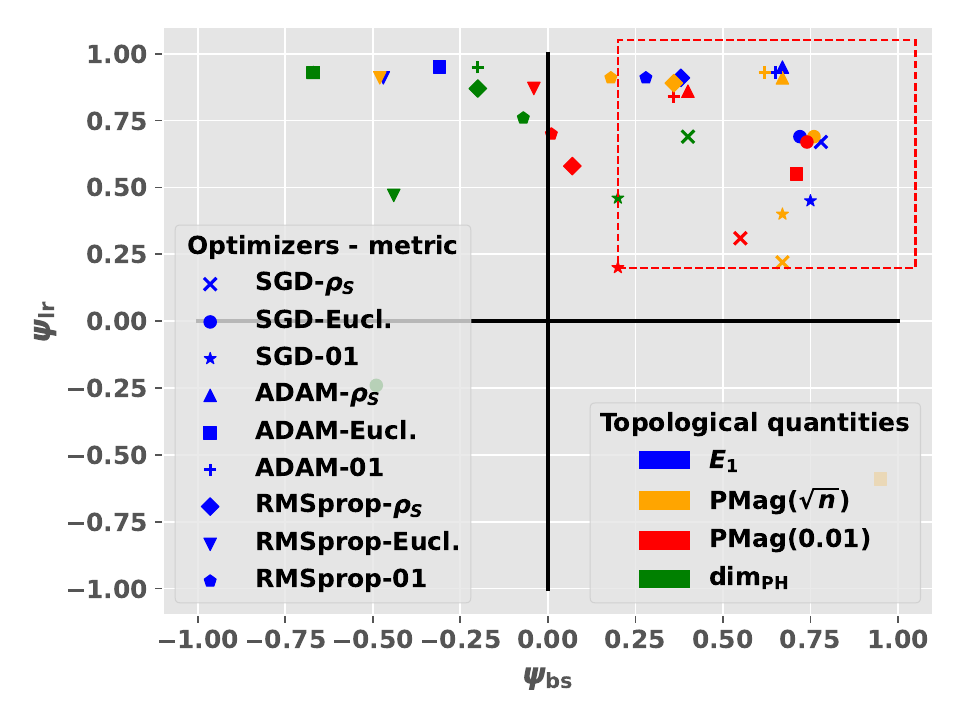}\vspace{-3pt}
        \caption{Comparison of optimizers for ViT on CIFAR$10$.}    \label{fig:psi_plot_optimizers}
    \end{subfigure}
    \vspace{-3pt}
    \caption{Granulated Kendall coefficients for several models, datasets and topological quantities. Note that our framework is directly applicable to graph networks.\vspace{-2.5mm}}
    \label{fig:psi_plots}
    \vskip -0.15in
\end{figure}

Due to limited space, we present all the correlation coefficient of one transformer model ViT for CIFAR$10$ and Swin for CIFAR$100$ in Table \ref{table:big-table} as illustrative examples for each dataset. The remaining results appear in the Appendix, Tables \ref{table:cait_cifar10}, \ref{table:swin_cifar10}, \ref{table:vit_cifar100} and \ref{table:cait_cifar100}, and they all follow a similar trend. Further empirical results and illustrations of this behavior are provided in \cref{sec:additional-experiments}.

\vspace{-3mm}
\section{Conclusion}
\vspace{-3mm}
In this paper, we proved novel generalization bounds based on several topological complexities coming from TDA, namely $\alpha$-weighted lifetime sums and a new variant of metric space magnitude, which we called positive magnitude.
 Compared to previous studies, we require fewer assumptions and operate in a discrete setting in which our proposed quantities are fully computable. 
Our algorithms are flexible enough to be seamlessly integrated with diverse data domains and tasks. These advantages of our framework allowed us to create a computationally cheap experimental setup, as close as possible to the theoretical setup. 
We thus provided a comprehensive suite of experiments with several industry-relevant architectures across vision transformers and graph neural networks, which have not been explored yet in this literature.
We show that our proposed topological complexities correlate well with the generalization error, outperforming the previously studied intrinsic dimensions.

\paragraph{Limitations \& future work}
The main limitation of our theory is the lack of understanding of the IT terms, while they are still smaller than most prior works. The presence of this term renders our bounds not fully computable in practice. Indeed, we are not aware of existing techniques to evaluate the MI between random sets and the dimensionality of $\wcal_{t_0 \to T}$ (billions of parameters) could make a direct computation intractable. Nevertheless, our work focuses on improving the topological part of the existing bounds. Our main goal is to demonstrate a correlation with the generalization error rather than directly quantifying the generalization. Our experiments show that the introduced complexities are important and meaningful in addition to being amplified in the first part of the bound, as the dependence is explicit.
Moreover, a better understanding of the behavior of positive magnitude for small values of the scale factor $s$ would be a necessary improvement. Regarding our experiments, a refinement of the estimation techniques of the topological complexities would be beneficial.
Despite experimenting with practically relevant architectures, our future works also include scaling up our empirical analysis to include larger models and datasets, in particular large language models, which are still beyond the scope of this study.

\section*{Acknowledgments} 

We would like to thank the reviewers of Neurips 2024, who helped to significantly improve this paper.
R.A. is supported by the United Kingdom Research and Innovation (grant EP/S02431X/1), UKRI Centre
for Doctoral Training in Biomedical AI at the University of Edinburgh, School of Informatics. U.\c{S}. is partially supported by the French government under management of
Agence Nationale de la Recherche as part of the ``Investissements d'avenir'' program, reference
ANR-19-P3IA-0001 (PRAIRIE 3IA Institute). B.D. and U.\c{S}. are partially supported by the European Research Council Starting Grant DYNASTY – 101039676. T.B. is partially supported by the Royal Society Research Grant RG\textbackslash{}R1\textbackslash{}241402.
TB was supported by a UKRI Future Leaders Fellowship [grant number MR/Y018818/1].

\paragraph{Broader impact} Certifying generalization is key for safe and trusted AI systems, hence we believe that our study may have a positive societal impact.

\bibliographystyle{plain}
\bibliography{bibliography}

\newpage

\appendix

\section*{Appendix}
We now provide additional technical details and proofs that are omitted from the paper, followed by experimental evidence complementing our main paper. 
We organize the appendix as follows:

\begin{itemize}[noitemsep]
    \item \cref{sec:additional-background} presents additional technical background related to information theory, Rademacher complexity, and the various topological quantities that appear in our work.
    \item In \cref{sec:omitted-proofs}, we present the omitted proofs of all our theoretical results, as well as a few additional theoretical contributions.
    \item In \cref{sec:additional-experimental-details}, we show the experimental details needed to reproduce our experiments.
    \item Finally, \cref{sec:additional-experiments} is dedicated to additional empirical results.
\end{itemize}

\section{Additional technical background}
\label{sec:additional-background}

\subsection{Information-theoretic quantities}
\label{sec:it-background}

The following definition is a precise definition of the total mutual information term that appears in our main theoretical results. The reader may consult \cite{van_erven_renyi_2014,hodgkinson_generalization_2022,dupuis2024setpacbayes} for further information on this notion.

\begin{definition}[Total mutual information]
    \label{def:total-nutual-info}
    Let $X$ and $Y$ be two random elements defined on a probability space $(\Omega,\mathcal{F},\mathds{P})$ (note that the codomains of $X$ and $Y$ may be distinct). We define the total mutual information between $X$ and $Y$ by the following formula:
    \begin{align*}
        \mathrm{I}_\infty (X,Y) = \log \left( \sup_{A} \frac{\mathds{P}_{X,Y} (A)}{\mathds{P}_X \otimes \mathds{P}_Y (A)} \right).
    \end{align*}
\end{definition}

Such a term has already been used in the fractal-based generalization literature \cite{hodgkinson_generalization_2022,dupuis2024setpacbayes}. Other works used intricate variants of this total mutual information term \cite{dupuis2023generalization,birdal2021intrinsic,andreeva_metric_2023,camuto2021fractal}. We stress the fact that our proposed bounds are simpler.

\subsection{Rademacher complexity}
\label{sec:rademacher-complexity}

Rademacher complexity \cite{bartlett2002rademacher,shalevschwartz2014understanding} is a central tool in learning theory. As part of our theory uses this notion, we now provide its definition and introduce some notation.

\begin{definition}[Rademacher complexity on a hypothesis set]
    Let us fix a dataset $S \in \zcal^n$, a set $\wcal \subset \Rd$ and $\epsilon = (\epsilon_1,\dots,\epsilon_n)$ some iid Rademacher random variable.\footnote{A Rademacher random variable is defined by $\mathds{P}(\epsilon_i = 1) = \mathds{P}(\epsilon_i=-1) = 1/2$.} Whenever it is defined, we will call Rademacher complexity of $\ell$ over $\wcal$ the following quantity:
    \begin{align*}
        \mathrm{Rad}(\ell, \wcal, S) := \frac1{n} \Eof[\epsilon]{\sup_{w \in \wcal} \sum_{i=1}^n \epsilon_i \ell(w,z_i)}.
    \end{align*}
\end{definition}

Rademacher complexity has already been used in \cite[Theorem $3.4$]{dupuis2023generalization} to relate the generalization error to the so-called data-dependent fractal dimension. Part of our theory is based on a recent extension of such arguments in the data-dependent setting \cite{dupuis2024setpacbayes}.

\subsection{Persistent homology}
\label{sec:ph-background}

The goal of this short subsection is to present a few notions of persistent homology, which is necessary for a better understanding of our contributions. 

Persistent homology \cite{edelsbrunnerComputationalTopologyIntroduction2010,carlssonTopologicalPatternRecognition2014,boissonat_geometrical_2018} is an important subfield of TDA, capable of providing myriad of new insights for analysing data by extracting meaningful topological features. It has demonstrated its usefulness in a very diverse set of applications from biology \cite{nicolau2011topology,emmett2016multiscale}, to materials science \cite{hiraoka2016hierarchical}, finance \cite{leibon2008topological}, robotics \cite{bhattacharya2015persistent}, sensor networks \cite{de2007coverage} and a lot more \cite{otter2017roadmap}. The types of datasets which are amenable to this kind of analysis are finite metric spaces (known as point-cloud datasets), images, networks and also level-sets of functions. More recently, several studies have brought to light empirical links between persistent homology and DNNs \cite{rieck2018neural,corneanu2019does,perez2021characterizing}. In particular, recent studies have related the worst-case generalization error to several concept of intrinsic dimensions defined through persistent homology \cite{birdal2021intrinsic,dupuis2023generalization}. As mentioned in the introduction, our goal is to extend these last studies to more practical settings.

In general, persistent homology is defined for any degree $k \in \mathds{N}$ (denoted $\mathrm{PH}^k$). Intuitively, $\mathrm{PH}^k$ keeps track of the number of ``holes of dimension $k$'' in a set when looked at different scales. However, in our work and as in \cite{birdal2021intrinsic,dupuis2023generalization}, we only use $\mathrm{PH}^0$, whose presentation is simpler. In this section, to avoid harming the readability of the paper, we only present a high-level introduction to $\mathrm{PH}^0$ that is sufficient to understand our work. The interested reader may consult \cite{boissonat_geometrical_2018,chazal2021introduction,zomorodian2012topological} for a more in-depth introduction to persistent homology.

We first start by introducing briefly homology, which is a classical concept in algebraic topology. We only introduce the most essential concepts for understanding persistent homology. For a more detailed introduction, please consult \cite{hatcher2002algebraic}.
\begin{definition}
    A simplicial complex is a set $K$ of finite sets closed under the subset relation: if $\sigma \in K$ and $\tau \subset \sigma$, then $\tau \in K$.
\end{definition}

In the above definition, $\sigma$ is a simplex (plural simplices) and $\tau$ is a face of $\sigma$, its coface.

\begin{definition}
    An abstract simplicial complex $\mathcal{K}$ is a finite collection of simplices where a face of any simplex $\sigma \in \mathcal{K}$ is also a simplex in $\mathcal{K}$.
\end{definition}

\begin{definition}
    A simplicial $k$-chain is the formal sum of $k$-simplices,
    \begin{align}
        \sum_{i=1}^{N} = r_i \sigma_i,
    \end{align}
    where each $r_i \in R$, where $R$ is a fixed commutative ring with additive identity $0$ and multiplicative identity $1$, and $\sigma_i \in \mathcal{K}$.
\end{definition}

$\mathcal{K}_k$ is the set of simplicial $k$-chains with addition over $R$, which is an $R$-module. Then, the set of all $k$-simplices of the complex $\mathcal{K}$ is a set of generators for $\mathcal{K}_k$. For each generator $\sigma$, the boundary of $\sigma$ is the sum of all $(k-1)$-faces of $\sigma$.

\begin{definition}
    The \textit{boundary} of a $k$-simplex $\sigma = (x_0,\dots,x_k)$ is the $(k-1)$-chain
    \begin{align}
    \partial_{k}(\sigma) = \sum_{i=0}^{k}(-1)^{i}(x_0,\dots,\hat{x_i},\dots,x_k),
    \end{align}
    where $(x_0,\dots,\hat{x_i},\dots,x_k)$ is the $(k-1)$-simplex spanned by all vertices without $x_i$.
\end{definition}

It is common that the coefficients for homology are considered to be restricted to $\mathbb{Z}_2$, which is the field with 2 elements, $0$ and $1$, where $1 + 1 = 0$. However, the theory extends to homoogy with coefficeints in any field (and since every field is a ring, the definitions in terms of rings are more general).

\begin{definition}
    A chain complex is a sequence of abelian groups $A_k$ with homomorphisms (called boundary maps) $\partial_k: A_k \to A_{k-1}$, such that $\partial_{k-1} \circ \partial_{k} = 0$ for all $k$.
\end{definition}

We should note that when considering coefficients in $\mathbb{Z}_2$, a $k$-chain can be seen as a finite collection of $k$-simplices.

Introduce topological invariants: simplicial homology groups and Betti numbers.

\begin{definition}[Simplicial Homology group]
    The $n$-th (simplicial) homology group of a finite simplicial complex $\mathcal{K}$ is
    \begin{align}
        H_n = \ker{\partial_n}/\text{im}\partial_{n+1},
    \end{align}
    where $\ker$ and im are the kernel and image respectively of the boundary operator.
\end{definition}

In order to define the simplicial complexes of use in TDA, we need to first understand what a nerve is.

\begin{definition}[Nerve]
    A simplicial complex associated to a collection of sets is called a nerve. The sets are the vertices of the complex, and a simplex belongs to a complex iff its vertices have a non-empty intersection, $\text{Nrv} = \{ \alpha \subseteq S \mid \cap_{A \in \alpha} A \neq \emptyset \}$.
\end{definition}

\begin{definition}[\v{C}ech complex]
    The \v{C}ech complex of $X$ for radius $r$ is $\text{\v{C}ech}_r(X) = \text{Nrv}\{ B(x,r) \mid x \in X\}$, where $B(x,r)$ is the closed ball of radius $r \geq 0$, centered at $x$.
\end{definition}
In other words, the \v{C}ech complex is the nerve of the ball neighbourhoods of a set of points $X \subseteq \mathbb{R}^n$. The \v{C}ech complex faithfully captures the topology of the space, but it is not computed in practice due to its high computational cost. Instead, a different complex called \textit{Vietoris-Rips} (VR) is used due to ease of construction for higher dimensions. It can be shown that the VR complex is not always homotopy equivalent to the \v{C}ech complex, and therefore it can be seen as an approximation.

We first need to introduce the notion of a clique complex to explain what the VR is.

\begin{definition}[Clique complex]
    The \textit{clique complex} for a graph $G = (V,E)$ consists of all cliques of $G$, which are all simplices $\alpha \subseteq V$ for which $E$ contains all edges of $\alpha$.
\end{definition}

Now we have explicitly states all the necessary components in order to define the main complex used in TDA, the \textit{Vietoris-Rips complex}.

\begin{definition}[Vietoris-Rips complex]
    The \textit{Vietoris-Rips complex} of $X$ for radius $r$ is the clique complex of the $1$-skeleton of the \v{C}ech complex of $X$ and $r$, $\text{Rips}_r(X) = \{ \alpha \in X \mid ||u-v|| \leq 2r\}$ for all $u, v \in \alpha$. 
\end{definition}

Now that we have defined the most important complex in TDA, we proceed to explain how we can derive important topological information at multiple scales by introducing the concept of a filtration. 

\begin{definition}
    Given a simplicial complex $\mathcal{K}$, a filtration is a totally ordered set of subcomplexes $\mathcal{K}^i$ of $\mathcal{K}$, indexed by nonnegative integers, such that for $i \leq j$, $\mathcal{K}^i \subseteq \mathcal{K}^j$.
\end{definition}

\begin{definition}[Filtered simplicial complex]
    A simplicial complex, $\mathcal{K}$, together with a filtration (function $f: \mathcal{K} \to \mathbb{R}$ such that $f(\sigma) \leq f(\tau)$ whenever $\sigma$ is a face of $\tau$). The sublevel set at a value $r \in \mathbb{R}$ is $f^{-1}(-\infty,r]$, which is a subxomplex of $\mathcal{K}$. Let $r_0 < r_1 < \dots < r_m$ be the values of the simplices, and $\mathcal{K}_i = f^{-1}(-\infty,r_i]$, then we call $\mathcal{K}_0 \subseteq \mathcal{K}_1 \subseteq \dots \subseteq \mathcal{K}_m$ the \textit{sublevel set filtration} of $f$.
\end{definition}

When you start with a simplicial complex $\mathcal{K}$ and you filter it according to a filtration $f$, it is clear that the homology of $\mathcal{K}_r$ evolves as the radius $r$ increases. For example, new connected components can be formed, loops can appear or disapper, cavities can form. What persistent homology does, and where the importance of the filtering comes in is that now we have the tools to track the topological changes associated with the different stages of the filtering process, and to associate a lifetime to them (track when a topological feature has first appeared and at which stage of the filtration it will disappear). This essential topological information is recorded in a set of intervals known as barcodes, which can be represented as a multiset of points in $\mathbb{R}^2$, where the coordinates correspond to the birth and death points of each interval.

\subsubsection{Persistent homology of degree \texorpdfstring{$0$}{Lg} (alternative approach)}

For the rest of the this section, we only focus on homology in dimension $0$, and provide an alternative and perhaps easier to understand interpretation. Please note that the following definition is a \emph{simplified} and non-standard (though equivalent) definition of $\mathrm{PH}^0$.

\begin{definition}[Persistent homology of degree $0$ ($\mathrm{PH}^0$)]
    \label{def:ph0}
    Let $(X,\rho)$ be a finite metric space and $N$ its cardinality. For each time\footnote{We use the term time for the scalar $t$, as it is classically done in the study of persistent homology. Note that this has nothing to do with the number of iterations appearing in the rest of the paper.} $t\geq 0$, we construct an undirected graph $G_t$, whose edges are given by:
    \begin{align*}
        \forall x, y \in X, ~ \set{x,y} \in G_t \iff \rho(x,y) \leq \delta.
    \end{align*}
    There exists a finite set of times $0 < t_1 \dots < t_k < +\infty$ such that the number of connected components in $G_{t_i}$ changes compared to $G_t$ for $t < t_i$. Let $c_i$ be the number of connected components in $G_{t_i}$. By convention we set $c_0 = N$ and $t_0=0$ and define $n_i := c_i - c_{i-1}$. $\mathrm{PH}^0$ is then defined as the following multiset (the notation $\set{\set{\cdot}}$ denotes multisets):
    \begin{align*}
        \mathrm{PH}^0 := \set{\set{\underbrace{t_1, \dots, t_1}_{n_1 \text{ times}}, t_2, \dots, \underbrace{t_k, \dots, t_k}_{n_k \text{ times}}}}.
    \end{align*}
\end{definition}

\begin{remark}[Vietoris-Rips filtration]
    The above is a simplified high-level definition of $\mathrm{PH}^0$. More formally, the construction of the family of graphs $G_t$ corresponds to the construction of the so-called Vietoris-Rips filtration of $X$, of which we only kept the simplices of dimension $1$, see \cite{boissonat_geometrical_2018} for more details.
\end{remark}

We now use $\mathrm{PH}^0$ to give the definitions of the quantities of interest in our work. The following is a definition of the quantity $\boldsymbol{E}_\alpha$ already mentioned in \cref{sec:technical-background}, but seen through the lens of persistent homology. As it will be explained in \cref{sec:mst-background}, these definitions are equivalent.

\begin{definition}[$\alpha$-weighted lifetime sums]
    \label{def:E-alpha-through-ph}
    With the same notations as in Definition \ref{def:ph0}, we define the $\alpha$-weighted lifetime sums as:
    \begin{align*}
        \forall \alpha \geq 0, ~\boldsymbol{E}_\alpha^\rho (X) := \sum_{t\in \mathrm{PH}^0} t^{\alpha}.
    \end{align*}
\end{definition}

\begin{remark}[``birth'' and ``death'' times]
    $\mathrm{PH}^k$ is usually defined as a multiset of birth and death times, tracking the appearance and disappearance of ``holes of dimension $k$'' during the construction of the Vietoris-Rips filtration of $X$. In the particular case of $\mathrm{PH}^0$, all birth times are $0$ and the times that we constructed correspond to the death times.
\end{remark}

We end this section by giving the definition of the PH dimension, which has been shown to be theoretically and empirically related to the generalization error of neural networks in prior works \cite{birdal2021intrinsic,dupuis2023generalization}.

\begin{definition}[Persistent homology dimension of degree $0$]
    \label{def:ph-dimension}
    Given a compact metric space $(X,\rho)$, we define the PH dimension of degree $0$ by:
    \begin{align*}
        \dim_{\mathrm{PH}}^\rho(X) := \inf \set{\alpha \geq 0,~ \exists C>0, \forall A \subseteq X \text{ finite, }~\boldsymbol{E}_\alpha(A) \leq C}.
    \end{align*}
\end{definition}

It has been shown in \cite{kozma2006minimal,schweinhart2020fractal} that for any compact metric space, the PH dimension defined above is equal to the celebrated upper box-counting dimension \cite{falconer_fractal_2014,mattila_dimension_2000}.

\subsection{Minimum spanning tree}
\label{sec:mst-background}

The persistent homology dimension used in existing generalization bounds \cite{birdal2021intrinsic,dupuis2023generalization} is closely related to another notion of intrinsic dimension, called minimum spanning tree (MST) dimension \cite{kozma2006minimal}, in the sense that the PH and MST dimensions of bounded metric spaces are identical. The link between persistent homology and MST is even deeper than the equality between the induced dimensions, as noted by \cite{schweinhart2020fractal}. In this section, we define quantities related to MSTs which will play an important role in our proofs.

In this section let us fix a finite metric space $(X,\rho)$. Let us first specify our notations for trees. A tree $\mathcal{T}$ on $X$ is a connected undirected graph. We represent $\mathcal{T}$ by its set of edges, which are denoted $a \to b$ (or equivalently $b \to a$ as the graph is undirected). For an edge $e$ of the form $a \to b$, we define its length by $|e| = \rho(a,b)$.

\begin{definition}[Minimum spanning tree]
    Let us define the cost of a tree by the sum of the length of its edges, \ie,
    \begin{align*}
        \boldsymbol{E}_1^{\mathrm{MST}}(\mathcal{T}) := \sum_{e\in \mathcal{T}} |e|.
    \end{align*}
    An MST of $X$ is defined as a tree with minimal cost. A consequence of the greedy algorithm to find such an MST \cite{cormen01introduction} is that an MST $\mathcal{T}$ is also minimal for any of the following costs:
    \begin{align*}
        \boldsymbol{E}_\alpha^{\mathrm{MST}}(\mathcal{T}) := \sum_{e\in \mathcal{T}} |e|^{\alpha},
    \end{align*}
    with $\alpha \geq 0$.
\end{definition}

Our interest in this notion comes from several results that are summed up in the following theorem. The reader can refer to \cite{adams2020fractal,schweinhart2020fractal,boissonat_geometrical_2018} for more details.

\begin{theorem}[Link between MST and persistent homology]
    \label{thm:link-mst-ph}
    There is a bijection between the two following multisets:
    \begin{itemize}
        \item The multiset of the lifetimes in the persistent homology of degree $0$ of the Vietoris-Rips complex of $X$.
        \item The multiset of the length of the edges of an MST of $X$.
    \end{itemize}
    Therefore, if we fix some $\alpha \geq 0$, the weighted $\alpha$-sum associated to the persistent homology of degree $0$ of the Vietoris-Rips complex of $X$ is equal to the cost $\boldsymbol{E}_\alpha$ of an MST of $X$, ie:
    \begin{align*}
         \boldsymbol{E}_\alpha^{\mathrm{MST}}(\mathcal{T}) = \boldsymbol{E}_\alpha(X).
    \end{align*}
    In all the following, we will use the notation $\boldsymbol{E}_\alpha$ to denote both quantities.
\end{theorem}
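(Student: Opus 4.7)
\textbf{Proof plan for Theorem~\ref{thm:link-mst-ph}.} The plan is to prove the bijection by synchronising the Vietoris--Rips filtration of $X$ with a run of Kruskal's greedy algorithm for the MST. Both procedures sweep through the pairs in $X$ in non-decreasing order of $\rho$, so the whole argument is essentially a bookkeeping of what each procedure does at each step.

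First I would fix a total order on the pairs $\{a,b\}\subseteq X$ refining the ordering by $\rho(a,b)$ (breaking ties arbitrarily), enumerate them as $e_1,e_2,\dots$, and maintain a union--find data structure on $X$ while sweeping. At step $i$, the edge $e_i=\{a_i,b_i\}$ enters the $1$-skeleton of the VR complex at scale $t_i=\rho(a_i,b_i)$. Two cases can occur: either $a_i$ and $b_i$ are already in the same connected component of the current $1$-skeleton, in which case $e_i$ closes a $1$-cycle, $H_0$ is unchanged, and Kruskal rejects $e_i$; or they lie in distinct components, in which case the two components merge, the rank of $H_0$ drops by one, $t_i$ is recorded as a death time of $\mathrm{PH}^0$, and Kruskal accepts $e_i$ as an MST edge. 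Since every connected component is born at scale $0$ (so the $\mathrm{PH}^0$ lifetimes coincide with the death times), the multiset of $\mathrm{PH}^0$ lifetimes is exactly the multiset of scales at which the greedy process performs a merge, which in turn is exactly $\{|e|:e\in\mathcal{T}\}$. This gives the required bijection.

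Next I would justify that the bijection is well-defined independently of the tie-breaking rule: the $\mathrm{PH}^0$ barcode is a topological invariant of the Vietoris--Rips filtration, and the matroid (greedy) structure of MSTs ensures that although distinct MSTs may exist in the presence of ties, their edge-weight multisets coincide. The equality $\boldsymbol{E}_\alpha^{\mathrm{MST}}(\mathcal{T})=\boldsymbol{E}_\alpha(X)$ is then an immediate consequence, since both sides are the sum $\sum_e |e|^\alpha$ over the same multiset of weights transported through a weight-preserving bijection.

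The main obstacle is really just careful setup: distinguishing edges in the $1$-skeleton of the VR complex (the only simplices that matter for $\mathrm{PH}^0$, since higher-dimensional simplices only kill $H_k$ for $k\geq 1$) from higher-dimensional faces, and handling ties in a tie-independent way. After that, the two processes are literally the same union--find computation, viewed either through the lens of $0$-dimensional homology or through the lens of greedy matroid optimisation, and the bijection follows immediately.
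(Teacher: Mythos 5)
The paper does not actually prove Theorem~\ref{thm:link-mst-ph}: it states it as a known fact and defers to the cited references (\cite{adams2020fractal,schweinhart2020fractal,boissonat_geometrical_2018}). Your proposal is a correct, self-contained proof of that fact, and it is the standard one found in those references — identifying $\mathrm{PH}^0$ death events with Kruskal merges via the shared union--find computation, observing that birth times vanish so lifetimes equal death times, that higher-dimensional simplices of the Vietoris--Rips filtration cannot affect $H_0$, and that both the barcode and the MST edge-weight multiset are insensitive to tie-breaking (barcode invariance on one side, the exchange/matroid property on the other). One small point worth making explicit when you write this up: the infinite bar in $\mathrm{PH}^0$ (the component that never dies) must be excluded from the multiset of lifetimes, so that both multisets have cardinality $|X|-1$; the paper's Definition~\ref{def:ph0} already builds this convention in, and your merge-counting argument naturally produces only the $|X|-1$ finite death times, but it deserves a sentence.
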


\subsection{Magnitude}
\label{sec:mag-background}

Let us restate formally a few standard definitions. The reader may refer to \cite{leinster2013magnitude,meckes2013positive,meckes2015magnitude} for more details on the notions of magnitude, weighting, and positive definite metric spaces. In this section, we fix a finite \emph{metric} space $(X,\rho)$. Some of the presented concepts will be later extended to pseudometric spaces in \cref{sec:magnitude-pseudo-metric-spaces}.

As before, the \emph{similarity matrix} \cite{leinster2013magnitude} of $X$ is defined by $M(a,b) = e^{-\rho(a,b)}$, for $a,b \in X$.
We now define weightings and magnitude of $X$, according to \cite[Section $2.1$]{leinster2013magnitude}. 

\begin{definition}[Weighting and magnitude]
    \label{def:weighting-magnitude}
    A weighting of $X$ is a function $\beta : X \longrightarrow \mathds{R}$ such that
    \begin{align*}
        \forall a \in X,~ \sum_{b \in X} e^{-\rho(a,b)} \beta(b) = 1.
    \end{align*}
    If such a weighting exists, the magnitude of $X$ is defined by:
    \begin{align*}
        \Mag (X) := \sum_{b \in X} \beta(b).
    \end{align*}
    It is easily seen that this definition is independent of the choice of weighting $\beta$. When a weighting exists, we say that $X$ ``has magnitude''.
\end{definition}

Based on such a definition, it is natural to inquire, whether such a weighting exists. This question has been studied by several authors \cite{leinster2013magnitude,meckes2013positive,meckes2015magnitude}. This question appears to be related to the notion of positive definite space, which we now define, according to \cite{leinster2013magnitude}.

\begin{definition}[Positive definite space]
    $X$ is positive definite if the similarity matrix $M$ is positive definite.
\end{definition}
It is clear that positive definite spaces have magnitude. More interestingly, we have the following result, which ensures that most metric spaces considered in this study are positive definite.

\begin{theorem}[\cite{leinster2013magnitude,meckes2013positive}]
    \label{thm:rd-positive-definite}
    Let $p\in[1,2]$ and $d\geq 1$, every finite subset of $(\Rd, \normof{\cdot}_p)$ is positive definite.
\end{theorem}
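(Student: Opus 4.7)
The plan is to combine Schoenberg's classical theorem on positive-definite exponential kernels with the well-known negative-type property of $\ell_p$ norms for $p\in[1,2]$. Recall that a symmetric kernel $\rho$ on a set is said to be of \emph{negative type} if $\sum_{i,j} c_i c_j \rho(x_i,x_j)\leq 0$ whenever $\sum_i c_i = 0$. Schoenberg's theorem states that $e^{-t\rho(\cdot,\cdot)}$ is positive semidefinite for every $t>0$ if and only if $\rho$ is of negative type, and strict positive definiteness follows under mild non-degeneracy hypotheses (distinct points producing linearly independent exponential kernels).

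First, I would establish that $\|\cdot\|_p$ on $\mathds{R}^d$ is of negative type for $p\in[1,2]$ in three layers. Layer (i): the one-dimensional kernel $(s,t)\mapsto |s-t|^p$ is of negative type for every $p\in(0,2]$; this comes from Bochner's theorem combined with the fact that $e^{-|u|^p}$ is the characteristic function of a symmetric $p$-stable distribution, which provides the requisite positive Fourier representation of $e^{-t|s-s'|^p}$. Layer (ii): negative-type kernels are stable under sums, so the function
\[
(x,y)\longmapsto \|x-y\|_p^p \;=\; \sum_{k=1}^d |x_k - y_k|^p
\]
is of negative type on $\mathds{R}^d$. Layer (iii): if $\rho$ is of negative type and $\phi:[0,\infty)\to[0,\infty)$ is a Bernstein function with $\phi(0)=0$, then $\phi\circ\rho$ is again of negative type; applied to $\phi(u)=u^{1/p}$, which is Bernstein precisely when $1/p\in(0,1]$ (i.e., $p\geq 1$), this yields that $\|x-y\|_p$ itself is of negative type on $\mathds{R}^d$.

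Second, I would invoke Schoenberg's theorem at $t=1$ to conclude that for any finite $X\subset\mathds{R}^d$ the similarity matrix $M(a,b)=e^{-\|a-b\|_p}$ is positive semidefinite. To upgrade to strict positive definiteness (as required by the definition of a positive definite space), I would argue that since the points of $X$ are distinct, the Bernstein/Lévy–Khintchine integral representation underlying the negative-type property gives a linearly independent family of complex exponentials; consequently, a vanishing quadratic form $c^\top M c = 0$ forces $c\equiv 0$.

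The main obstacle is not conceptual depth but the clean invocation of the classical building blocks: ensuring that the symmetric $p$-stable characteristic function argument is used with the right normalization so that layer (i) really gives negative type (and not merely conditional negative semidefiniteness), that the Bernstein composition rule in layer (iii) is applied correctly at the endpoint $p=1$, and that strict (not merely semi-) positive definiteness is extracted from the representation. None of these steps requires genuinely new ideas, but the bookkeeping must be done carefully so that the conclusion applies to all $p\in[1,2]$ uniformly.
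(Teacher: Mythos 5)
The paper does not prove this theorem; it is imported verbatim from Leinster~(2013) and Meckes~(2013), so there is no internal proof to compare against. On its own merits, your blueprint is the standard and correct route, and it matches the approach taken in the cited sources: stable characteristic functions give that $|s-t|^p$ has negative type on $\R$, sums preserve negative type so $\|x-y\|_p^p$ does too on $\Rd$, composition with the Bernstein function $u\mapsto u^{1/p}$ (valid for $1/p\in(0,1]$, i.e.\ $p\geq 1$) gives negative type for $\|x-y\|_p$, and Schoenberg then yields positive \emph{semi}-definiteness of the similarity matrix.

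The only place where your write-up is softer than it should be is the upgrade to strict positive definiteness, and I would make it concrete rather than leave it at ``linearly independent complex exponentials.'' The clean version is: for $p\geq 1$ the map $u\mapsto e^{-u^{1/p}}$ is completely monotone, hence $e^{-\|x-y\|_p} = \int_0^\infty e^{-r\|x-y\|_p^p}\,\mu(dr)$ where $\mu$ is the law of a $(1/p)$-stable subordinator at time $1$ (and $\mu=\delta_1$ when $p=1$), so $\mu$ places no mass at $0$. For each fixed $r>0$, the kernel $e^{-r\|x-y\|_p^p}=\prod_{k=1}^d e^{-r|x_k-y_k|^p}$ is the characteristic function of a vector of i.i.d.\ symmetric $p$-stables, whose density is strictly positive on $\Rd$; Bochner then gives, for distinct $x_1,\dots,x_n$ and $c\neq 0$,
\begin{align*}
\sum_{i,j} c_i c_j\, e^{-r\|x_i-x_j\|_p^p} \;=\; \int_{\Rd}\Bigl|\sum_j c_j e^{i\langle\xi,x_j\rangle}\Bigr|^2 f_{r}(\xi)\,d\xi \;>\;0,
\end{align*}
because $f_r>0$ and trigonometric sums at distinct frequencies cannot vanish a.e.\ unless all coefficients vanish. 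Integrating in $r$ against $\mu$ then gives $c^\top M c > 0$, which is the strict positive definiteness you need. Phrased this way, the $p=1$ endpoint and the $p>1$ case are handled uniformly, and the argument is airtight. This is essentially the computation underlying the proofs in Meckes~(2013), so your route is not a detour --- it is the intended one --- but I would spell out this last step rather than appeal informally to ``Lévy--Khintchine.''
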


\subsection{Covering and packing numbers}
\label{sec:covering-packing}

In this section, we fix a compact pseudometric space $(X,\rho)$ and give definitions of covering and packing numbers. These quantities have long been of primary interest in learning theory, in particular through the classical covering arguments for Rademacher complexity \cite{shalevschwartz2014understanding,rebeschiniAlgorithmicFundationsLearning2020}. More recently, limits of covering arguments have been leveraged by several authors to derive uniform generalization bounds in terms of fractal dimensions \cite{simsekli2020hausdorff,hodgkinson_generalization_2022,camuto2021fractal,dupuis2023generalization,dupuis2024setpacbayes}, which we aim to improve in this study.

For $x \in X$ and $r>0$, we denote the closed ball centered at $x$ and or radius $r$ by $\bar{B}_r(x) := \set{y\in X~, \rho(x,y) \leq r}$. We can now define covering and packing. 

\begin{definition}[Covering number]
    Let $\delta>0$, the covering number $N_\delta^{\rho}(X)$ is the cardinality of a minimal set of points $N$ such that:
    \begin{align*}
        X \subseteq \bigcup_{x \in N} \bar{B}_\delta(x).
    \end{align*}
\end{definition}

\begin{remark}
    There exist several conventions for the definition of such numbers \cite{falconer_fractal_2014,mattila_geometry_1999,vershynin_high-dimensional_2020}, all of which are equivalent up to absolute constants and in particular induce the same fractal dimensions on $X$ (see \cite{falconer_fractal_2014}).
\end{remark}

\begin{definition}[Packing number]
    Let $\delta>0$, the covering number $N_\delta^{\rho}(X)$ is the cardinality of a maximal set of disjoint closed balls with centers in $X$.
\end{definition}

\subsection{About Johnson-Lindenstrauss lemma}
\label{sec:johnson-lindenstrauss-appendix}

In our implementation of Euclidean-based topological quantities, we use sparse random projections to project the weight vectors from $\Rd$ to a lower dimensional subspace. This is necessary because of memory constraints. Indeed, storing the full trajectory $\wcalto[t_0]{T} \subset \Rd$ (in our experiments $T-t_0 = 5\times 10^3$) can become intractable for large models. 

Given a finite set of points $\wcal \subset \Rd$ and $\epsilon > 0$. Let $N \geq \mathcal{O} \left( \frac{\log |\wcal|}{\epsilon^2} \right)$, Johnson-Lindenstrauss lemma \cite{vershynin_high-dimensional_2020,JLlecturenotes} ensures the existence of a linear map $P : \Rd \longrightarrow \mathds{R}^N$ such that:
\begin{align*}
    \forall w,w'\in \wcal, ~ (1 - \epsilon ) \normof{w - w'}^2 \leq \normof{Pw - Pw'}^2 \leq (1 + \epsilon ) \normof{w - w'}^2 .
\end{align*}

In practice, the linear maps suggested by this result can be obtained through subgaussian random projections \cite[Section $9.3$]{vershynin_high-dimensional_2020}. 

In our work, as the purpose of Johnson-Lindenstrauss embeddings is mainly memory optimization, we have to rely on sparse random projections. We use the implementation provided in \texttt{scikit-learn} \cite{scikit-learn}. More precisely, we used a relative variation $\epsilon$ of $5\%$.

Finally, it should be noted that these projection techniques were only used for the vision transformer experiments, as the GNNs that we used have a small enough number of parameters to avoid the use of random projections.

\subsection{A note on the connection to Topological Deep Learning}
Topological deep learning (TDL) is a rapidly
evolving field that uses topological features to understand and design deep learning models~\cite{papamarkou2024position,hajij2022topological}. Our topological complexity measures can be seen as a direction towards addressing the Open Problem $7$ mentioned in~\cite{papamarkou2024position} concerning the discovery of topological properties of internal representations that are linked to generalization.

\section{Omitted proofs of the theoretical results}
\label{sec:omitted-proofs}

In this section, we present the proofs of our main theoretical contributions. We divide our proofs into two groups of subsections:
\begin{itemize}
    \item Sections \ref{sec:ph-pseudo-spaces},\ref{sec:magnitude-pseudo-metric-spaces} and \ref{sec:pseudo-positive-mag} focus on the extension (in a very natural way) of the quantities appearing in our bounds in pseudometric spaces. The main outcome of this analysis is the definition of positive magnitude in the pseudometric case. Note that \cref{sec:ph-pseudo-spaces} is not a contribution of this paper. We placed it in this section to improve the readability of the paper.
    \item In sections \ref{sec:covering-bounds}, \ref{sec:proof-starting-pointds}, \ref{sec:proof-ph-bounds} and \ref{sec:proof-magnitude-bounds}, we present the proof of our main theoretical results.
\end{itemize}

Before, proving our main results, we define the notion of \emph{metric identification}, which will be used in several of the following subsections. This is the same setting that was used in \cite{dupuis2023generalization} to naturally extend the persistent homology dimension to pseudometric spaces.

\begin{definition}[Metric identification]
    \label{def:metric-identification}
    Let $(X,\rho)$ be a pseudometric space. We can define an equivalence relation on $X$ by $a \sim b \iff \rho(a,b) = 0$. The associated quotient space, which is denoted $\nicefrac{X}{\sim}$ is a metric space for the naturally induced metric, which we still denote $\rho$.\footnote{Indeed, if $a\sim b$, then we have $\forall c \in X,~ \rho(a,c) = \rho(b,c)$.} We will also use the canonical projection,
    \begin{align*}
        \pi: X \longrightarrow \nicefrac{X}{\sim}.
    \end{align*}
    These notations will be used throughout the text.
\end{definition}

\subsection{Persistent homology and MST in pseudometric spaces}
\label{sec:ph-pseudo-spaces}
In this short subsection, we first restate results proven in \cite{dupuis2023generalization}, regarding persistent homology in pseudometric spaces. The main result is the following proposition, which has been proven inside the proof of \cite[Lemma B.$9$]{dupuis2023generalization}.

\begin{proposition}[\cite{dupuis2024setpacbayes}]
    Let $(X, \rho)$ be a finite pseudometric space and $\alpha \geq 0$, then we have:
    \begin{align*}
        \boldsymbol{E}_\alpha (X) = \boldsymbol{E}_\alpha \left( \nicefrac{X}{\sim} \right)
    \end{align*}
    where the pseudometric $\rho$ (and its metric identification) have been omitted from the notation.
\end{proposition}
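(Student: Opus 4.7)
The plan is to leverage the minimum spanning tree (MST) reformulation of $\boldsymbol{E}_\alpha$ provided by Theorem \ref{thm:link-mst-ph}, together with the key observation that edges of zero length contribute nothing to the $\alpha$-weighted cost (with the natural convention $0^\alpha = 0$, consistent with the lifetime interpretation of $\mathrm{PH}^0$ where instantly dying components are not counted). Since the metric identification only collapses points at mutual distance zero, the ``extra'' points present in $X$ but absent from $\nicefrac{X}{\sim}$ should be attachable to their equivalence class at no cost.

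More concretely, I would first construct an MST $\mathcal{T}$ of $(X,\rho)$ using Kruskal's algorithm. Because all zero-length edges are processed first, the algorithm can be run so as to produce a decomposition $\mathcal{T} = \mathcal{T}_0 \sqcup \mathcal{T}_+$, where $\mathcal{T}_0$ is a forest whose connected components span each equivalence class of $\sim$, and $\mathcal{T}_+$ consists only of edges between distinct classes. Applying the canonical projection $\pi : X \to \nicefrac{X}{\sim}$ of Definition \ref{def:metric-identification} to the endpoints of edges in $\mathcal{T}_+$ yields a spanning tree $\tilde{\mathcal{T}}$ of $\nicefrac{X}{\sim}$, and the quotient metric satisfies $\rho(\pi(a),\pi(b)) = \rho(a,b)$ so edge lengths are preserved.

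Next, I would verify that $\tilde{\mathcal{T}}$ is itself a minimum spanning tree of $\nicefrac{X}{\sim}$ by a standard exchange argument: if some spanning tree $\tilde{\mathcal{T}}'$ of $\nicefrac{X}{\sim}$ had strictly smaller $\alpha$-cost, one could lift it back to $X$ by picking arbitrary representatives for each endpoint and re-attaching the same zero-cost intra-class subtrees as in $\mathcal{T}_0$, producing a spanning tree of $X$ whose cost is strictly below that of $\mathcal{T}$, contradicting the minimality of $\mathcal{T}$. Combining both halves then gives
\[
\boldsymbol{E}_\alpha(X) \;=\; \sum_{e \in \mathcal{T}_0} |e|^\alpha + \sum_{e \in \mathcal{T}_+} |e|^\alpha \;=\; \sum_{e' \in \tilde{\mathcal{T}}} |e'|^\alpha \;=\; \boldsymbol{E}_\alpha(\nicefrac{X}{\sim}),
\]
where the middle equality uses $0^\alpha = 0$ on the edges of $\mathcal{T}_0$ and length preservation on $\mathcal{T}_+$.

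The main subtlety lies in handling $\alpha = 0$: adopting the $\mathrm{PH}^0$ convention $0^0 = 0$ (so that points identified under $\sim$ do not contribute a lifetime, matching Definition \ref{def:ph0}) is what makes the identity hold uniformly for $\alpha \geq 0$. Apart from fixing this convention and checking that Kruskal's greedy rule can indeed be scheduled to deliver the decomposition $\mathcal{T} = \mathcal{T}_0 \sqcup \mathcal{T}_+$ above, the argument is essentially immediate from the MST reformulation of Theorem \ref{thm:link-mst-ph}.
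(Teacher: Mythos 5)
The paper does not actually prove this proposition; it cites the proof to \cite[Lemma B.9]{dupuis2023generalization} in \cref{sec:ph-pseudo-spaces} and moves on. There is therefore no in-paper argument to compare against, and your proposal is effectively supplying the argument that the paper outsources. Evaluated on its own terms, the Kruskal-based decomposition $\mathcal{T} = \mathcal{T}_0 \sqcup \mathcal{T}_+$ is correct: zero-weight edges are exactly the intra-class edges, so after the zero-weight phase every equivalence class is a single component and all remaining edges cross classes; the edge count $|\mathcal{T}_+| = |X/\sim| - 1$ then shows $\pi(\mathcal{T}_+)$ is a spanning tree of $\nicefrac{X}{\sim}$, with lengths preserved by the quotient metric of Definition~\ref{def:metric-identification}.

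Two small points deserve tightening. First, your exchange argument is phrased in terms of $\alpha$-cost, but the MST $\mathcal{T}$ is defined as a minimizer of the $1$-cost; if you imagine a competitor $\tilde{\mathcal{T}}'$ with smaller $\alpha$-cost but (conceivably) larger $1$-cost, lifting it does not contradict minimality of $\mathcal{T}$. The clean fix is to run the exchange with the $1$-cost — any spanning tree of $\nicefrac{X}{\sim}$ cheaper than $\tilde{\mathcal{T}}$ lifts (via arbitrary class representatives plus $\mathcal{T}_0$) to a spanning tree of $X$ cheaper than $\mathcal{T}$, a contradiction — and then invoke the fact recalled in \cref{sec:mst-background} that a greedy MST is simultaneously a minimizer of $\boldsymbol{E}_\alpha^{\mathrm{MST}}$ for all $\alpha \geq 0$ (equivalently, the multiset of edge weights of an MST is unique). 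Second, the $\alpha=0$ subtlety you flag is real and your resolution is the right one: with $0^0=0$ (equivalently, dropping zero-length edges, which under Definition~\ref{def:ph0} are never recorded as deaths because equivalent points are already merged at scale $0$) one has $\boldsymbol{E}_0(X) = |\nicefrac{X}{\sim}| - 1 = \boldsymbol{E}_0(\nicefrac{X}{\sim})$; with the more common convention $0^0=1$ the identity would in fact fail whenever $X$ has repeated points. Making that convention explicit is essential to the statement, and you correctly note it.
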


Based on Theorem \ref{thm:link-mst-ph}, the above result is also true when $\boldsymbol{E}_\alpha$ represents the cost of a MST of $X$.

\subsection{Magnitude in pseudometric spaces}
\label{sec:magnitude-pseudo-metric-spaces}

In this section, we fix $(X,\rho)$ a finite pseudometric space. We denote by $\nicefrac{X}{\sim}$ its metric identification and by $\pi : X \longrightarrow \nicefrac{X}{\sim}$ the canonical projection.

We directly extend Definition \ref{def:weighting-magnitude} to the pseudometric case. In order for this definition to make sense in our context, we first need to verify that it provides a well-posed definition of magnitude. This follows from the following lemma.

\begin{lemma}
    \label{lemma:magnitude-unique-pseudo}
    We assume that the finite pseudometric space $(X,\rho)$ has magnitude. Then magnitude is independent of the choice of weighting.
\end{lemma}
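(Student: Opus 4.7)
The key observation is that the only property of $\rho$ used in the classical uniqueness proof for metric spaces is its \emph{symmetry}, which still holds for a pseudometric. So the same linear-algebraic argument goes through verbatim. Concretely, let $Z$ denote the similarity matrix of $(X,\rho)$, i.e.\ $Z_{ab} = e^{-\rho(a,b)}$, which is symmetric by symmetry of $\rho$. A weighting is by definition a solution of the linear system $Z\beta = \mathbf{1}$, and the magnitude is $\mathbf{1}^\top \beta$. If $\beta$ and $\beta'$ are two weightings, then using the defining equation for $\beta'$ at each $a$, the symmetry $Z_{ab}=Z_{ba}$, and the defining equation for $\beta$ at each $b$, I would chain
\begin{align*}
    \sum_{a\in X}\beta(a) \;=\; \sum_{a\in X}\beta(a)\sum_{b\in X}e^{-\rho(a,b)}\beta'(b) \;=\; \sum_{b\in X}\beta'(b)\sum_{a\in X}e^{-\rho(b,a)}\beta(a) \;=\; \sum_{b\in X}\beta'(b),
\end{align*}
which is exactly the desired equality $\Mag(X)$-is-well-defined.

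As an alternative perspective (and to tie this subsection back to its ``pseudometric'' theme), I would also sketch a reduction via the metric identification of Definition \ref{def:metric-identification}. Given a weighting $\beta$ on $X$, define $\tilde\beta : \nicefrac{X}{\sim} \to \R$ by $\tilde\beta([a]) := \sum_{b\in\pi^{-1}([a])}\beta(b)$. Since $\rho(a,b)=\rho(a',b)$ whenever $a\sim a'$, the defining relation for $\beta$ collapses class-by-class, and $\tilde\beta$ is readily checked to be a weighting of the metric space $(\nicefrac{X}{\sim},\rho)$. Uniqueness of magnitude on metric spaces (a standard fact from \cite{leinster2013magnitude}) then gives $\sum_{[a]}\tilde\beta([a]) = \Mag(\nicefrac{X}{\sim})$, independent of the initial choice; and by construction $\sum_{[a]}\tilde\beta([a]) = \sum_{b\in X}\beta(b)$, so the pseudometric magnitude is also independent of the choice of weighting and moreover satisfies $\Mag(X) = \Mag(\nicefrac{X}{\sim})$.

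The only mild subtlety, and the part I would check carefully, is that the symmetry manipulation above truly uses nothing beyond $\rho(a,b)=\rho(b,a)$, i.e.\ it does not require $\rho$ to separate points. Since Fubini on a finite double sum and the commutativity of the entries $Z_{ab}=Z_{ba}$ are the only ingredients, this is immediate; there is no real obstacle. The proof is therefore short and essentially reduces to writing the two displayed computations above.
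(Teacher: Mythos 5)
Your first computation is exactly the paper's proof: chain the defining relation for $\beta'$ at $a$, swap the finite double sum, and use symmetry of $\rho$ together with the defining relation for $\beta$ at $b$. Your alternative route via metric identification is also sound, but note the paper packages that content separately as Theorem \ref{thm:magnitude-pseudo-spaces} rather than using it to prove this lemma.
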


\begin{proof}
    The proof is straightforward and identical to the metric case. Let $\beta,\beta'$ be two weightings, we have:
    \begin{align*}
        \sum_{a\in X} \beta(a) = \sum_{a\in X} \sum_{b\in X} e^{-\rho(a,b)}\beta'(b)\beta(a) = \sum_{b\in X} \beta'(b) \sum_{a\in X} e^{-\rho(a,b)}\beta(a) =  \sum_{b\in X} \beta'(b). 
    \end{align*}
\end{proof}

In the following theorem, we show that magnitude is invariant through metric identification.

\begin{theorem}[Invariance of magnitude through metric identification]
    \label{thm:magnitude-pseudo-spaces}
    $X$ has magnitude if and only if $\nicefrac{X}{\sim}$ has magnitude, in which case we have:
    \begin{align*}
        \Mag(X) = \Mag\left( \nicefrac{X}{\sim} \right).
    \end{align*}
\end{theorem}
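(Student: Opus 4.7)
The plan is to exploit the fact that the pseudometric $\rho$ is constant on equivalence classes (this is immediate from the triangle inequality: if $\rho(a,a')=0$ then $\rho(a,c)=\rho(a',c)$ for every $c$), so the similarity matrix $M(a,b) = e^{-\rho(a,b)}$ has identical rows and columns for equivalent points. Hence weightings transport cleanly in both directions by aggregating along / redistributing over equivalence classes. By Lemma \ref{lemma:magnitude-unique-pseudo}, the value of the magnitude does not depend on which weighting we pick, so it suffices to exhibit one weighting on each side and compute its sum.

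For the forward direction, suppose $X$ admits a weighting $\beta : X \to \mathds{R}$. Define $\tilde\beta : \nicefrac{X}{\sim} \to \mathds{R}$ by $\tilde\beta([a]) := \sum_{a' \in [a]} \beta(a')$. For any class $[a]$, using that $\rho(a,a') = \rho([a],[a'])$ whenever $a' \in [a']$,
\begin{align*}
\sum_{[b] \in \nicefrac{X}{\sim}} e^{-\rho([a],[b])} \tilde\beta([b])
= \sum_{[b]} \sum_{b' \in [b]} e^{-\rho(a,b')} \beta(b')
= \sum_{b' \in X} e^{-\rho(a,b')} \beta(b') = 1,
\end{align*}
so $\tilde\beta$ is a weighting of $\nicefrac{X}{\sim}$, and summing gives $\Mag(\nicefrac{X}{\sim}) = \sum_{[a]} \tilde\beta([a]) = \sum_{a\in X}\beta(a) = \Mag(X)$.

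For the converse, suppose $\nicefrac{X}{\sim}$ admits a weighting $\tilde\beta$. Define $\beta(a) := \tilde\beta([a])/|[a]|$ (any convex redistribution across the class would work equally well). Then for each $a\in X$,
\begin{align*}
\sum_{b\in X} e^{-\rho(a,b)} \beta(b)
= \sum_{[b]} \sum_{b'\in[b]} e^{-\rho([a],[b])} \frac{\tilde\beta([b])}{|[b]|}
= \sum_{[b]} e^{-\rho([a],[b])} \tilde\beta([b]) = 1,
\end{align*}
and summing $\beta$ over $X$ telescopes back to $\sum_{[a]} \tilde\beta([a]) = \Mag(\nicefrac{X}{\sim})$, establishing the equality. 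There is no real obstacle here: the only thing to be careful about is invoking Lemma \ref{lemma:magnitude-unique-pseudo} at the end so that the computed sums are genuinely the magnitudes, irrespective of the (non-unique) weightings chosen.
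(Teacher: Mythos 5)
Your proof is correct and matches the paper's argument essentially verbatim: both directions use the same aggregation of a weighting along equivalence classes (forward) and uniform redistribution $\beta(a)=\tilde\beta([a])/|[a]|$ (backward), with the well-definedness of magnitude secured by Lemma~\ref{lemma:magnitude-unique-pseudo}. No substantive difference from the paper's proof.
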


\begin{proof}
    We decompose $X$ into equivalence classes as:
    \begin{align*}
        X = \coprod_{\bar{a} \in \nicefrac{X}{\sim}} \bar{a} =: \coprod_{i \in I} \bar{a_i},
    \end{align*}
    where $\coprod$ denotes disjoint union and the points $(a_i)_{i\in I} \in X^I$ represent each equivalence class. We denote by $\bar{a}$ the equivalence class of $a \in X$.
    
    Let $\beta : X \longrightarrow \R$ be any function.  We have:
    \begin{align}
        \label{eq:mag-pseudo-proof}
        \forall a \in X,~\sum_{b\in X} e^{-\rho(a,b)} \beta(b) = \sum_{i\in I} e^{-\rho(\bar{a}, \bar{a}_i)} \sum_{b \in \bar{a}_i} \beta(b).
    \end{align}
    $\implies$: If $X$ has magnitude, then we take $\beta$ to be a weighting of $X$, we define:
    \begin{align*}
        \forall \bar{a} \in \nicefrac{X}{\sim},~ \bar{\beta} (\bar{a}) := \sum_{b \in \bar{a}} \beta(b).
    \end{align*}
    By Equation \eqref{eq:mag-pseudo-proof}, $\bar{\beta}$ is a weighting of $\nicefrac{X}{\sim}$.

    $\impliedby$: if $\bar{\beta}$ is a weighting of $\nicefrac{X}{\sim}$, then we define:
    \begin{align*}
        \forall a \in X,~ \beta(a) := \frac1{|\bar{a}|} \bar{\beta}(\bar{a}),
    \end{align*}
    where $|\bar{a}|$ denotes the cardinality of $\bar{a}$.
    By Equation \eqref{eq:mag-pseudo-proof}, $\beta$ is a weighting of $X$.
\end{proof}

\subsection{Definition of positive magnitude in the pseudometric case}
\label{sec:pseudo-positive-mag}

Let us extend our new notion of \emph{positive magnitude} in finite pseudometric spaces. This is a rather complicated task. Indeed we need to ensure that the positive magnitude is independent of the choice of weighting, which is not true in general. For this reason, we restrict our definition to pseudometric spaces whose metric identification is positive definite and we choose one particular weighting.

\begin{definition}[Positive magnitude in finite pseudometric spaces]
    \label{def:positive-magnitude-pseudo}
    Let $(X,\rho)$ be a finite pseudometric space whose metric identification $\nicefrac{X}{\sim}$ is positive definite. Let $\bar{\beta}: \nicefrac{X}{\sim} \longrightarrow \mathds{R}$ be a weighting of $\nicefrac{X}{\sim}$, then we define the positive magnitude of $X$, denoted $\PMag$, by:
    \begin{align*}
        \PMag(X) = \sum_{\bar{x} \in \nicefrac{X}{\sim}} \bar{\beta} (\bar{x})_+,
    \end{align*}
    where $x_+ := \max(x,0)$ denotes the positive part of $x$. We will say that $X$ admits a positive magnitude if its metric identification $\nicefrac{X}{\sim}$ is positive definite.
\end{definition}

Note that $\nicefrac{X}{\sim}$ admits a unique weighting because it is positive definite. However, $X$ still admits several weightings in general. The above definition ensures that the definition of positive magnitude is independent of any choice of weighting. For the need of our proofs, we will need to introduce weightings in pseudometric spaces, whose sums of positive parts yield the positive magnitude. This is possible by using the following definition, which corresponds to a ``good'' choice of weighting in finite pseudometric spaces.

\begin{definition}[Canonical weighting]
    \label{def:canonical-weighting}
    Let $(X,\rho)$ be a finite pseudometric space whose metric identification $\nicefrac{X}{\sim}$ is positive definite. Let $\bar{\beta}: \nicefrac{X}{\sim} \longrightarrow \mathds{R}$ be a weighting of $\nicefrac{X}{\sim}$, we define the \emph{canonical weighting} $\beta^0 : X \longrightarrow\mathds{R}$ on $X$ by:
    \begin{align*}
        \forall a \in X,~ \beta^0(a) := \frac1{|\pi(a)|} \bar{\beta}(\pi(a)),
    \end{align*}
    where $\pi : X \longrightarrow \nicefrac{X}{\sim}$ is the canonical surjection.
\end{definition}

The following lemma is then obvious but crucial to some of our theoretical results.

\begin{lemma}
    \label{lemma:positive-mag-pseudo}
    With the notation of the previous definition, we have:
    \begin{align*}
        \PMag(X) = \sum_{x \in X} \beta^0(x)_+.
    \end{align*}
\end{lemma}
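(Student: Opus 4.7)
The plan is to unwind the definitions and use the partition of $X$ into equivalence classes for $\sim$. Write $X = \coprod_{\bar{a} \in \nicefrac{X}{\sim}} \bar{a}$, so that every sum over $X$ can be rewritten as a double sum, first over equivalence classes $\bar{a}$ and then over representatives $x \in \bar{a}$. This is the same decomposition already used in the proof of Theorem \ref{thm:magnitude-pseudo-spaces}, and it is the only structural tool needed here.

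The key observation is that the scalar $|\pi(x)|$ appearing in the definition of the canonical weighting $\beta^0$ is a strictly positive integer (the cardinality of an equivalence class), so it does not affect signs. Consequently, for every $x \in X$,
\begin{align*}
    \beta^0(x)_+ = \left( \frac{1}{|\pi(x)|} \bar{\beta}(\pi(x)) \right)_+ = \frac{1}{|\pi(x)|} \bar{\beta}(\pi(x))_+,
\end{align*}
since $(\lambda t)_+ = \lambda t_+$ for any $\lambda \geq 0$.

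From there, I would group the sum $\sum_{x \in X} \beta^0(x)_+$ according to the equivalence classes. For a fixed class $\bar{a}$, every $x \in \bar{a}$ satisfies $\pi(x) = \bar{a}$ and $|\pi(x)| = |\bar{a}|$, so the inner sum contributes $|\bar{a}| \cdot \frac{1}{|\bar{a}|} \bar{\beta}(\bar{a})_+ = \bar{\beta}(\bar{a})_+$. Summing over $\bar{a} \in \nicefrac{X}{\sim}$ then produces $\sum_{\bar{a} \in \nicefrac{X}{\sim}} \bar{\beta}(\bar{a})_+$, which is exactly $\PMag(X)$ as given by Definition \ref{def:positive-magnitude-pseudo}. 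This yields the claimed identity.

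I do not anticipate a real obstacle here: the lemma is essentially a bookkeeping statement ensuring that the canonical weighting $\beta^0$ is compatible with $\PMag$. The only point that deserves a brief comment is the positivity of $|\pi(x)|$, which is what allows the positive-part operation to commute with scaling; if this were not the case (e.g.\ if one used a general weighting with possibly negative normalization), the equality could fail, and this is precisely why Definition \ref{def:canonical-weighting} singles out $\beta^0$ rather than an arbitrary weighting of $X$.
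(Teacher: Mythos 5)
Your proof is correct and fills in exactly the argument the paper implicitly has in mind (the paper labels the lemma "obvious" and gives no proof). You correctly isolate the one point worth spelling out, namely that $|\pi(x)|^{-1}>0$ so the positive-part operation commutes with the rescaling, and the partition-into-fibers bookkeeping then gives $\sum_{x\in X}\beta^0(x)_+ = \sum_{\bar a\in \nicefrac{X}{\sim}}\bar\beta(\bar a)_+ = \PMag(X)$.
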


The next proposition is a consequence of Theorem \ref{thm:rd-positive-definite}, it shows that the pseudometrics considered in practice in our work (and in our experiments) admit a positive magnitude.

\begin{proposition}
    \label{prop:our-pseudo-metrics}
    Let $p\in[1,2]$ and $S\in\zcal^n$, then every finite subset of $(\Rd, \rhosp)$ admits a positive magnitude, and therefore it also has a canonical weighting.
\end{proposition}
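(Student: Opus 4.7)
The plan is to reduce the pseudometric case to Theorem \ref{thm:rd-positive-definite} via an isometric embedding of the metric identification into $(\mathds{R}^n, \|\cdot\|_p)$. By Definition \ref{def:positive-magnitude-pseudo}, it suffices to show that for any finite $X \subset \Rd$, the metric identification $(\nicefrac{X}{\sim}, \rhosp)$ is positive definite; the existence of the canonical weighting then follows immediately from Definition \ref{def:canonical-weighting}.

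First, I would observe that the map $\boldsymbol{L}_S: \Rd \to \mathds{R}^n$ factors through the metric identification. Indeed, by the definition of $\rhosp$, we have $w \sim w'$ iff $\boldsymbol{L}_S(w) = \boldsymbol{L}_S(w')$, so the map
\begin{align*}
    \Phi: \nicefrac{X}{\sim} \longrightarrow \mathds{R}^n, \qquad \Phi(\pi(w)) := n^{-1/p}\, \boldsymbol{L}_S(w)
\end{align*}
is well-defined and injective. Moreover, for any $w, w' \in X$,
\begin{align*}
    \|\Phi(\pi(w)) - \Phi(\pi(w'))\|_p = n^{-1/p}\|\boldsymbol{L}_S(w) - \boldsymbol{L}_S(w')\|_p = \rhosp(w, w'),
\end{align*}
so $\Phi$ is an isometric embedding of $(\nicefrac{X}{\sim}, \rhosp)$ into $(\mathds{R}^n, \|\cdot\|_p)$.

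Next, since positive definiteness of the similarity matrix depends only on the pairwise distances in the space, $(\nicefrac{X}{\sim}, \rhosp)$ is positive definite if and only if its isometric image $\Phi(\nicefrac{X}{\sim})$ is positive definite as a subspace of $(\mathds{R}^n, \|\cdot\|_p)$. For $p \in [1,2]$, Theorem \ref{thm:rd-positive-definite} guarantees that every finite subset of $(\mathds{R}^n, \|\cdot\|_p)$ is positive definite, which gives the conclusion. Finally, by Definition \ref{def:positive-magnitude-pseudo}, $(X, \rhosp)$ admits a positive magnitude, and Definition \ref{def:canonical-weighting} produces the canonical weighting from the (unique) weighting of the positive definite space $\nicefrac{X}{\sim}$.

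There is essentially no obstacle here; the only subtlety is making sure the scaling factor $n^{-1/p}$ does not cause trouble, which it does not since $(\mathds{R}^n, n^{-1/p}\|\cdot\|_p)$ is itself isometric to $(\mathds{R}^n, \|\cdot\|_p)$ via the linear rescaling $x \mapsto n^{-1/p} x$. The proof is thus a direct reduction to Theorem \ref{thm:rd-positive-definite} using the pullback structure of $\rhosp$ through $\boldsymbol{L}_S$.
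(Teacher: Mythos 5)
Your proof is correct and follows essentially the same argument as the paper's: you construct the isometric embedding $\nicefrac{X}{\sim} \hookrightarrow (\mathds{R}^n,\normof{\cdot}_p)$ via $n^{-1/p}\boldsymbol{L}_S$ (the paper's $\varphi_S$), then invoke Theorem~\ref{thm:rd-positive-definite} to conclude positive definiteness of the metric identification. No substantive difference.
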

\begin{proof}
    Let $\wcal := \set{w_1,\dots,w_N}$ be a finite set in $\Rd$. We have
    \begin{align*}
        \normof{\boldsymbol{L}_S(w) - \boldsymbol{L}_S(w')}_p = n^{1/p} \rhosp(w,w').
    \end{align*}
    Therefore, if we denote by $\bar{w}$ the equivalence class of $w$ in the metric identification, it is clear that $\bar{w} = \bar{w'} \iff \boldsymbol{L}_S(w) = \boldsymbol{L}_S(w')$. Hence, the map $\varphi_S := n^{-1/p} \boldsymbol{L}_S$ naturally extends to an isometry between metric spaces:
    \begin{align*}
        \nicefrac{\wcal}{\sim} \overset{\sim}{\longrightarrow} \varphi_S(\wcal) \underset{\text{finite}}{\subset} \mathds{R}^n. 
    \end{align*}
    By Theorem \ref{thm:rd-positive-definite}, the finite set $\varphi_S(\wcal)$ is positive definite, hence it is also the case of $\nicefrac{\wcal}{\sim}$. Therefore $\wcal$ admits a positive magnitude by definition.
\end{proof}

\subsection{Warm-up: covering bounds}
\label{sec:covering-bounds}

The following is deduced from the transcription of the results of \cite{dupuis2024setpacbayes} to our setting. It is the starting point of our persistent homology-based analysis.

\begin{restatable}{theorem}{thmStartingPoint}
    \label{thm:starting-point}
    Let $\rho$ be a pseudometric on $\Rd$.
    Suppose that Assumption \ref{ass:boundedness} holds and that $\ell$ is $(q,L, \rho)$-Lipschitz, for $q \geq 1$. Then, for all $\delta > 0$, with probability at least $1 - \zeta$ over $\datadist \otimes \mu_u^{\otimes \infty}$,
    \begin{align*}
        \sup_{t_0\leq i \leq T}G_S(w_i) \leq 2L\delta + 2B\sqrtfrac{2 \log N_\delta^{\rho}(\wcalto[t_0]{T})}{n} + 3B \sqrtfrac{\mathrm{I}_\infty(S,\wcalto[t_0]{T}) + \log(1/\zeta)}{2n}.
    \end{align*}

\end{restatable}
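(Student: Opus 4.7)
The plan is to combine a covering argument on the pseudometric $\rho$ with the PAC-Bayesian framework for data-dependent random hypothesis sets developed in \cite{dupuis2024setpacbayes}. The starting point is to construct, for each realization of the training run, a minimal $\delta$-cover $C_\delta \subseteq \wcalto[\tau]{T}$ in the pseudometric $\rho$, of cardinality $|C_\delta| = N_\delta^{\rho}(\wcalto[\tau]{T})$. Measurability of the map $(S,U)\mapsto C_\delta$ can be handled by selecting, e.g., the lexicographically first minimal net, so that $C_\delta$ is a well-defined random finite subset of $\Rd$.

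Next, I would apply the random-set PAC-Bayesian bound of \cite{dupuis2024setpacbayes}, together with the Hoeffding-type sub-Gaussian estimate available for losses bounded in $[0,B]$, to the finite random set $C_\delta$. This should deliver, with probability at least $1-\zeta$,
\[
\sup_{c\in C_\delta} G_S(c) \le 2B\sqrt{\frac{2\log N_\delta^{\rho}(\wcalto[\tau]{T})}{n}} + 3B\sqrt{\frac{\mathrm{I}_\infty(S,\wcalto[\tau]{T}) + \log(1/\zeta)}{2n}}.
\]
The first term arises from the expected supremum of a Hoeffding/Rademacher-type process indexed by the $N_\delta^\rho$ cover centers, and the information-theoretic penalty in the second term involves $\mathrm{I}_\infty(S,\wcalto[\tau]{T})$ rather than $\mathrm{I}_\infty(S,C_\delta)$ because the cover is a measurable function of the trajectory, so a data-processing argument dominates the coupling of $C_\delta$ with $S$ by that of $\wcalto[\tau]{T}$ with $S$.

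Finally, I would lift the bound from the cover $C_\delta$ to the full trajectory using the $(q,L,\rho)$-Lipschitz hypothesis. For any $w \in \wcalto[\tau]{T}$ there exists $c\in C_\delta$ with $\rho(w,c)\le\delta$; combining $\|\boldsymbol{L}_S(w)-\boldsymbol{L}_S(c)\|_q\le Ln^{1/q}\rho(w,c)$ with H\"older's inequality yields $|\er(w)-\er(c)|\le L\delta$, and the analogous bound $|\risk(w)-\risk(c)|\le L\delta$ follows by integrating the Lipschitz estimate against an independent copy of the data. Together they give $|G_S(w)-G_S(c)|\le 2L\delta$, which combines with the previous display to produce the stated inequality.

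The main obstacle is the second step: ensuring that the PAC-Bayesian argument over the random cover $C_\delta$ produces an IT term governed by $\mathrm{I}_\infty(S,\wcalto[\tau]{T})$ and not by a more intricate data-dependent quantity. This forces a careful choice of reference prior and posterior within the framework of \cite{dupuis2024setpacbayes}, exploiting the fact that $C_\delta$ is a deterministic measurable functional of $\wcalto[\tau]{T}$, and a verification that the Hoeffding sub-Gaussian constant of $\ell$ (of order $B$) plugs in to yield the precise numerical constants $2B$ and $3B$ displayed in the theorem.
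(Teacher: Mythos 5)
Your overall architecture (PAC-Bayes for random sets $+$ covering in $\rho$) is the same family as the paper's, but you route the covering step in a genuinely different place, and that is where a real gap appears.

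The paper applies the random-set PAC-Bayes bound of \cite{dupuis2024setpacbayes} directly to the trajectory $\wcalto[\tau]{T}$, obtaining
\begin{align*}
\sup_{\tau \le i \le T} G_S(w_i) \;\le\; 2\,\mathrm{Rad}(\ell,\wcalto[\tau]{T},S) \;+\; 3B\sqrtfrac{\mathrm{I}_\infty(S,\wcalto[\tau]{T})+\log(1/\zeta)}{2n},
\end{align*}
and \emph{then} performs the $\delta$-covering \emph{inside} the Rademacher complexity. At that point, everything in sight is an empirical object: $\mathrm{Rad}(\ell,\wcalto[\tau]{T},S)$ is a function only of $\boldsymbol{L}_S(w)$ for $w$ in the trajectory, so the $(q,L,\rho)$-Lipschitz hypothesis, which is a statement about $\boldsymbol{L}_S$, applies verbatim and yields the $L\delta$ remainder (doubled to $2L\delta$ by the factor $2$ in front of $\mathrm{Rad}$). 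You instead build the random cover $C_\delta$ first, apply the PAC-Bayes machinery to $C_\delta$, and then try to lift the bound from $C_\delta$ to the full trajectory by arguing that $|G_S(w)-G_S(c)| \le 2L\delta$ when $\rho(w,c)\le\delta$.

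That lifting step is where the argument breaks. You correctly derive $|\er(w)-\er(c)| \le L\delta$ from H\"older and Definition \ref{def:lipschitz-general-case}, because $\er$ is a function of $\boldsymbol{L}_S$. But your claim that the analogous bound $|\risk(w)-\risk(c)|\le L\delta$ ``follows by integrating the Lipschitz estimate against an independent copy of the data'' does not hold under the hypotheses of the theorem. The $(q,L,\rho)$-Lipschitz condition controls only $\|\boldsymbol{L}_S(w)-\boldsymbol{L}_S(c)\|_q$, i.e.\ the loss differences on the training points $z_1,\dots,z_n$; it says nothing about $|\ell(w,\tilde z)-\ell(c,\tilde z)|$ for a fresh test point $\tilde z$. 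This is not a cosmetic issue: the central example in the paper is the data-dependent pseudometric $\rho_S = \rho_S^{(1)}$ (Example \ref{example:data-dep-pseudometrics}), for which the Lipschitz condition is a tautology and $\rho_S(w,c)\le\delta$ only forces $w$ and $c$ to agree approximately on the training set, while their population risks can differ arbitrarily. Consequently $|\risk(w)-\risk(c)|$ is not controlled by $\delta$ at all in this case, and the final inequality does not follow. (The remaining ingredients of your plan are fine: selecting the lexicographically first minimal net handles measurability of $C_\delta$, and the data-processing inequality $\mathrm{I}_\infty(S,C_\delta)\le\mathrm{I}_\infty(S,\wcalto[\tau]{T})$ is valid because $C_\delta$ is a measurable function of $\wcalto[\tau]{T}$.)

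To repair the proof you should move the covering step after, not before, the passage to an empirical quantity: apply the set-PAC-Bayes bound to $\wcalto[\tau]{T}$ itself to isolate the Rademacher complexity $\mathrm{Rad}(\ell,\wcalto[\tau]{T},S)$, then cover inside that supremum and use Massart's lemma together with H\"older and the Lipschitz hypothesis, as the paper does. This way the Lipschitz condition is only ever invoked on the empirical loss vector $\boldsymbol{L}_S$, which is exactly what it controls.
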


The proof of this theorem will be given in the next subsection. Before discussing this proof, a few remarks are in order.

Covering bounds, such as \ref{thm:starting-point} have been used in \cite{simsekli2020hausdorff,camuto2021fractal,birdal2021intrinsic,dupuis2023generalization} to introduce fractal dimensions (more precisely through the notion of upper box-counting dimension) into the generalization bounds. This is done via the following definition of the aforementioned upper box-counting dimension:
\begin{align*}
    \upperbox^\rho (X) := \limsup_{\delta \to 0} \frac{\log N_\delta^{\rho}(X)}{\log(1/\delta)}.
\end{align*}
By using a similar procedure, we see that our framework could be used to introduce intrinsic dimensions associated to a wide range of pseudometrics, as soon as they satisfy a $(q,L,\rho)$-Lipschitz continuity assumption.

However, arguments based on these intrinsic dimensions only make sense in the limit $T\to \infty$, which makes little sense in practical settings. To address this issue, we take inspiration from two other notions that are equal to the upper box-counting dimension (and therefore lay the ground of the numerical approximation of this dimension), namely the PH-dimension \cite{kozma2006minimal,schweinhart2020fractal,birdal2021intrinsic,dupuis2023generalization} and the magnitude dimension \cite{meckes2013positive,andreeva_metric_2023}. Our approach is to replace the intrinsic dimensions by the ``intermediary quantities'' used to define them. This leads to the results presented in the next two subsection.

\subsection{Proof of Theorem \ref{thm:starting-point}}
\label{sec:proof-starting-pointds}

Before going to the proof of Theorem \ref{thm:starting-point}, we specify our theoretical setup, which is the one introduced in \cite{dupuis2024setpacbayes}. In this section, we prove our results in the case $T<+\infty$. However, note that one could consider $T=+\infty$ without much technical difficulties. 

The setup is the following: let $(F(\Rd), \mathcal{T})$ denote the set of all finite subsets of $\Rd$, endowed with a $\sigma$-algebra $\mathcal{T}$.

We consider the following probability distribution on $F(\Rd)$:
\begin{align}
    \label{eq:optimized-prior}
    \forall A \in \mathcal{T}, ~\pi(A) := \int_{\zcal^n} \rho_S(A) d\datadist(S).
\end{align}

As it is discussed in \cite[Section $5.4$]{dupuis2024setpacbayes}, we make the following technical measure-theoretic assumption.

\begin{assumption}
    \label{ass:absolute-continuous-prior}
    The probability measure $\datadist$ is a strictly positive Borel measure. Moreover, for every $A\in \mathcal{T}$, the map $S\mapsto\rho_S(A)$ is continuous.
\end{assumption}

The following example highlights the fact this is a very mild assumption.
\begin{example}
    If the data space $\zcal$ is countable and the data distribution $\mu_z$ has no null mass, then the above assumption is automatically satisfied with respect to the discrete topology.
\end{example}

\thmStartingPoint*

\begin{proof}

Let us fix some $\zeta \in (0,1)$.
First note that thanks to Assumption \ref{ass:absolute-continuous-prior}, we have that $\rho_S$ is absolutely continuous with respect to $\pi$, $\datadist$-almost surely. Therefore, we can introduce its Radon-Nykodym derivative, denoted by $d\rho_S / d\pi$.

Thanks to the above notation, we can apply the data-dependent Rademacher complexity bound of \cite[Theorem $10$]{dupuis2024setpacbayes} to obtain that with probability at least $1 - \zeta$, we have, for any $\lambda > 0$:
\begin{align*}
    \sup_{t_0 \leq i \leq  T} \left(\risk(w_i) - \er(w_i) \right) \leq 2 \mathrm{Rad} (\ell, \wcalto[t_0]{T}, S) + \frac1{\lambda} \left(  \frac{d\rho_S}{d\pi}(\wcalto[t_0]{T}) + \log(1/\zeta)  \right) + \lambda \frac{9 B^2}{8n},
\end{align*}
with $\mathrm{Rad} (\ell, \wcalto[t_0]{T}, S)$ a Rademacher complexity term, defined by:
\begin{align*}
    \mathrm{Rad} (\ell, \wcalto[t_0]{T}, S) := \Eof[\boldsymbol{\epsilon}]{\sup_{w \in \wcalto[t_0]{T}} \frac1{n} \sum_{i=1}^n \epsilon_i \ell(w,z_i) },
\end{align*}
where $\boldsymbol{\epsilon} := (\epsilon_1,\dots,\epsilon_n)$ is a vector of independent centered Bernoulli random variables.

By \cite[Lemma $16$]{dupuis2024setpacbayes}, we have almost surely that:
\begin{align*}
    \frac{d\rho_S}{d\pi}(\wcalto[t_0]{T}) \leq \mathrm{I}_\infty (\wcalto[t_0]{T}, S).
\end{align*}
Therefore, by optimizing the choice of the parameter $\lambda$ in the above equation, we have that:
\begin{align}
    \label{eq:set-pac-bayes-application}
    \sup_{t_0 \leq i \leq  T} \left(\risk(w_i) - \er(w_i) \right) \leq 2 \mathrm{Rad} (\ell, \wcalto[t_0]{T}, S) + 3B \sqrtfrac{\mathrm{I}_\infty(S,\wcalto[t_0]{T}) + \log(1/\zeta)}{2n}.
\end{align}

We now perform a covering argument very similar to classical covering arguments for Rademacher complexity\cite{shalevschwartz2014understanding}.
Let us fix some $\delta > 0$ and introduce $(x_1, \dots, x_{N_\delta^{\rho}(\wcalto[t_0]{T})})$ the centers of a minimal $\delta$-covering of $\wcalto[t_0]{T}$ for pseudometric $\rho$. For any $w \in \wcalto[t_0]{T}$, there exists $j$ such that $\rho(w, x_j) \leq \delta$. Therefore we have:
\begin{align*}
    \sup_{w \in \wcalto[t_0]{T}} \frac1{n} \sum_{i=1}^n \epsilon_i \ell(w,z_i)  &\leq \sup_{1 \leq j\leq N_\delta^{\rho}(\wcalto[t_0]{T})} \frac1{n} \sum_{i=1}^n \epsilon_i \ell(x_j,z_i)  + \frac1{n} \sum_{i=1}^n \epsilon_i (\ell(w,z_i)   - \ell(x_j,z_i)) \\
    &\leq  \sup_{1 \leq j\leq N_\delta^{\rho}(\wcalto[t_0]{T})} \frac1{n} \sum_{i=1}^n \epsilon_i \ell(x_j,z_i)  + \frac1{n} \sum_{i=1}^n |\ell(w,z_i)  - \ell(x_j,z_i)| \\
    &\leq  \sup_{1 \leq j\leq N_\delta^{\rho}(\wcalto[t_0]{T})} \frac1{n} \sum_{i=1}^n \epsilon_i \ell(x_j,z_i)  + n^{-1/q}  \normof{\boldsymbol{L}_S(w) - \boldsymbol{L}_S(x_j)}_{q},
\end{align*}
where the last line comes from Hölder's inequality.

 We can now apply Massart's lemma on the first term and the $(q,L,\rho)$-Lipschitz continuity of $\ell$ on the second term, this gives us:
 \begin{align*}
     \mathrm{Rad} (\ell, \wcalto[t_0]{T}, S) \leq L\delta + B\sqrtfrac{2\log  N_\delta^{\rho}(\wcalto[t_0]{T})}{n},
 \end{align*}
 which concludes the proof.

\end{proof}

\subsection{Persistent homology bounds}
\label{sec:proof-ph-bounds}

We now present the proofs of our persistent homology-based bounds, ie, the results of \cref{sec:ph-bounds}.

The following lemma is a pseudometric version of a classical result of fractal geometry \cite{falconer_fractal_2014}.

\begin{lemma}[Covering and packing in pseudometric spaces]
    \label{lemma:cov-pack-pseudometric}
    Let $(X,\rho)$ be a pseudometric space, $\delta > 0$, and $\set{x_1,\dots,x_{P_\delta(X)}}$ a maximal $\delta$-packing of $X$ for pseudometric $\rho$. Then we have:
    \begin{align*}
        N_{2\delta}^{\rho} (X) \leq P_\delta^\rho(X).
    \end{align*}
\end{lemma}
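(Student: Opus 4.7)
The plan is to follow the classical covering-packing argument, which carries over to pseudometric spaces without modification since it relies only on the triangle inequality and not on the separation property.

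First I would take $\{x_1, \dots, x_P\}$ with $P := P_\delta^\rho(X)$ to be a maximal $\delta$-packing, i.e., a maximal family of points in $X$ whose closed balls $\bar{B}_\delta(x_i)$ are pairwise disjoint. The key step is to show that the same family is automatically a $2\delta$-covering of $X$, from which $N_{2\delta}^\rho(X) \leq P$ is immediate.

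To prove the covering property, fix an arbitrary $y \in X$ and consider the closed ball $\bar{B}_\delta(y)$. If $\bar{B}_\delta(y)$ were disjoint from every $\bar{B}_\delta(x_i)$, then $\{x_1, \dots, x_P, y\}$ would still be a $\delta$-packing, contradicting the maximality of $P$. Therefore there exists an index $i$ and a point $z \in \bar{B}_\delta(y) \cap \bar{B}_\delta(x_i)$. By the triangle inequality of the pseudometric $\rho$,
\begin{align*}
    \rho(y, x_i) \leq \rho(y, z) + \rho(z, x_i) \leq \delta + \delta = 2\delta,
\end{align*}
so $y \in \bar{B}_{2\delta}(x_i)$. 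Since $y$ was arbitrary, $\{x_1, \dots, x_P\}$ is a $2\delta$-cover of $X$, which yields $N_{2\delta}^\rho(X) \leq P = P_\delta^\rho(X)$.

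There is no real obstacle here: in the metric case this is folklore (see e.g.\ \cite{vershynin_high-dimensional_2020}), and the only thing to double-check is that the argument does not use $\rho(a,b) = 0 \Rightarrow a = b$. It does not — we only invoke the triangle inequality and the maximality of the packing, so the statement holds verbatim in the pseudometric setting.
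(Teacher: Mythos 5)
Your proof is correct and takes essentially the same approach as the paper's: both hinge on the maximality of the packing and a single application of the triangle inequality, the only difference being that you argue directly (given $y$, produce a nearby packing center) while the paper argues by contradiction (assume an uncovered point exists and extend the packing).
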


\begin{proof}
Let us fix $\delta > 0$ and let $(x_1,\dots,x_{P_\delta^\rho(X)})$ be centers of a maximal packing of $X$ with closed $\delta$-balls. Let us assume that:
\begin{align*}
    X \backslash \bigcup_{1\leq i \leq P_\delta^\rho(X)} \bar{B}_{2\delta} (x_i) \neq \emptyset,
\end{align*}
so that we can take some $x_0$ belonging to the above non-empty set. Now let us fix $i \in \set{1,\dots,P_\delta^\rho(X)}$ and $w \in \bar{B}_\delta(x_i)$. By the triangle inequality and the definition of $w$ and $x_0$, we have:
\begin{align*}
    \underbrace{\rho(x_0,x_i)}_{>2\delta} \leq \rho(x_0,w) + \underbrace{\rho(w,x_i)}_{\leq\delta}.
\end{align*}
Therefore, we have $\rho(x_0,w) > \delta$, and hence $\bar{B}_\delta(x_i) \cap \bar{B}_\delta(x_0)$, so that we construct a bigger $\delta$-packing, by adding $x_0$ to $(x_1,\dots,x_{P_\delta^\rho(X)})$, which is absurd.

Therefore, we have $X \backslash \bigcup_{1\leq i \leq P_\delta^\rho(X)} \bar{B}_{2\delta} (x_i) = \emptyset$, hence the result.
\end{proof}

The next lemma asserts that $\boldsymbol{E}_\alpha$ is increasing (with respect to the inclusion of sets), if and only if $\alpha \leq 1$. This is the reason why we require $\alpha \in [0,1]$ in Theorem \ref{thm:E-alpha-bound}.

\begin{lemma}
\label{lemma:e-alpha-packing}
    Let $(X,\rho)$ be a non-empty finite pseudometric space, $\alpha \in [0,1]$ and $\delta > 0$. Then we have:
    \begin{align*}
        \boldsymbol{E}_\alpha^{\rho} (X) \geq \frac{P^{\rho}_\delta (X)}{2} \delta^{\alpha}
    \end{align*}
\end{lemma}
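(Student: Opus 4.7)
The plan is to reduce the bound to a tour-based accounting on the MST, exploiting Theorem \ref{thm:link-mst-ph} which identifies $\boldsymbol{E}_\alpha^\rho(X)$ with the $\alpha$-powered edge-weight sum of an MST $\mathcal{T}$ of $X$, together with the elementary fact that any walk in a tree from $u$ to $v$ must traverse every edge of the unique MST path between them. The packing information will enter only through the inequality $\rho(x_i,x_j) > \delta$ for distinct packing centers, which follows from the disjointness of their closed $\delta$-balls (if $y \in \bar B_\delta(x_i) \cap \bar B_\delta(x_j)$ were to exist, then $y = x_i$ would already force $\rho(x_i,x_j) \leq \delta$, contradicting disjointness).

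Concretely, I would fix a maximal $\delta$-packing $\{x_1,\dots,x_P\}$ with $P = P_\delta^\rho(X)$, fix an MST $\mathcal{T}$ of $X$, and consider the natural ``double'' traversal of $\mathcal{T}$ obtained by a DFS that traverses every edge exactly twice (the Eulerian circuit of the edge-doubled tree). Along this traversal I would record the cyclic order $x_{i_1}, x_{i_2}, \dots, x_{i_P}$ in which the packing centers first appear; this partitions the traversal into $P$ consecutive segments, where segment $j$ is a walk in $\mathcal{T}$ from $x_{i_j}$ to $x_{i_{j+1}}$ (indices modulo $P$). Since every MST edge is traversed exactly twice across the whole tour, summing $\alpha$-powered edge weights along the tour yields exactly $2\boldsymbol{E}_\alpha^\rho(X)$.

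To lower-bound each segment, observe that for any edge $e$ on the MST path $\pi_j$ from $x_{i_j}$ to $x_{i_{j+1}}$, removing $e$ disconnects the two endpoints in $\mathcal{T}$, so any walk between them must traverse $e$ at least once. Hence the $\alpha$-cost of segment $j$ is at least $\sum_{e \in \pi_j} |e|^\alpha$. Now the restriction $\alpha \in [0,1]$ delivers the key subadditivity $\sum_k a_k^\alpha \geq \big(\sum_k a_k\big)^\alpha$ (from concavity of $t \mapsto t^\alpha$ together with $0^\alpha = 0$), and combined with the triangle inequality along $\pi_j$ this gives
$\sum_{e \in \pi_j} |e|^\alpha \;\geq\; \big(\textstyle\sum_{e \in \pi_j} |e|\big)^\alpha \;\geq\; \rho(x_{i_j}, x_{i_{j+1}})^\alpha \;>\; \delta^\alpha.$
Summing the $P$ segment contributions gives $2 \boldsymbol{E}_\alpha^\rho(X) \geq P \delta^\alpha$, which is the claimed inequality.

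The main obstacle is conceptual rather than technical: one should resist the natural attempt to establish a direct monotonicity $\boldsymbol{E}_\alpha^\rho(Y) \leq \boldsymbol{E}_\alpha^\rho(X)$ for $Y \subseteq X$ and apply it to the packing, since this monotonicity actually fails in general (adding a Steiner-like point can strictly decrease the MST cost, e.g.\ the circumcenter of an equilateral triangle for $\alpha=1$). The doubling/Euler-tour trick sidesteps the need for monotonicity by working entirely inside $\mathcal{T}$ and only using that tree-walks dominate tree-paths. The factor $1/2$ appearing in the lemma is precisely the image of the edge-doubling, and the hypothesis $\alpha \in [0,1]$ is exactly what powers the subadditivity step. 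A minor bookkeeping point I would also flag: the pseudometric case requires no modification, since the MST and triangle inequality both make sense for $\rho$, and zero-weight edges simply contribute $0$ to $\boldsymbol{E}_\alpha^\rho$.
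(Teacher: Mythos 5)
Your proof is correct, and it takes a genuinely different route from the paper's. The paper argues directly inside the MST $\mathcal{T}$: for each packing center $x_i$ it constructs an exit-point $y_i$ (the first vertex along some tree path that leaves $\bar B_\delta(x_i)$), shows that the boundary edges $e_i$ are the only ones that can be shared between the resulting paths, builds an involution $\varphi$ on the set $K$ of indices whose paths do share, and thereby extracts $|I| + |K|/2 \geq P/2$ edge-disjoint paths each of $\alpha$-cost at least $\delta^\alpha$. Your proof instead invokes the classical Euler-tour / edge-doubling device: the closed walk on the doubled tree has total $\alpha$-cost exactly $2\boldsymbol{E}_\alpha^\rho(X)$, the $P$ cyclic segments between consecutive first visits of packing centers partition this walk, and each segment dominates the MST path between its endpoints (since removing an edge of that path disconnects them in $\mathcal{T}$); subadditivity of $t\mapsto t^\alpha$ together with the triangle inequality then gives each segment a cost strictly above $\delta^\alpha$. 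Both arguments land on the same constant $1/2$, and in both it is exactly the constraint $\alpha\in[0,1]$ that powers the path-to-endpoint step. Your argument trades the paper's somewhat intricate combinatorial pairing for a standard and transparent graph-theoretic device, and it sidesteps entirely the need to reason about which boundary edges are shared; the cost is that it introduces a global object (the Euler circuit) rather than working purely locally in the tree, but this is a very familiar and safe tool. One small caveat shared by both proofs: the bound is vacuously problematic when $P_\delta^\rho(X)=1$ and $X$ is small (e.g.\ two points at distance $\delta/4$), where the Euler-tour segment is trivial and the paper's "WLOG $P\geq 2$" also has no real justification; neither proof addresses this edge case, but it is harmless for the downstream application where $\delta$ is taken small relative to the set.
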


\begin{proof}
    We refer to \Cref{fig:packing-mst-illustration} for a graphical illustration of the main technical elements of this proof.
    
    In the case where $P_\delta^\rho(X) = 1$, the result is obvious. In the rest of the proof we assume that $P_\delta^\rho(X) \geq 2$.

    In all the following, we fix $\alpha \in [0,1]$ and $\delta>0$. We also denote $P := P^{\rho}_\delta (X)$. Without loss of generality, we can assume $P \geq 2$.

    We fix $\mathcal{T}$ an MST of $X$, represented by a set of edges denoted $x \to y$, with $x,y \in X^2$ (note that we identify $x \to y$ and $y \to x$). It is a classical result that there are $|X| - 1$ edges. For an edge $e$ of the form $a \to b$, we denote its length by $|e| := \rho(a,b)$.
    
    For $a,b \in X$, with $a \neq b$, we denote by $\pathmst{a}{b}$ the shortest path between $a$ and $b$. More precisely, we represent it as a list of edges, denoted $a=a_0\to a_1 \dots \to a_K=b$, for some $K$. When the context is clear, we identify $\pathmst{a}{b}$ to the set of its edges $a\to b$.

    Let us introduce $(x_1,\dots, x_P)$ a maximal $\delta$-packing of $X$ by closed. 
    
    For every $i \in \set{1,\dots,P}$, as $\mathcal{T}$ is connected, there exists $y_i\in X $ such that $y_i \notin \bar{B}_\delta (x_i)$ and $y_i$ is the only point in the path $\pathmst{x_i}{y_i}$ that does not belong to the ball $\bar{B}_\delta (x_i)$. 
    
    For each $i$, we denote $e_i$ the only edge in $\pathmst{x_i}{y_i}$ to which $y_i$ belongs, \ie $e_i$ is of the form $z_i \to y_i$, with $z_i \in \bar{B}_\delta (x_i)$. By construction, those edges $e_i$ are the only ones that can be shared by several paths $\pathmst{x_i}{y_i}$. 

    Let us introduce the following set of indices:
    \begin{align*}
        I := \set{i \in \set{1,\dots,P}, ~ \forall j \neq i,~ e_i \notin \pathmst{x_j}{y_j}}, \quad K := \set{1,\dots,P} \backslash I.
    \end{align*}

    Let us consider $i \in K$. Let us assume that we have $j,j'\in \set{1,\dots,P}$ such that $e_i \in \pathmst{x_j}{y_j}$ and $e_i \in \pathmst{x_{j'}}{y_{j'}}$. If we denote $e_i$ as $z_i \to y_i$, we have that $z_i \in \bar{B}_\delta (x_i)$, by definition of $y_i$. Therefore, by definition of $y_j$, we have $z_i = y_j$ (because $\bar{B}_\delta (x_i) \cap \bar{B}_\delta (x_j) = \emptyset$). We have similarly $z_i = y_{j'}$ and thus $y_j = y_{j'}$. By definition of $y_j$ and $y_{j'}$ we also have $y_i \in \bar{B}_\delta (x_j) \cap \bar{B}_\delta (x_{j'})$, which is absurd, by definition of packing. We conclude the following:
    \begin{align*}
        \forall k \in K,~ \exists! j \neq i,~  e_i \in \pathmst{x_j}{y_j}.
    \end{align*}
    For $k \in K$, we denote the corresponding $j$ by $\varphi(k)$. 
    
    By definition of $K$, it is clear that $\varphi(k) \in K$. Moreover, as $y_{\varphi(i)} = z_i \in \bar{B}_\delta (x_i)$, this implies that $\varphi^2(i) = i$. Therefore, we have constructed an involution,
    \begin{align*}
        \varphi : K \longrightarrow K,
    \end{align*}
    such that $\forall k \in K,~ \varphi(k) \neq k$. This implies that the cardinality of $K$ is even and that we can write $K = K_1 \coprod K_2$, with:
    \begin{align*}
        |K_1| = |K_2|, \quad \varphi(K_1) = K_2.
    \end{align*}
    The outcome of this construction is that we now have disjoint paths given by the $(x_i \to y_i)_{i\in I}$ and the $(x_k \to x_{\varphi(k)})_{k\in K_1}$.
    Therefore, we get the following lower bound on $\boldsymbol{E}_\alpha(X)$.
    \begin{align*}
        \boldsymbol{E}_\alpha(X) \geq \sum_{i \in I} \sum_{e \in \pathmst{x_i}{y_i}} |e|^{\alpha} + \sum_{k \in K_1} \sum_{e \in \pathmst{x_k}{x_{\varphi(k)}}} |e|^{\alpha}.
    \end{align*}
    As $\alpha \in [0,1]$, we have that:
    \begin{align*}
        \boldsymbol{E}_\alpha(X) \geq \sum_{i \in I} \left(\sum_{e \in \pathmst{x_i}{y_i}} |e|\right)^{\alpha} + \sum_{k \in K_1} \left(\sum_{e \in \pathmst{x_k}{x_{\varphi(k)}}} |e|\right)^{\alpha}.
    \end{align*}
    By the triangle inequality, and by definition of packing, we have:
    \begin{align*}
        \boldsymbol{E}_\alpha(X) &\geq \sum_{i \in I} \delta^{\alpha} + \sum_{k \in K_1} \delta^{\alpha} = \delta^{\alpha} (|I| + |K_1|) \geq \frac1{2} P_\delta^\rho (X) \delta^{\alpha},
    \end{align*}
    which concludes the proof.
\end{proof}

\begin{figure}[tbh]
    \centering
    \includegraphics[width=0.4\linewidth]{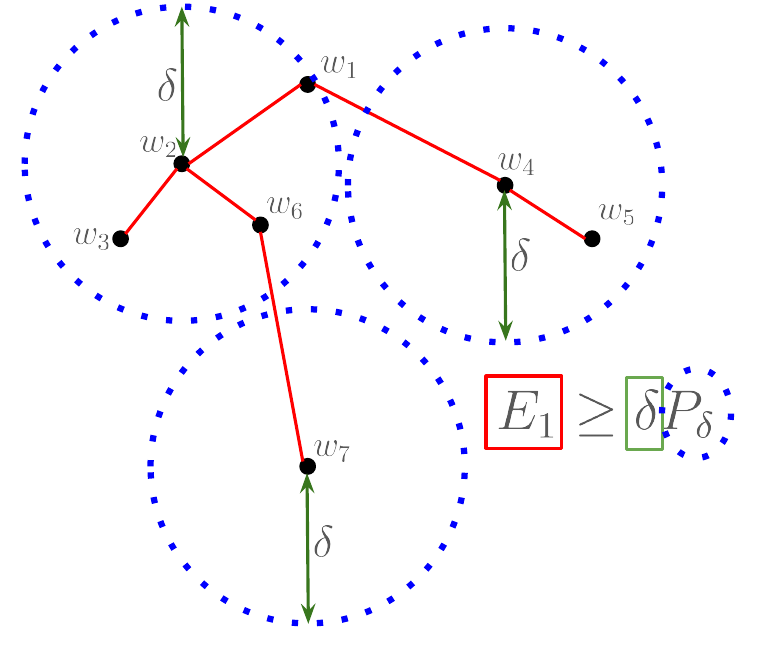}
    \caption{Geometric representation of the proof of \Cref{lemma:e-alpha-packing}. It represents a point cloud $(w_i)_i$, the centers of the $3$ packing balls (blue) and the minimum spanning tree $T$ (red), so that the sum of the lengths of the edges of $T$ is exactly $E_1$, see \Cref{sec:additional-background}.}
    \label{fig:packing-mst-illustration}
\end{figure}

\thmEAlpha*

\begin{proof}
    For better clarity, we assume $T<+\infty$. Let us fix some $\zeta \in (0,1)$, $\delta>0$, and $\alpha \geq 0$. By Theorem \ref{thm:starting-point}, we have, with probability at least $1 - \zeta$:
    \begin{align*}
        \sup_{t_0\leq i \leq T} \left( \risk(w_i) - \er(w_i) \right) \leq 2L\delta + 2B\sqrtfrac{2 \log N_\delta^{\rho}(\wcalto[t_0]{T})}{n} + 3B \sqrtfrac{\mathrm{I}_\infty(S,\wcalto[t_0]{T}) + \log(1/\zeta)}{2n}.
    \end{align*}
    We now bound the covering number appearing in the above equation. By Lemma \ref{lemma:e-alpha-packing}, we have:  
    \begin{align*}
        \boldsymbol{E}_\alpha^{\rho} (\wcalto[t_0]{T}) \geq 2^{-\alpha - 1} \left[ P_{\delta/2}^{\rho} (\wcalto[t_0]{T}) - 1 \right] \delta^{\alpha}.
    \end{align*}
    Moreover, by Lemma \ref{lemma:cov-pack-pseudometric}, we have:
    \begin{align*}
        \boldsymbol{E}_\alpha^{\rho} (\wcalto[t_0]{T}) \geq 2^{-\alpha - 1} \left[ N_\delta^{\rho} (\wcalto[t_0]{T}) - 1 \right] \delta^{\alpha}.
    \end{align*}
    We now combine this with our generalization bound by choosing the value:
    \begin{align*}
        \delta := \frac{B}{L\sqrt{n}},
    \end{align*}
    and we get that with probability at least $1 - \zeta$, we have:
    \begin{align*}
        \sup_{t_0\leq i \leq T} \left( \risk(w_i) - \er(w_i) \right) \leq \frac{2B}{\sqrt{n}}&+ 2B\sqrtfrac{2 \log (1 + K_{n,\alpha} \boldsymbol{E}_\alpha^{\rho}(\wcalto[t_0]{T})) }{n} \\&+ 3B \sqrtfrac{\mathrm{I}_\infty(S,\wcalto[t_0]{T}) + \log(1/\zeta)}{2n},
    \end{align*}
    with:
    \begin{align*}
        K_{n,\alpha} := 2 \left( \frac{2 L \sqrt{n}}{B} \right)^\alpha.
    \end{align*}
    It leads to the desired result.
\end{proof}

\subsection{Proof of the magnitude-based generalization bounds}
\label{sec:proof-magnitude-bounds}

\begin{lemma}
    \label{lemma:rad-mgf-magnitude}
    Let $\wcal \subset \Rd$ be a finite set and $\epsilon := (\epsilon_1, \dots, \epsilon_n)$ and $\rho$ a pseudometric such that $(\wcal,\lambda \rho)$ admits a positive magnitude (according to Definition \ref{def:positive-magnitude-pseudo}) for every $\lambda > 0$. We assume that $\ell$ is $(L,q,\rho)$-Lipschitz continuous with $q\in[1,2]$. Then, for any $\lambda > 0$, we have:
    \begin{align*}
        \Eof[\epsilon]{\exp \left\{ \frac{\lambda}{n} \sup_{w\in\wcal} \sum_{i=1}^n \epsilon_i \ell(w, z_i) \right\} } \leq e^{\frac{\lambda^2 B^2}{2n }} \PMag \left( (L\lambda) \wcal \right).
    \end{align*}
    where $\PMag$ is the positive magnitude, see \cref{sec:pseudo-positive-mag} 
\end{lemma}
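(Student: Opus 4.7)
The plan is to combine the defining identity of the positive--magnitude weighting with a Hoeffding MGF bound. Writing $Y_w := \sum_{i=1}^n \epsilon_i \ell(w,z_i)$, the quantity of interest is $\E_\epsilon[\exp(\lambda \sup_w Y_w / n)]$, and I would proceed in four steps.

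First, I would translate the $(q,L,\rho)$-Lipschitz continuity into an increment bound on $Y$. By H\"older's inequality with conjugate exponents $(p,q)$, together with $\|\epsilon\|_p = n^{1/p}$ (valid even for $p = \infty$), the Lipschitz hypothesis gives $|Y_w - Y_a| \le L n \rho(w,a)$; rearranging yields, for all $w, a \in \wcal$,
\begin{align*}
e^{\lambda Y_w/n - L\lambda \rho(a,w)} \le e^{\lambda Y_a/n}.
\end{align*}

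Second, I would bring in the weighting. Since $(\wcal, L\lambda\rho)$ admits a positive magnitude by hypothesis, the canonical weighting $\beta^0$ of Definition \ref{def:canonical-weighting} is well defined and, by the construction used in the proof of Theorem \ref{thm:magnitude-pseudo-spaces}, satisfies $\sum_{a\in\wcal} e^{-L\lambda \rho(a,w)} \beta^0(a) = 1$ for every $w$. Splitting $\beta^0 = \beta^0_+ - \beta^0_-$ and discarding the (nonnegative) ``negative part'' contribution then gives
\begin{align*}
1 \le \sum_{a\in\wcal} \beta^0_+(a)\, e^{-L\lambda \rho(a,w)}.
\end{align*}

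Third, I would multiply this tilted partition of unity by $e^{\lambda Y_w/n}$ and insert the increment bound from the first step, yielding $e^{\lambda Y_w/n} \le \sum_{a\in\wcal} \beta^0_+(a)\, e^{\lambda Y_a/n}$. The right-hand side is independent of $w$, so taking supremum over $w$ and then expectation over $\epsilon$, combined with the standard Rademacher MGF bound $\E_\epsilon[e^{\lambda Y_a/n}] \le e^{\lambda^2 B^2/(2n)}$ (which follows from $\ell(a,z_i)\in[0,B]$ and $\cosh(x)\le e^{x^2/2}$ applied coordinatewise), gives
\begin{align*}
\E_\epsilon\!\left[\exp\!\left(\frac{\lambda}{n}\sup_{w\in\wcal} Y_w\right)\right] \le e^{\lambda^2 B^2/(2n)} \sum_{a\in\wcal} \beta^0_+(a).
\end{align*}

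Finally, Lemma \ref{lemma:positive-mag-pseudo} identifies $\sum_a \beta^0_+(a)$ with $\PMag((L\lambda)\wcal)$, which closes the argument. The main subtlety I expect is the sign issue: weightings can have negative coordinates, and the entire reason we bound $\PMag$ rather than $\Mag$ is so that discarding $\beta^0_-$ in the second step is harmless (it only strengthens the direction of the inequality $\le 1$); the pseudometric setting is otherwise handled transparently by the canonical choice of weighting from Definition \ref{def:canonical-weighting}.
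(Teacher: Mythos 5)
Your proof is correct and uses essentially the same strategy as the paper's: start from the weighting identity for the canonical weighting of the scaled pseudometric space, drop the negative-part contribution to get the ``tilted partition of unity'' bounded below by $1$, combine with the H\"older/Lipschitz increment bound and Hoeffding's MGF bound, and identify the remaining sum with $\PMag$ via Lemma~\ref{lemma:positive-mag-pseudo}. The only (cosmetic) differences are that you apply H\"older directly to $\langle \epsilon, \boldsymbol{L}_S(w)-\boldsymbol{L}_S(a)\rangle$ rather than first passing through $\rho_S^{(1)}$, you bound $e^{\lambda Y_w/n}$ uniformly in $w$ instead of introducing the argmax $a_\epsilon$, and you work at scale $L\lambda$ from the outset rather than performing the change of variables $\lambda = \Lambda L$ at the end.
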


\begin{proof}
    We first remark that, by Hölder's inequality and the $(L,q,\rho)$-Lipschitz condition, we have:
    \begin{align*}
        \forall w,w'\in \wcal,~ \rho_S(w,w') \leq n^{-1/q} \normof{\boldsymbol{L}_S(w) - \boldsymbol{L}_S(w')}_q \leq L \rho(w,w').
    \end{align*}
    Let us fix some $\lambda > 0$. As $(\wcal,\lambda \rho)$ admits a positive magnitude, we can introduce a canonical weighting $\beta : \wcal \longrightarrow \R$. By definition of a weighting, we have 
    \begin{align*}
       \forall a \in \wcal,~ \sum_{b\in\wcal} e^{-\lambda \rho (a, b)} \beta (b) = 1.
    \end{align*}
    Moreover, for any $\epsilon \in \set{-1,1}^n$, we introduce:
    \begin{align*}
        a_\epsilon := \mathrm{argmax}_{a \in \wcal} \sum_{i=1}^n \epsilon_i \ell(a,z_i).
    \end{align*}
    With those notations, we can compute:
    \begin{align*}
        1 &\leq \sum_{b\in\wcal} e^{-\lambda \rho (a_\epsilon, b)} \beta_+ (b) \\
        &\leq \sum_{b\in\wcal} e^{-\frac{\lambda}{L} \rho_S (a_\epsilon, b)} \beta_+ (b) \\
        &= \sum_{b\in\wcal} \exp \set{-\frac{\lambda}{L n}  \sum_{i=1}^n |\ell(a_\epsilon, z_i) - \ell(b, z_i)|} \beta_+ (b) \\
        &\leq \sum_{b\in\wcal} \exp \set{-\frac{\lambda}{L n}  \sum_{i=1}^n \epsilon_i(\ell(a_\epsilon, z_i) - \ell(b, z_i))}\beta_+ (b) \\
        &= \exp \set{-\frac{\lambda}{L n}  \sum_{i=1}^n \epsilon_i\ell(a_\epsilon, z_i)}  \sum_{b\in\wcal} \exp \set{\frac{\lambda}{L n}  \sum_{i=1}^n \epsilon_i\ell(b, z_i)}\beta_+ (b) .
    \end{align*}
    Therefore, by dividing by the first term on the right-hand side and using the independence of the $\epsilon_i$, we deduce that:
    \begin{align*}
         \Eof[\epsilon]{\exp \left\{ \frac{\lambda}{L n} \sup_{w\in\wcal} \sum_{i=1}^n \epsilon_i \ell(w, z_i) \right\} } &\leq \Eof[\epsilon]{ \sum_{b\in\wcal} \exp \set{\frac{\lambda}{L n}  \sum_{i=1}^n \epsilon_i\ell(b, z_i)}\beta_+ (b)} \\
         &=\sum_{b\in\wcal}  \prod_{i=1}^n \Eof[\epsilon]{e^{\frac{\lambda}{L n} \epsilon_i \ell(b,z_i) }} \beta_+ (b).
    \end{align*}
    By Hoeffding's lemma, we have:
     \begin{align*}
         \Eof[\epsilon]{\exp \left\{ \frac{\lambda}{L n} \sup_{w\in\wcal} \sum_{i=1}^n \epsilon_i \ell(w, z_i) \right\} } &\leq  e^{\frac{\lambda^2 B^2}{2n L^2}} \sum_{b\in\wcal}  \beta_+ (b) \\
         &= e^{\frac{\lambda^2 B^2}{2n L^2}}\PMag \left( \lambda \wcal \right).
    \end{align*}
    The result follows by the change of variable $\lambda = \Lambda L$.
\end{proof}

\thmMagnitudeBound*

\begin{proof}
    The beginning of the proof is completely similar to the proof of \ref{thm:starting-point} up to Equation \eqref{eq:set-pac-bayes-application}.
    More precisely, we have that with probability at least $1 - \zeta$:  
    \begin{align*}
    \sup_{t_0 \leq i \leq  T} \left(\risk(w_i) - \er(w_i) \right) \leq 2 \mathrm{Rad} (\ell, \wcalto[t_0]{T}, S) + 3B \sqrtfrac{\mathrm{I}_\infty(S,\wcalto[t_0]{T}) + \log(1/\zeta)}{2n}.
    \end{align*}
    By Jensen's inequality, we have, for all $\lambda > 0$:
    \begin{align*}
        \mathrm{Rad} (\ell, \wcalto[t_0]{T}, S) \leq \frac1{\lambda}\log  \Eof[\epsilon]{\exp \left\{ \frac{\lambda}{n} \sup_{w\in\wcalto[t_0]{T}} \sum_{i=1}^n \epsilon_i \ell(w, z_i) \right\} }.
    \end{align*}
    Therefore, we can apply Lemma \ref{lemma:rad-mgf-magnitude} to write that, for all $s > 0$:
    \begin{align*}
        \mathrm{Rad} (\ell, \wcalto[t_0]{T}, S) \leq s \frac{B^2}{2n} + \frac1{s} \log\PMag \left( Ls \wcalto[t_0]{T} \right).
    \end{align*}
    We deduce that for all $s > 0$, we have with probability at least $1 - \zeta$ that:
    \begin{align*}
        \sup_{t_0 \leq i \leq  T} \left(\risk(w_i) - \er(w_i) \right) \leq s \frac{B^2}{n} + \frac{2}{s} \log\PMag \left( Ls \wcalto[t_0]{T} \right) + \sqrtfrac{\mathrm{I}_\infty(S,\wcalto[t_0]{T}) + \log(1/\zeta)}{2n}.
    \end{align*}
    
\end{proof}

\begin{remark}[Link between magnitude and positive magnitude]
    \label{remark:mag-pmag-big-s}
    Let $\wcal \subset \R^{M}$ be a finite set (for some $M$), of cardinality $N$, and $\rho$ a metric on $\wcal$. If we denote the similarity matrix, for a given value of $s >0$, by $M_s(a,b) = e^{-\rho(a,b)}$, then it is clear that:
    \begin{align*}
        M_s \underset{s \to \infty}{\longrightarrow} I_N.
    \end{align*}
    Moreover, by continuity of the inverse, this implies that the weighting associated to $s>0$, \ie $\beta_s : \wcal \to \R$, satisfy:
    \begin{align*}
        \forall a \in \wcal,~ \beta_s(a) \underset{s \to \infty}{\longrightarrow} 1.
    \end{align*}
    From this, we first deduce that, for $s \to \infty$, we have $\Mag^{\rho}(s\wcal) \to N$. Moreover, by continuity of the inverse, this means that, up to a certain $s$, the weighting $(\beta_s(a))_{a\in\wcal}$ only has positive elements. Therefore, this implies that, for $s$ big enough, one has $\Mag^{\rho}(s\wcal) = \PMag^{\rho}(s\wcal)$.

    Thanks to our definitions for positive magnitude in pseudometric spaces, given in \cref{sec:pseudo-positive-mag}, this observation extends to the pseudometric case.
\end{remark}

\begin{remark}[Extension to infinite sets]
\label{rq:extension-to-compact-sets}
    There exist extensions of the definition of magnitude beyond finite sets \cite{meckes2013positive,meckes2015magnitude}. More specifically, weightings are then represented by measures on the set. It is clear from the above proofs that we can extend the positive magnitude in this setting and that the proof would follow similar lines. Therefore, our theory provides upper bounds of Rademacher complexity in terms of positive magnitude in more general cases than the one we use in this work.
\end{remark}

In particular, the present reasoning could be extended to compact random sets $\wcal$. The next lemma is the extension of \Cref{lemma:rad-mgf-magnitude} to the compact setting. The proof follows very similar lines as \Cref{lemma:rad-mgf-magnitude}.

\begin{lemma}
    \label{lemma:extension-to-compact-sets}
    Let us fix $S\in\zcal^n$ and consider a set $W \subset \Rd$ that is compact\footnote{As we did in \Cref{sec:pseudo-positive-mag}, the positive magnitude of compact spaces should be properly extended to pseudometric spaces, we omit the details as it is very similar to the case treated in this paper.} with respect to the pseudometric $\rho_S$. For every $\lambda > 0$, we assume that $A$ possesses a weighting $\mu_\lambda$ (which is a finite measure on $A$) with respect to pseudometric $\lambda\rho_S$, in the sense of \cite[Definition 3.3]{meckes2015magnitude}. Then we have, for any $\lambda > 0$:
    \begin{align*}
        \Eof[\epsilon]{\exp \left\{ \frac{\lambda}{n} \sup_{a \in A} \sum_{i=1}^n \epsilon_i \ell(w, z_i) \right\} } \leq e^{\frac{\lambda^2 B^2}{2n }} \PMag \left( \lambda A \right).
    \end{align*}
    where $\PMag$ is the positive magnitude of the compact space $(A, \lambda \rho_S)$ which, according to \cite{meckes2015magnitude}, can be defined as:
    \begin{align*}
        \PMag \left( \lambda A \right) = \left( \mu_\lambda \right)_+ (A),
    \end{align*}
    where $\left( \mu_\lambda \right)_+$ denotes the positive part of the measure $\mu_\lambda$.
\end{lemma}

\section{Additional Experimental Details}
\label{sec:additional-experimental-details}

In this section, we give additional details regarding the models, datasets, and hyperparameters used in our experiments.

\subsection{Experimental setting}

\subsubsection{Vision Transformers Architecture and implementation details}

\begin{table}[!ht]
\caption{Architecture details for the vision transformers (taken from \cite{gani2022train}). \emph{WS refers to Window Size}.}
\label{table:transformers_architecture}
\begin{center}
\begin{small}
\begin{sc}
\resizebox{\textwidth}{!}{%
\begin{tabular}{@{} ccccccccc @{}} 
\toprule
{Model} & {Dataset} & {Depth} & {Patch Size} & {Token Dim} & {Heads} & {MLP-ratio} & {WS} & \#Params \\ 
\midrule
ViT \cite{touvron2021training} & CIFAR{10} & 9 & 4 & 192 & 12  & 2 & - & 2697610 \\
ViT \cite{touvron2021training} & CIFAR{100} & 9 & 4 & 192 & 12  & 2 & - & 2714980 \\
Swin \cite{liu2021swin} & CIFAR{10} & [2,4,6] & 4 & 96 & [3,6,12]  & 2 & 4 & 7048612 \\
Swin \cite{liu2021swin} & CIFAR{100} & [2,4,6] & 4 & 96 & [3,6,12]  & 2 & 4 & 7083262\\
CaiT \cite{touvron2021going} & CIFAR{10} & 24 & 4 & 192 & 4  & 2 & - & 8053450\\
CaiT \cite{touvron2021going} & CIFAR{100} & 24 & 4 & 192 & 4  & 2 & - & 8070820\\
\bottomrule
\end{tabular}
}
\end{sc}
\end{small}
\end{center}
\vskip -0.1in
\end{table}

The design of the ViT has been modified to accommodate for the small datasets as per \cite{raghu2021vision}. Our implementation is based on the \cite{gani2022train}, which is based on the \texttt{timm} library with the architecture parameters presented in Table \ref{table:transformers_architecture}. The implementation of Swin is based on the Swin-Transformer libarary and the implementation of CaiT is predominantly based on the \texttt{timm} library with some modifications. The full version can be found in the supplementary code.

Instead of training from scratch, which is extremely time-consuming, we used the pre-trained weights available from the GitHub repository of the paper \cite{gani2022train}, we further fintetuned them for 100 epochs on the dataset CIFAR$10$ or CIFAR$100$ to achieve the optimum performance reported in the paper \cite{gani2022train}. Then we verified that the finetuned weights achieved 100\% training performance, and then they were the starting point of our computational framework. We ran the transformer experiments on 18 NVIDIA 2080Ti GPUs, and the graph experiments on 18 Intel Xeon Silver 4114 CPUs.

\subsubsection{GNN Architecture and implementation details}
We will briefly talk about the details of GraphSage \cite{hamilton2017inductive} and GatedGCN \cite{bresson2017residual}, prior works we use in our experiments.
GraphSage \cite{hamilton2017inductive} is an improvement over the GCN (Graph ConvNets) model \cite{kipf2016semi} and it incorporates each node's own features from the previous layer in an explicit way by the update equation:

\begin{align*}
    h_i^{l+1} = \mathrm{ReLU}(U^{l}\mathrm{Concat}(h_i^{l}, \mathrm{Mean}_{j\in N_{i}}h_j^l)),
\end{align*}
where $N_i$ is the neighbourhood of node $i$, $h_i^l$ is the feature vector and
$U^{l} \in \mathbb{R}^{d \times 2d}$. We use the graph-pooling version of GraphSage, with the following update equation:
\begin{align*}
    h_i^{l+1} = \mathrm{ReLU}(U^{l}\mathrm{Concat}(h_i^{l}, \mathrm{Max}_{j\in N_{i}}\mathrm{ReLU}(V^l h_j^l))),
\end{align*}
where $V^{l} \in \mathbb{R}^{d \times d}$.
GatedGCN (Gated Graph ConvNet) \cite{bresson2017residual} uses the following update equation:

\begin{align*}
     h_i^{l+1} = h_i^{l} + \mathrm{ReLU}(\mathrm{BN}(U^l h_i^l + \sum_{j \in N_{i}}e_{ij}^{l} \odot V^l h_i^l)),
\end{align*}
where $U^{l}, V^{l} \in \mathbb{R}^{d \times d}$, $\odot$ is the Hadamard product, and the edge gates $e_{ij}^{l}$ have the following definitions:
\begin{align*}
    e_{ij}^{l} = \frac{\sigma(\hat{e}_{ij}^{l})}{\sum_{j' \in N_{i}}\sigma(\hat{e}_{ij}^{l}) + \epsilon},
\end{align*}
\begin{align*}
    \hat{e}_{ij}^{l} = \hat{e}_{ij}^{l-1} + \mathrm{ReLU}(\mathrm{BN}(A^l h_i^{l-1} + B^l h_i^{l-1} + C^l \hat{e}_{ij}^{l-1})), 
\end{align*}
where $\sigma$ is the sigmid funciton, $\epsilon$ is a small constant for numerical stability, $A^{l}, B^{l}, C^{l} \in \mathbb{R}^{d \times d}$, and BN stands for Batch Normalization. 

We used the code provided by \cite{dwivedi2023benchmarking}, which relies on the \texttt{dgl} library implementation of GraphSage and GatedGCN. We trained GraphSage and GatedGCN until 100\% training accuracy, following the setup in \cite{dwivedi2023benchmarking}. All experiments were ran on 18 Intel Xeon Silver 4114 CPUs. Each experiment (one fixed batch size and learning rate) was run on a single CPU and 18 experiments were run on the server at any given time (on different CPUs).

\subsection{Hyperparameter details}
\label{sec:hyperparameters_details}

\paragraph{Hyperparameters shared among experiments.}
For the Vision Transformers experiments, we varied the learning rate range $[10^{-5}, 10^{-3}]$, and batch size in the range $[8,256]$. For the graph experiments, $[10^{-6}, 10^{-4}]$, and batch size in the range $[8,256]$. For all experiments, we used $0.1$ proportion of the training data for the computation of the pseudo matrix, apart from CaiT and Swin on CIFAR$100$, where we used $0.09$ proportion of the training data due to memory constraints. All experiments use a $6\times 6$ grid of hyperparameters which is specified as follows.

\noindent\textbf{ViT on CIFAR$10$.}
We selected 6 values for the learning rate in the range $[10^{-5}, 10^{-3}]$, and the batch size between $[8,256]$, and data proportion for the computation of the pseudo-distance ($\rho_S$) of $10\%$ (see \cref{sec:computational_aspects}).

\noindent\textbf{ViT on CIFAR$100$.}
We selected 6 values for the learning rate in the range $[10^{-5}, 10^{-3}]$, and the batch size between $[8,256]$, and data proportion for the computation of the pseudo-distance ($\rho_S$) of $10\%$ (see \cref{sec:computational_aspects}).

\noindent\textbf{CaiT on CIFAR$10$.}
We selected 6 values for the learning rate in the range $[10^{-5}, 10^{-3}]$, batch size between $[8,256]$, and data proportion for the computation of the pseudo-distance ($\rho_S$) of $10\%$ (see \cref{sec:computational_aspects}).

\noindent\textbf{CaiT on CIFAR$100$.}
We selected 6 values for the learning rate in the range $[10^{-5}, 10^{-3}]$, batch size between $[8,256]$, and data proportion for the computation of the pseudo-distance ($\rho_S$) of $9\%$ (see \cref{sec:computational_aspects}).

\noindent\textbf{Swin on CIFAR$10$.}
We selected 6 values for the learning rate in the range $[10^{-5}, 10^{-3}]$, batch size between $[8,256]$, and data proportion for the computation of the pseudo-distance ($\rho_S$) of $10\%$ (see \cref{sec:computational_aspects}).

\noindent\textbf{Swin on CIFAR$100$.}
We selected 6 values for the learning rate in the range $[10^{-5}, 10^{-3}]$, batch size between $[8,256]$, and data proportion for the computation of the pseudo-distance ($\rho_S$) of $9\%$ (see \cref{sec:computational_aspects}).

\noindent\textbf{GatedGCN.}
We selected 6 values for the learning rate in the range $[10^{-6}, 10^{-4}]$, the batch size between $[8,256]$ and data proportion for the computation of the pseudo-distance ($\rho_S$) of $10\%$ (see \cref{sec:computational_aspects}). We note that for due to time constraints, the experiments with batch sizes of $8$ and $256$ for the Euclidean metric were not complete. 

\noindent\textbf{GraphSage.}
We selected 6 values for the learning rate in the range $[10^{-6}, 10^{-4}]$, the batch size between $[8,256]$, and data proportion for the computation of the pseudo-distance ($\rho_S$) of $10\%$ (see \cref{sec:computational_aspects}).

\noindent\textbf{ViT on CIFAR$10$ (Adam).}
We selected 6 values for the learning rate in the range $[10^{-5}, 10^{-3}]$, and the batch size between $[8,256]$, and data proportion for the computation of the pseudo-distance ($\rho_S$) of $10\%$ (see \cref{sec:computational_aspects}).

\noindent\textbf{ViT on CIFAR$10$ (SGD).}
We selected 6 values for the learning rate in the range $[5\times10^{-3}, 10^{-1}]$, and the batch size between $[8,256]$, and data proportion for the computation of the pseudo-distance ($\rho_S$) of $10\%$ (see \cref{sec:computational_aspects}).

\noindent\textbf{ViT on CIFAR$10$ (RMSprop).}
We selected 6 values for the learning rate in the range $[10^{-6}, 10^{-3}]$, and the batch size between $[8,512]$, and data proportion for the computation of the pseudo-distance ($\rho_S$) of $10\%$ (see \cref{sec:computational_aspects}).

\section{Additional experimental results}
\label{sec:additional-experiments}

In this section, we present additional empirical results, in addition to what was already presented in the main part of this document. We divide this section into three parts. Additional graphical representation of our main experimental results is presented in \Cref{sec:additional-graphical-representations}. Then, we quickly explore in \cref{sec:further-ablations} additional ablation studies and comparison of our proposed topological complexities with a complexity notion that is more standard in the literature, namely gradient variance \cite{jiang_fantastic_2019}.
In \cref{sec:vision-transformers-additional} we report additional experiments based on vision transformers and in \cref{sec:gnn-additional} we include additional illustrations of the GNN experiments.

{
\subsection{Additional graphical representations}
\label{sec:additional-graphical-representations}

In this section, we include additional bar plots, shown in \Cref{fig:additional-bar-plot}, which are meant to provide more visual support to understand the results of \Cref{table:big-table}. \Cref{fig:additional-bar-plot} completes the bar plots shown in \Cref{fig:Teaser}.

\begin{figure}
    \centering
    \includegraphics[width=0.7\linewidth]{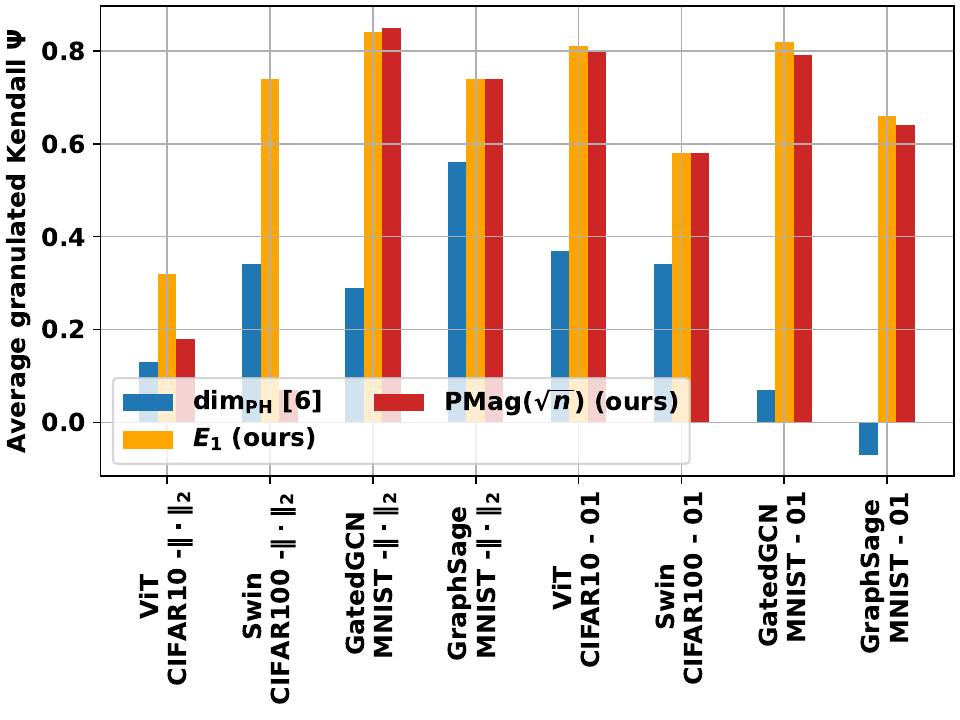}
    \caption{Comparison of topological complexities for different models, datasets, and pseudometrics. The results are a visual representation of some results from \Cref{table:big-table}, they complete Fig. 1.c in the main par of the paper.}
    \label{fig:additional-bar-plot}
\end{figure}

}

\subsection{Further ablations and comparison with other complexity metrics}
\label{sec:further-ablations}

\subsubsection{About the final accuracy gap and the worst accuracy gap}
\label{sec:final-vs-worst}

Our main theoretical results, presented in \cref{sec:main-theoretical-results}, apply to the worst-case generalization error over the trajectory, \ie on the quantity $\sup_{t_0\leq k \leq T} \left( \risk(w_k) - \er(w_k) \right)$. However, computing this quantity over the whole trajectory may be extremely expensive as it requires evaluating the model on the whole dataset at each iteration (this is a similar problem to the one encountered for the computation of the data-dependent distance matrices, discussed in \cref{sec:computational_aspects}). Previous studies on worst-case TDA-inspired generalization bounds circumvented this issue by reporting the final accuracy gap as the ``generalization error'' in their experiments (as it is the case in our work, most existing experiments consist of classification tasks).

In our work, we argue that the true worst-case generalization error may however have a different behavior than the final accuracy gap. In order to estimate this quantity in a computationally friendly way, we used the following procedure: we periodically estimated the test accuracy during the training, computed its minimum value $\mathrm{acc}_{\text{test-worst}}$ and substracted it from the final train accuracy ($\mathrm{acc}_{\text{train-final}}$) to obtain the ``generalization gap'' $\widehat{G}_S$ reported in our main experiments, \ie,
\begin{align*}
    \widehat{G}_S := \mathrm{acc}_{\text{train-final}} - \mathrm{acc}_{\text{test-worst}}.
\end{align*}
Note that in addition to being a good proxy to the true error appearing in our theory, the above quantity could be of independent experimental interest. 

In order to assess that our main conclusions remain valid if the final accuracy gap is used instead of $\widehat{G}_S$, we present here a few additional experiments using the final accuracy gap as a generalization measure (it is denoted \texttt{Accuracy gap} in the figures.) In the case of a ViT on CIFAR$10$, this is shown in Figure \ref{fig:final_acc_vit_cifar10_01} and Figure \ref{fig:final_acc_vit_cifar10_pseudo}.
We observe that our proposed topological complexities also correlate very well with the final accuracy gap, and outperform the previously proposed PH dimensions \cite{birdal2021intrinsic,dupuis2023generalization}. 

In addition to these findings, we make two additional new observations. First, the Ph dim, while outperformed by our proposed metric, has better granulated Kendall's coefficients when compared to the final accuracy gap than the worst generalization error ($\boldsymbol{\Psi}$ goes from $0.20$ to $0.36$). This may explain why we observed poor performance of PH-dim in Figure \ref{fig:psi_plot_l1}. Second, we observe that the correlation seems to be slightly less good with the final accuracy gap, especially for high learning rates, which seems to be similar behavior to what was reported in \cite{dupuis2023generalization}.

\begin{figure}[!ht]
    \centering
    \begin{subfigure}[t]{0.33\linewidth}
        \centering
        \includegraphics[width=\linewidth]{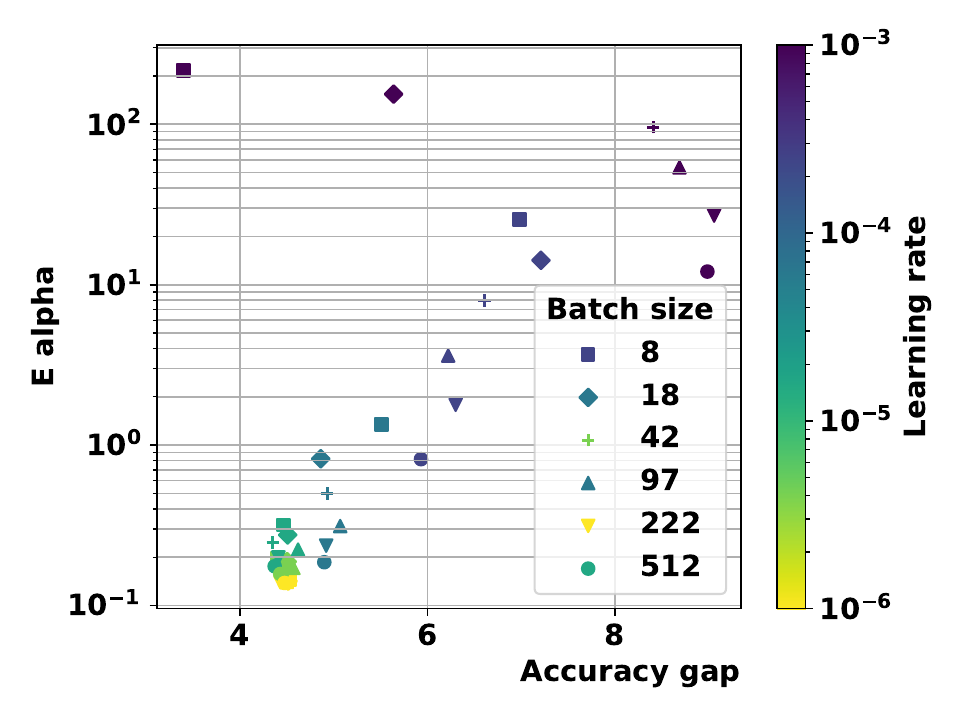}
        \caption{$\boldsymbol{E}_\alpha$}       
    \end{subfigure}%
    \hfill
    \begin{subfigure}[t]{0.33\textwidth}
        \centering
        \includegraphics[width=\linewidth]{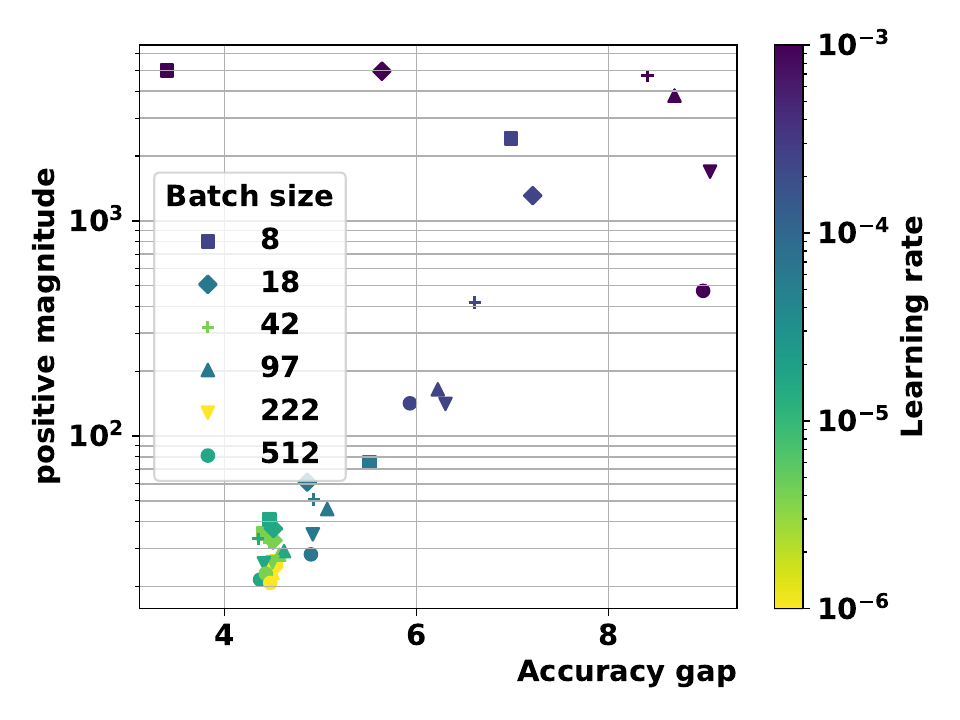}
        \caption{$\PMag(\sqrt{n})$}
    \end{subfigure}    
    \hfill
    \begin{subfigure}[t]{0.33\textwidth}
        \centering
        \includegraphics[width=\linewidth]{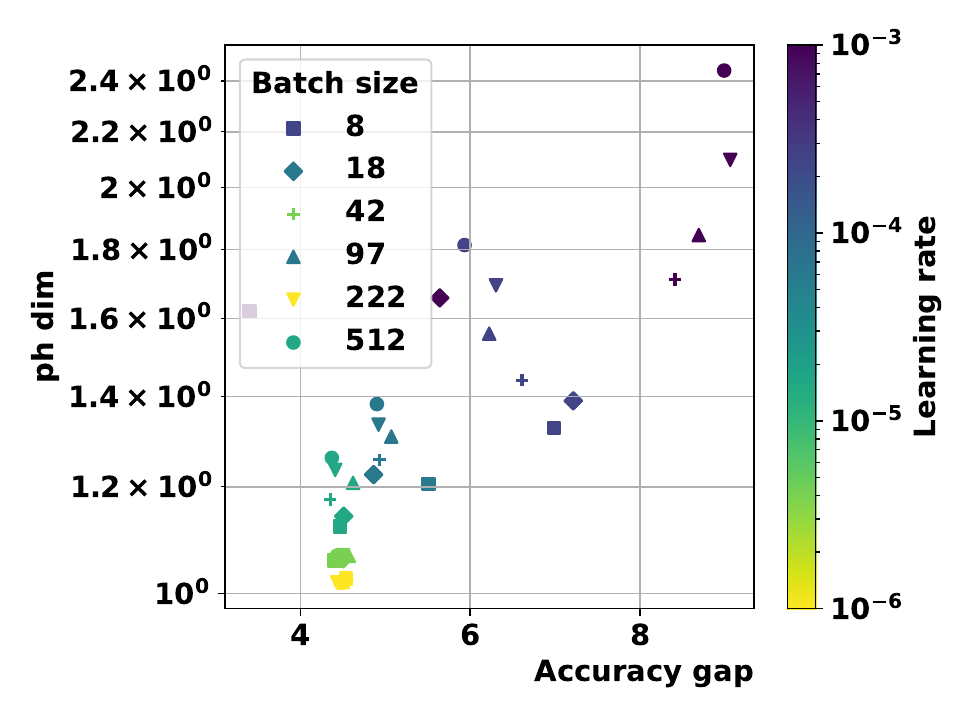}
        \caption{$\dimph$}
    \end{subfigure}
    \caption{ViT on CIFAR$10$ with $\rho_S$-pseudometric, using the final accuracy gap as a generalization measure.}
    \label{fig:final_acc_vit_cifar10_01}
\end{figure}
\vfill
\begin{figure}[!ht]
    \centering
    \begin{subfigure}[t]{0.33\linewidth}
        \centering
        \includegraphics[width=\linewidth]{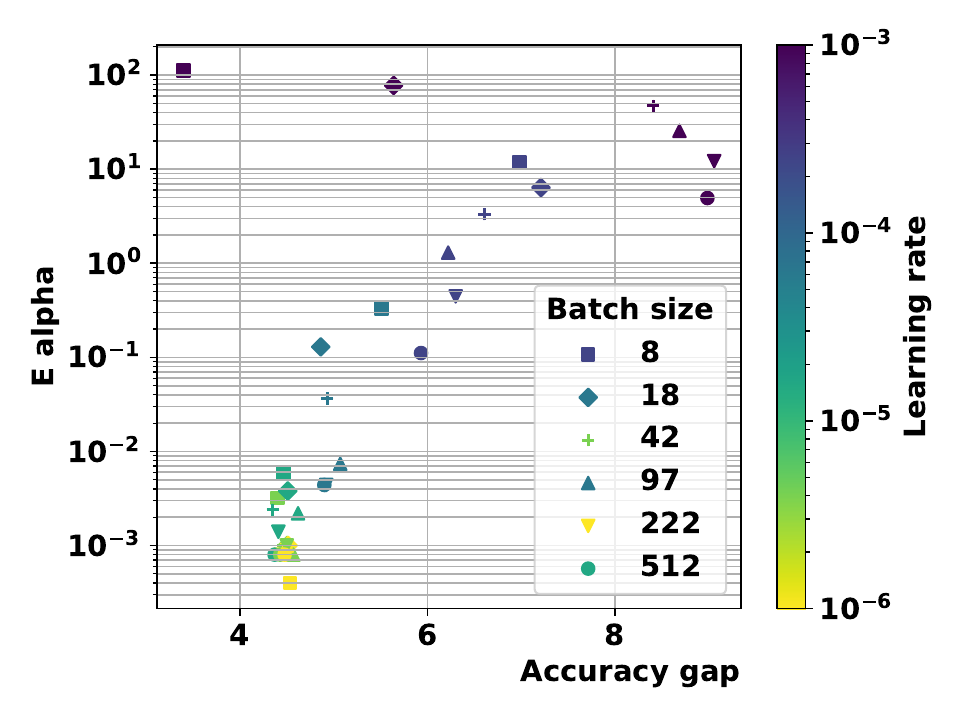}
        \caption{$\boldsymbol{E}_\alpha$}       
    \end{subfigure}%
    \hfill
    \begin{subfigure}[t]{0.33\textwidth}
        \centering
        \includegraphics[width=\linewidth]{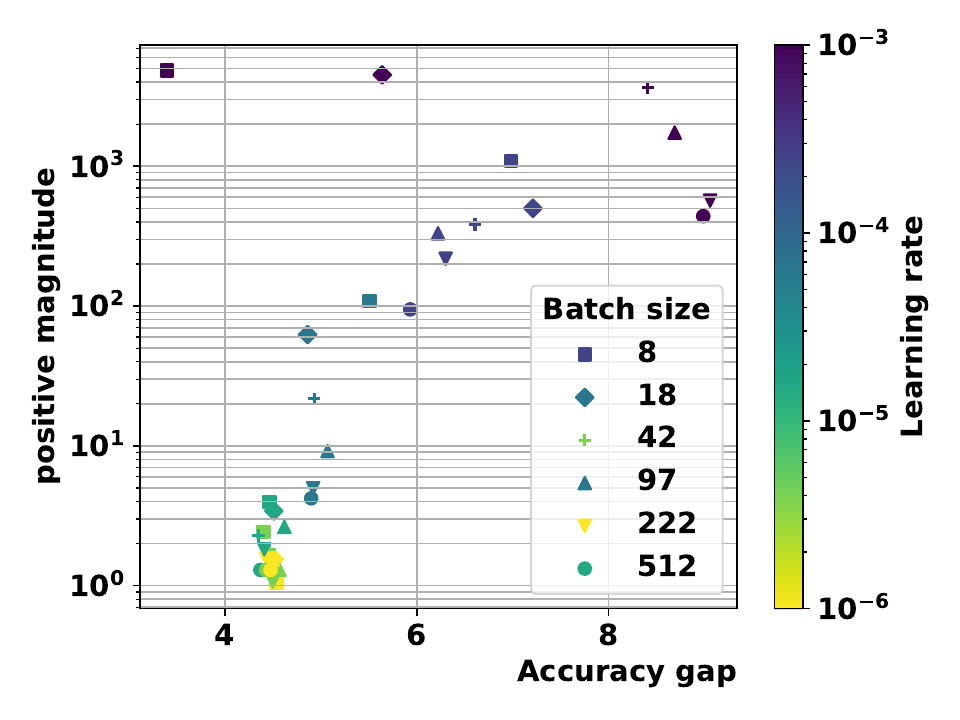}
        \caption{$\PMag(\sqrt{n})$}
    \end{subfigure}    
    \hfill
    \begin{subfigure}[t]{0.33\textwidth}
        \centering
        \includegraphics[width=\linewidth]{Figures/ben_figures/acc_gap_vit_cifar10/pseudo/ph_dim.pdf}
        \caption{$\dimph$}
    \end{subfigure}
    \caption{ViT on CIFAR$10$ with $01$-pseudometric, using the final accuracy gap as a generalization measure.}
    \label{fig:final_acc_vit_cifar10_pseudo}
\end{figure}

\subsubsection{Sensitivity to the scale parameter in magnitude experiments}
\label{sec:sensitivity_to_s}
As is explicitly shown by \Cref{thm:magnitude-bound-single-lambda}, using (positive) magnitude as a topological complexity requires choosing the scale parameter $s>0$. In our main experiments, we experimented with both $s=\sqrt{n}$ (justified to obtain the expected $1/\sqrt{n}$ in the generalization bound) and $s = 0.01$ (in order to compare with using a small value for $s$). We can see in \Cref{table:big-table} that both settings give relatively satisfactory results. Note that in our setting we have $\sqrt{n} \approx 223.6$.

We present in \Cref{fig:sensitivity-positive-magnitude-vit-cifar-10} the observed correlation between positive magnitude and generalization error for several intermediary values of $s$. This experiment was made with a ViT on the CIFAR10 dataset, using the ADAM optimizer. We observe a relative stability of the correlation $\Psi$ with respect to $s$. In this particular case, the correlation is extremely stable for higher values of $s$ while it displays more variability for smaller values of $s$. Further experiments would be necessary to understand whether this behavior is general and could then lead to the discovery of more stable magnitude-related complexities, which we leave for future work. 

\begin{figure}[tbh]
    \centering
    \includegraphics[width=0.7\linewidth]{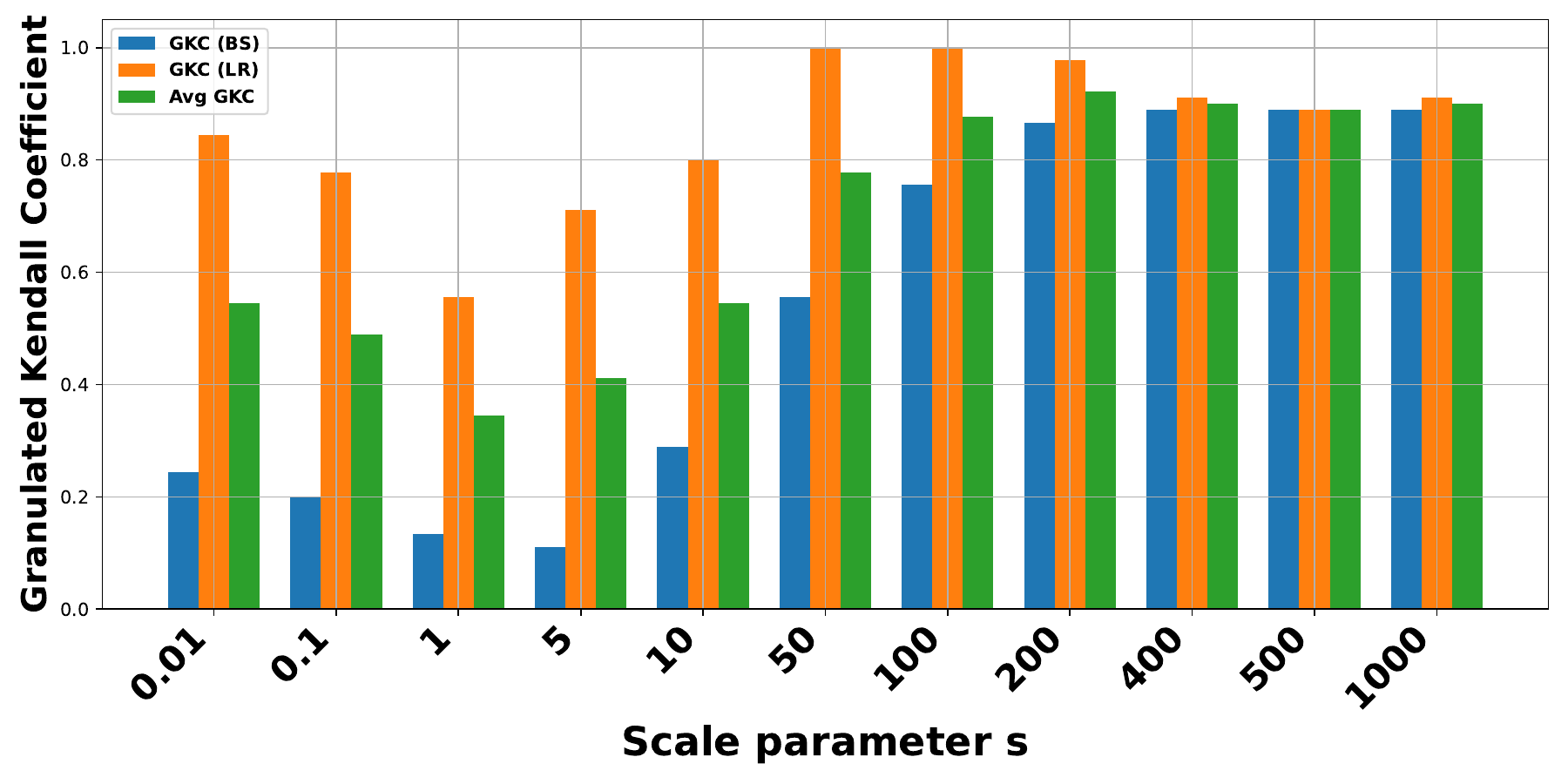}
    \caption{Sensitivity analysis of the scale parameter $s$ for positive magnitude $\PMag(s\mathcal{W}_{t_0 \to \tau})$. Experiment made with ViT on CIFAR10 and ADAM optimizer.}
    \label{fig:sensitivity-positive-magnitude-vit-cifar-10}
\end{figure}

\subsubsection{Comparison with gradient variance as a generalization measure}
\label{sec:comparison-with-gradient-variance}
In this short subsection, we investigate the performance comparison of our proposed topological complexities with the more widely used gradient variance, which appears for instance in \cite{jiang_fantastic_2019}.

In this experiment, conducted with a ViT on the CIFAR10 dataset and the ADAM optimizer, we observe very similar performance between $E_1$, $\PMag(\sqrt{n}\mathcal{W}_{t_0 \to \tau})$ and the gradient variance. Note that the fact that $E_1$ and $\PMag(\sqrt{n}\mathcal{W}_{t_0 \to \tau})$ yield similar correlation was already observed on \Cref{table:big-table}. This tends to suggest that these three complexity measures may be able to capture similar aspects of the geometry around a local minimum.

\begin{figure}[tbh]
    \centering
    \includegraphics[width=0.7\linewidth]{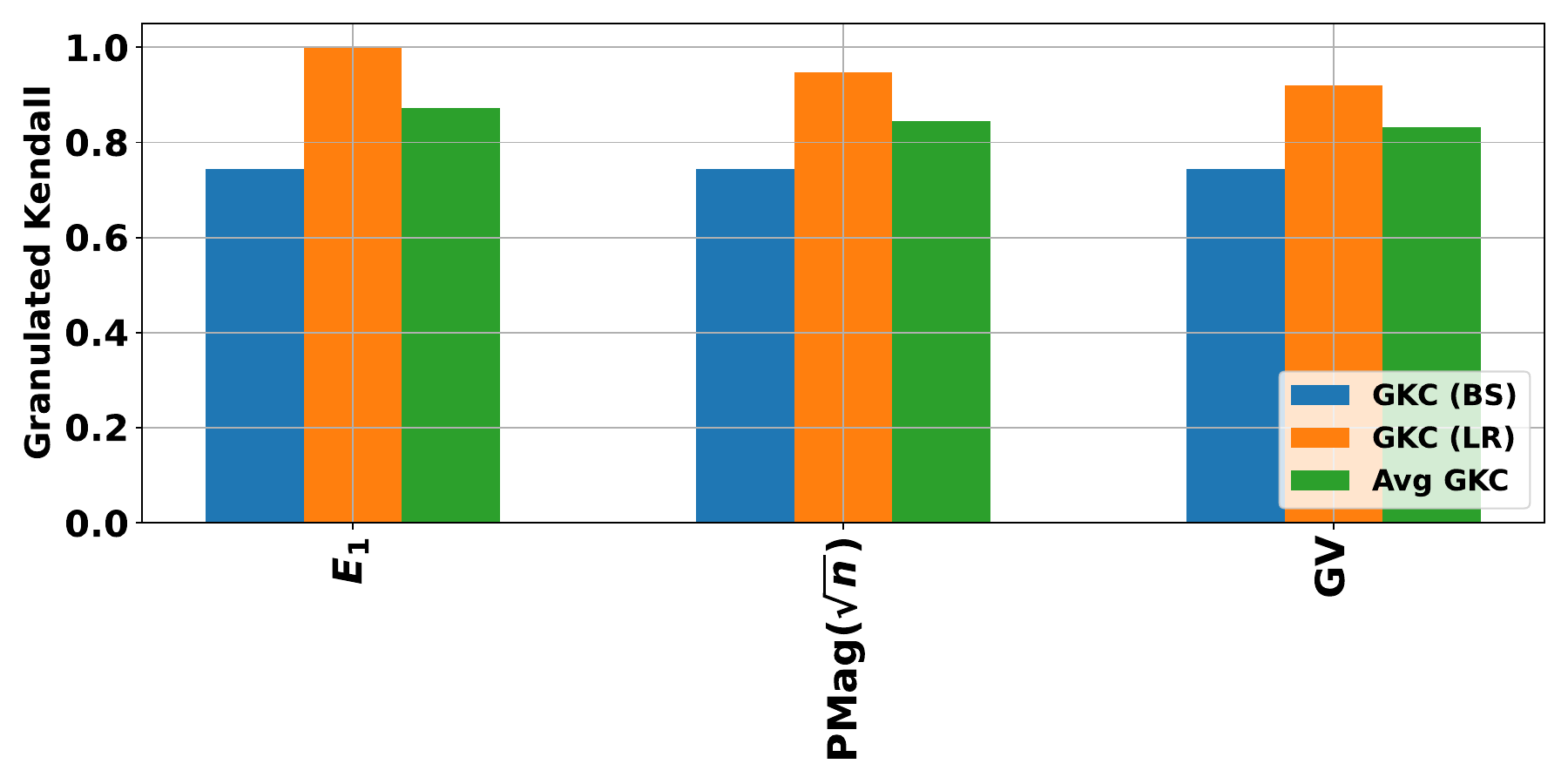}
    \caption{Comparison of the granulated Kendall coefficients of topological complexities vs gradient variance (GV). $\psi_{\mathrm{LR}}$ is denoted GKC (LR), $\psi_{\mathrm{BS}}$ is denoted GKC (BS) and the averaged coefficient $\Psi$ is denoted Avg GKC. Experiment made with ViT on CIFAR10 and ADAM optimizer.}
    \label{fig:gradient-variance-comparison-vit-cifar10}
\end{figure}

\begin{remark}
    It should be noted that the primary goal of the introduced topological complexity measure is not to outperform existing measures such as a gradient variance but rather to demonstrate the empirical importance of the topology of the trajectory for generalization error.
\end{remark}

\subsection{Vision Transformers - additional experiments}
\label{sec:vision-transformers-additional}

We compare the performance of the different metrics by using the granulated Kendall's coefficients introduced in \cite{jiang_fantastic_2019}. The experiments presented here use $3$ different Vision Transformers (ViT \cite{touvron2021training}, CaiT \cite{touvron2021going}, Swin \cite{liu2021swin}) on CIFAR$10$ and CIFAR$100$. As a baseline, we use the $\dimph$ introduced in \cite{birdal2021intrinsic} and the data-dependent dimension with the pseudometric $\dimph$ from \cite{dupuis2023generalization}.

Here we present the full results on each dataset and model. They can be found in Table  \ref{table:cait_cifar10} for CaiT and CIFAR10, \ref{table:swin_cifar10} for Swin and CIFAR10, \ref{table:vit_cifar100} for ViT and CIFAR100 and \ref{table:cait_cifar100} for CaiT and CIFAR100. The plots from each experiment for every computed quantity can be found in  (the remaining 3 quantities for ViT and CIFAR10).

\begin{figure}[tbh]
\centering
        \includegraphics[width=\linewidth]{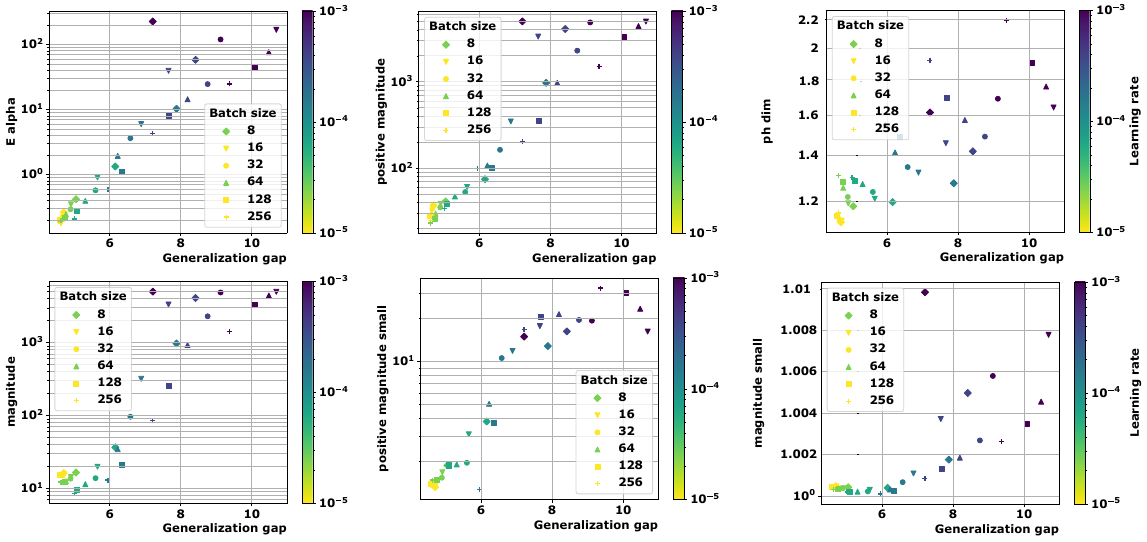}
      \caption{ViT on CIFAR10 with $\rho_S$}
      \label{fig:vit_cifar10_pseudo_all_final}
\end{figure}

\begin{figure}[tbh]
\centering
        \includegraphics[width=\linewidth]{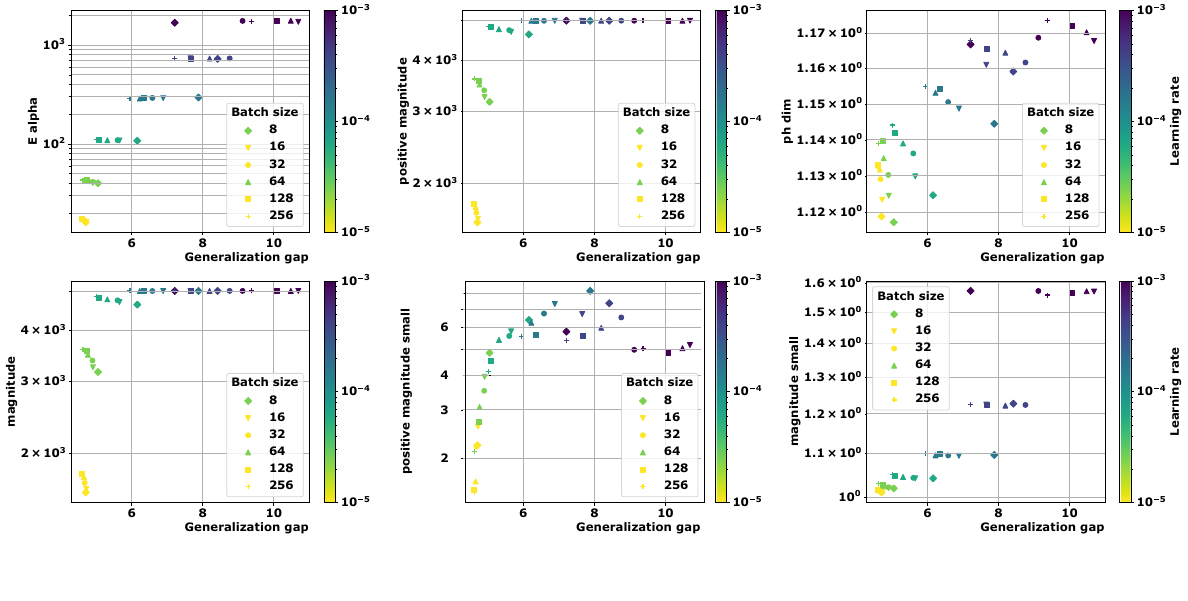}
      \caption{ViT on CIFAR10 with $\|\cdot\|_2$}
      \label{fig:vit_cifar10_euclidean_all_final}
\end{figure}

\begin{figure}[tbh]
\centering
        \includegraphics[width=\linewidth]{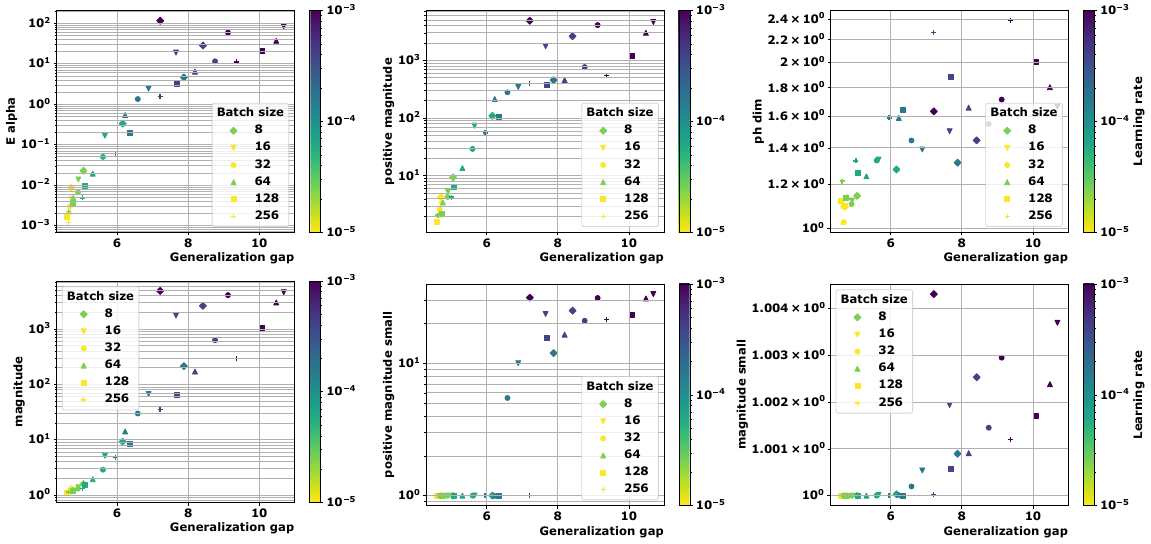}
      \caption{ViT on CIFAR10 with $01$-pseudometric}
      \label{fig:vit_cifar10_01_all_final}
\end{figure}

\begin{table}[!ht]
\caption{Correlation coefficients for all quantities for \textbf{ViT model} and \textbf{CIFAR100 dataset}. The corresponding plots are presented in Figures \ref{fig:vit_cifar100_pseudo_all_final}, Figure \ref{fig:vit_cifar100_euclidean_all_final} and Figure \ref{fig:vit_cifar100_01_all_final}. }
\label{table:vit_cifar100}
\vskip 0.15in
\begin{center}
\begin{small}
\begin{sc}
\tabcolsep=0.5cm
{%
\begin{tabular}{@{} lccccc @{}} 
\toprule
{Metric} & {Complexity}  & {$\boldsymbol{\psi}_{\text{lr}}$} & {$\boldsymbol{\psi}_{\text{bs}}$} & {$\boldsymbol{\Psi}$ } & {$\tau$} \\ 
\midrule
\multirow{ 6}{*}{$\rho_S$} & $\boldsymbol{E}_\alpha$ & 0.78 &  0.71 & \textbf{0.74} & 0.70 \\
 & $\Mag(\sqrt{n})$ & 0.78 &  0.71 & \textbf{0.74} & \textbf{0.72} \\
 & $\Mag(0.01)$ & 0.15 & 0.11  & 0.13 & 0.17 \\
 & $\PMag(\sqrt{n})$ & 0.78 & 0.71 & \textbf{0.74}  & \textbf{0.72} \\
 & $\PMag(0.01)$ & 0.60 & 0.62  & 0.61 & 0.56 \\
 & $\dimph$ \cite{dupuis2023generalization} & 0.77 & -0.71 & 0.03 & 0.36 \\
\midrule
\multirow{ 6}{*}{$\|\cdot\|_2$} & $\boldsymbol{E}_\alpha$ & 0.77 & 0.51 & 0.64 & \textbf{0.67}\\
 & $\Mag(0.01)$ \cite{andreeva_metric_2023}  & 0.77 & -0.69 & 0.04 & 0.50 \\
 & $\Mag(\sqrt{n})$ & 0.77 & -0.45 & 0.16 & 0.54 \\
 & $\PMag(0.01)$ & 0.82 & 0.53 & \textbf{0.68} & 0.66 \\
 & $\PMag(\sqrt{n})$ & 0.78 & -0.45 & 0.16 & 0.54 \\
 & $\dimph$ \cite{birdal2021intrinsic} & 0.77 & -0.71 & 0.03 & 0.37 \\
\midrule
\multirow{ 6}{*}{$01$} & $\boldsymbol{E}_\alpha$ & 0.77 &  0.71 & \textbf{0.74} & 0.70 \\
 & $\Mag(\sqrt{n})$ & 0.77 &  0.71 & \textbf{0.74} & \textbf{0.71} \\
 & $\Mag(0.01)$ & 0.68 & 0.51 & 0.59 & 0.59  \\
 & $\PMag(\sqrt{n})$ & 0.77 & 0.71  & \textbf{0.74}  & 0.70 \\
 & $\PMag(0.01)$ & 0.72 & 0.71  & 0.71 & 0.63 \\
 & $\dimph$ & 0.73 &  0.02 & 0.37 & 0.57 \\
\bottomrule
\end{tabular}
}
\end{sc}
\end{small}
\end{center}
\vskip -0.1in
\end{table}

\begin{figure}[tbh]
\centering
        \includegraphics[width=\linewidth]{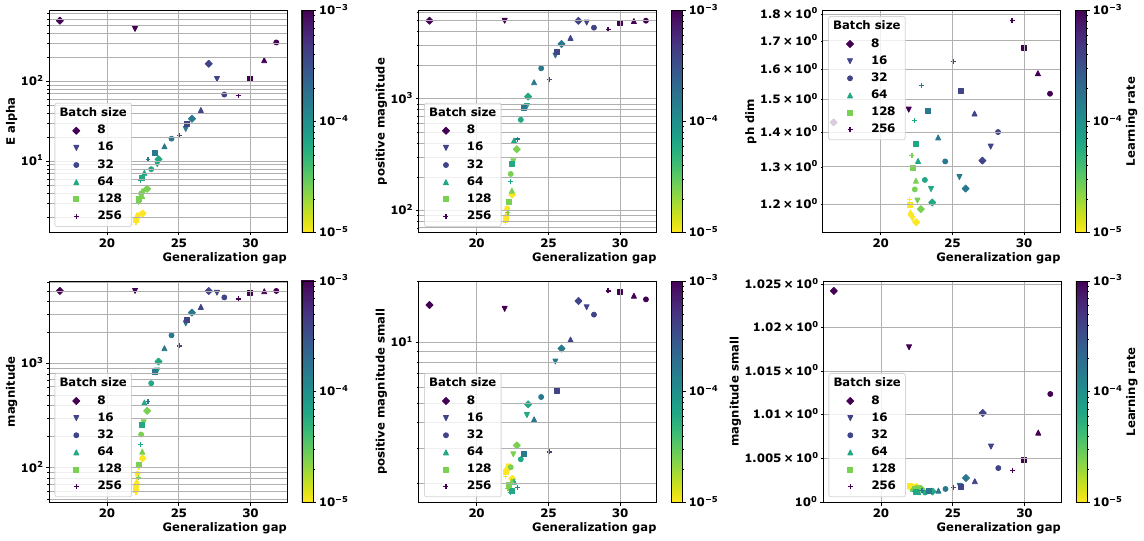}
      \caption{ViT on CIFAR100 with $\rho_S$}
      \label{fig:vit_cifar100_pseudo_all_final}
\end{figure}

\begin{figure}[tbh]
\centering
        \includegraphics[width=\linewidth]{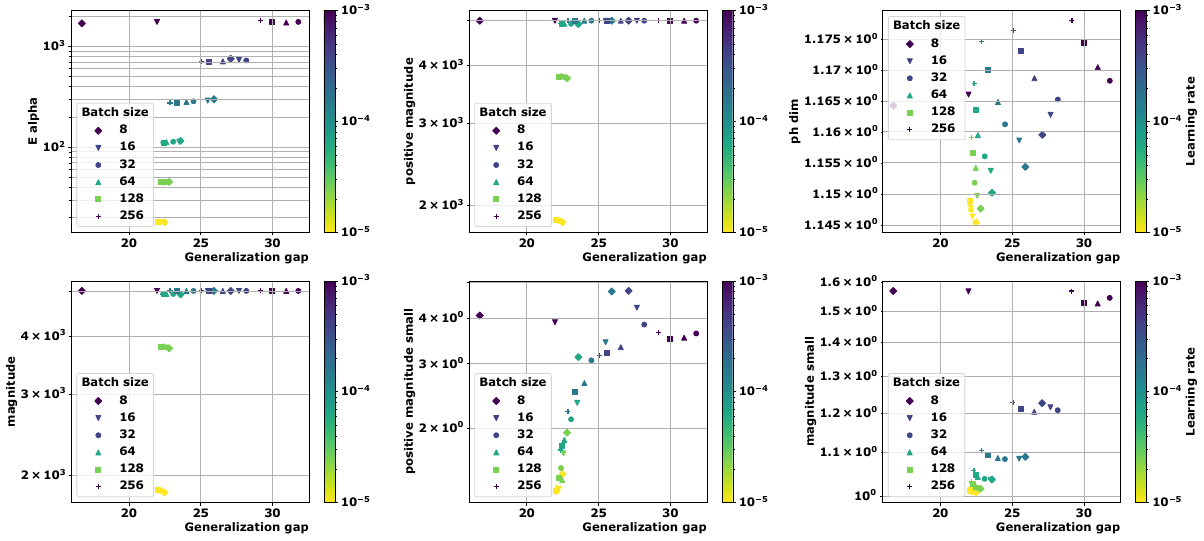}
      \caption{ViT on CIFAR100 with $\|\cdot\|_2$}
      \label{fig:vit_cifar100_euclidean_all_final}
\end{figure}

\begin{figure}[tbh]
\centering
        \includegraphics[width=\linewidth]{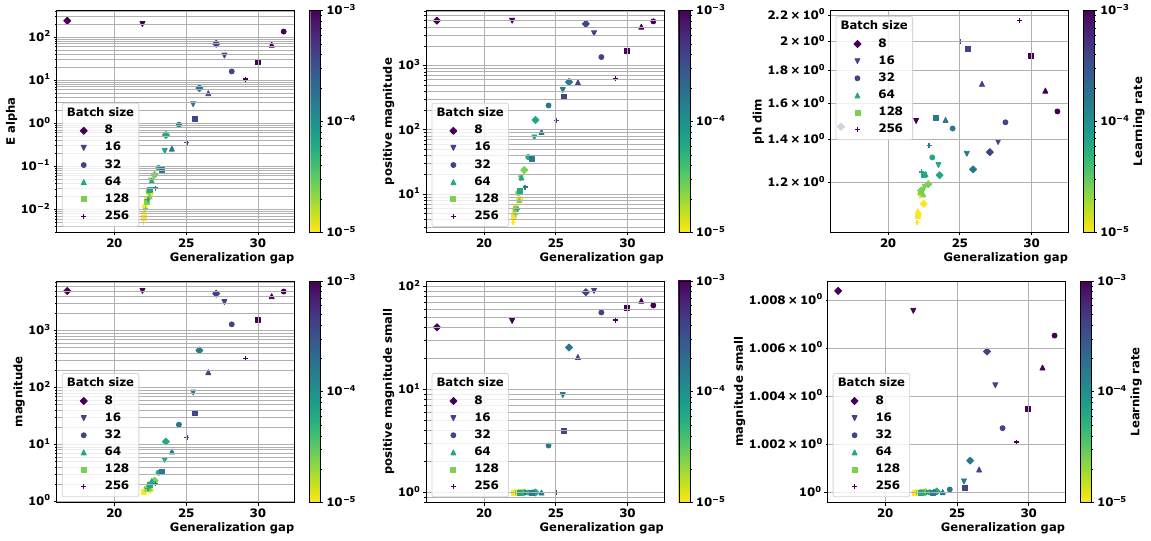}
      \caption{ViT on CIFAR100 with $01$-pseudometric}
      \label{fig:vit_cifar100_01_all_final}
\end{figure}

\begin{table}[!ht]
\caption{Correlation coefficients for all quantities for \textbf{CaiT model} and \textbf{CIFAR10 dataset}. The corresponding plots can be seen in Figures \ref{fig:cait_cifar10_pseudo_complete_final}, \ref{fig:cait_cifar10_euclidean_complete_final} and \ref{fig:cait_cifar10_01_complete_final}.
}
\label{table:cait_cifar10}
\vskip 0.15in
\begin{center}
\begin{small}
\begin{sc}
\tabcolsep=0.5cm
\begin{tabular}{@{} lccccc @{}} 
\toprule
{Metric} & {Complexity} & {$\boldsymbol{\psi}_{\text{lr}}$} & {$\boldsymbol{\psi}_{\text{bs}}$} & {$\boldsymbol{\Psi}$ } & {$\tau$} \\ 
\midrule
\multirow{ 6}{*}{$\rho_S$} & $\boldsymbol{E}_\alpha$ & 0.91 & 0.33 & \textbf{0.62} & \textbf{0.78} \\
 & $\Mag(\sqrt{n})$ & 0.91 & 0.33 & \textbf{0.62} & 0.75 \\
 & $\Mag(0.01)$ & 0.75  & 0.29 & 0.52 & 0.69 \\
 & $\PMag(\sqrt{n})$ & 0.91 & 0.33 & \textbf{0.62} & 0.75 \\
 & $\PMag(0.01)$ & 0.87 & 0.38 & \textbf{0.62} & 0.75 \\
 & $\dimph$ \cite{dupuis2023generalization} & 0.91 & -0.19 & 0.36 & 0.75  \\
\midrule
\multirow{ 6}{*}{$\|\cdot\|_2$}  & $\boldsymbol{E}_\alpha$ & 0.91 & 0.38 & \textbf{0.64} & \textbf{0.85} \\
 & $\Mag(\sqrt{n})$ & 0.89 & -0.42 & 0.23 & 0.73 \\
 & $\Mag(0.01)$ \cite{andreeva_metric_2023} & 0.91 & -0.15 & 0.37 & 0.77 \\
 & $\PMag(\sqrt{n})$ & 0.89 & -0.42 & 0.23 & 0.73 \\
 & $\PMag(0.01)$ & 0.53 & 0.26 & 0.4 & 0.48 \\
 & $\dimph$ \cite{birdal2021intrinsic} & 0.91 & -0.31 & 0.30 & 0.67 \\
\midrule
\multirow{ 6}{*}{$01$} & $\boldsymbol{E}_\alpha$ & 0.91 & 0.33 & 0.62 & \textbf{0.84}\\
 & $\Mag(\sqrt{n})$ & 0.91 & 0.33 & 0.62& 0.77 \\
& $\Mag(0.01)$  & 0.86 & 0.33 & 0.60 & 0.76 \\
 & $\PMag(\sqrt{n})$ & 0.91 & 0.33 & 0.62 & 0.79 \\
 & $\PMag(0.01)$ & 0.88 & 0.44 & \textbf{0.66} & 0.71 \\
 & $\dimph$ & 0.91 & -0.13 & 0.39 & 0.78 \\

\bottomrule
\end{tabular}
\end{sc}
\end{small}
\end{center}
\vskip -0.1in
\end{table}

\begin{figure}[tbh]
\centering
        \includegraphics[width=\linewidth]{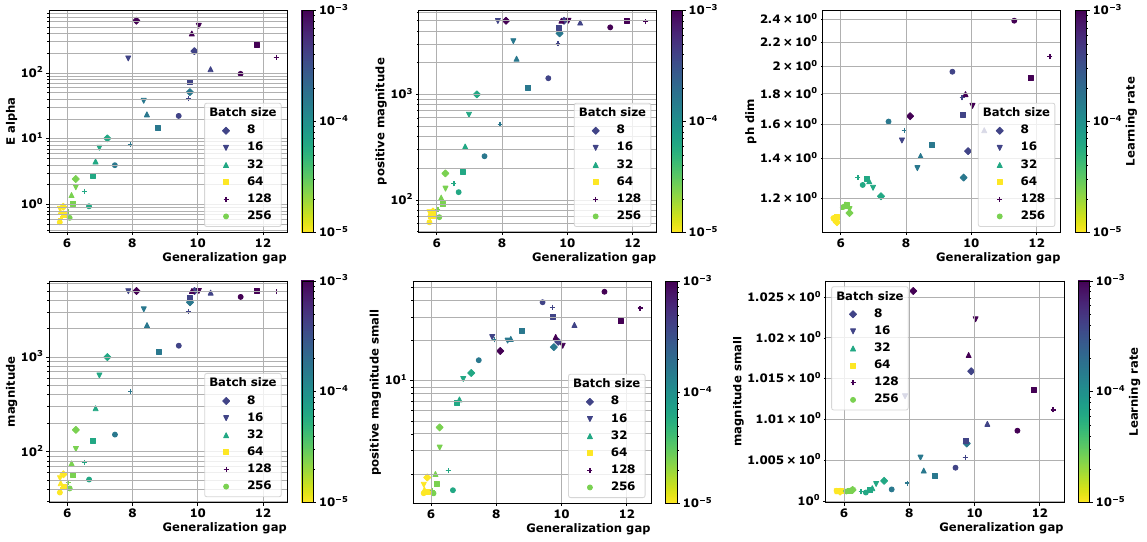}
      \caption{CaiT on CIFAR$10$ with $\rho_S$-pseudometric.}
      \label{fig:cait_cifar10_pseudo_complete_final}
\end{figure}

\begin{figure}[tbh]
\centering
        \includegraphics[width=\linewidth]{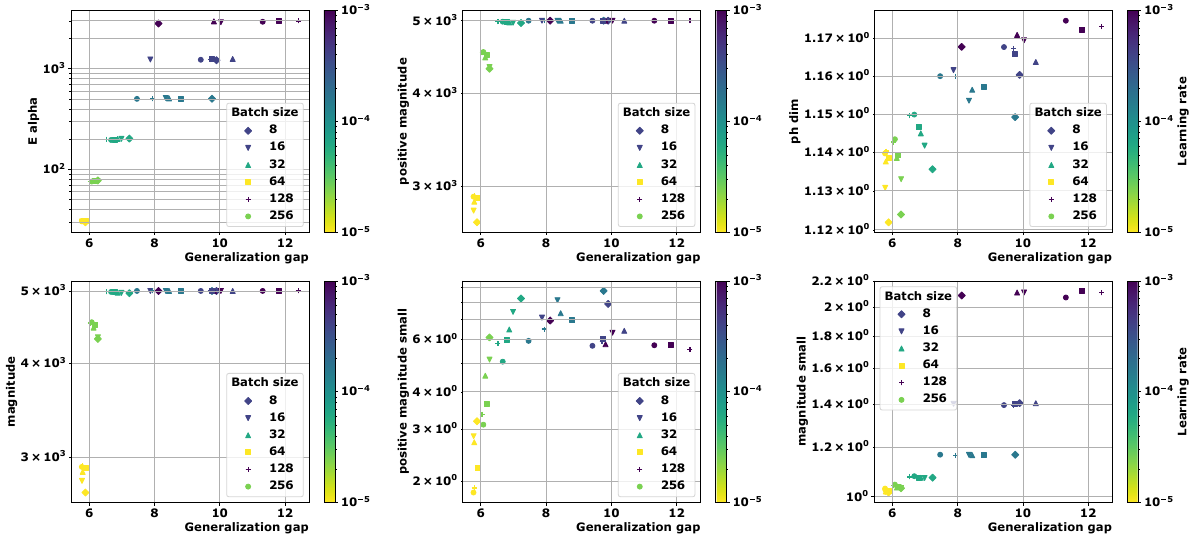}
      \caption{CaiT on CIFAR$10$ with $\|\cdot\|_2$ distance.}
      \label{fig:cait_cifar10_euclidean_complete_final}
\end{figure}

\begin{figure}[tbh]
\centering
        \includegraphics[width=\linewidth]{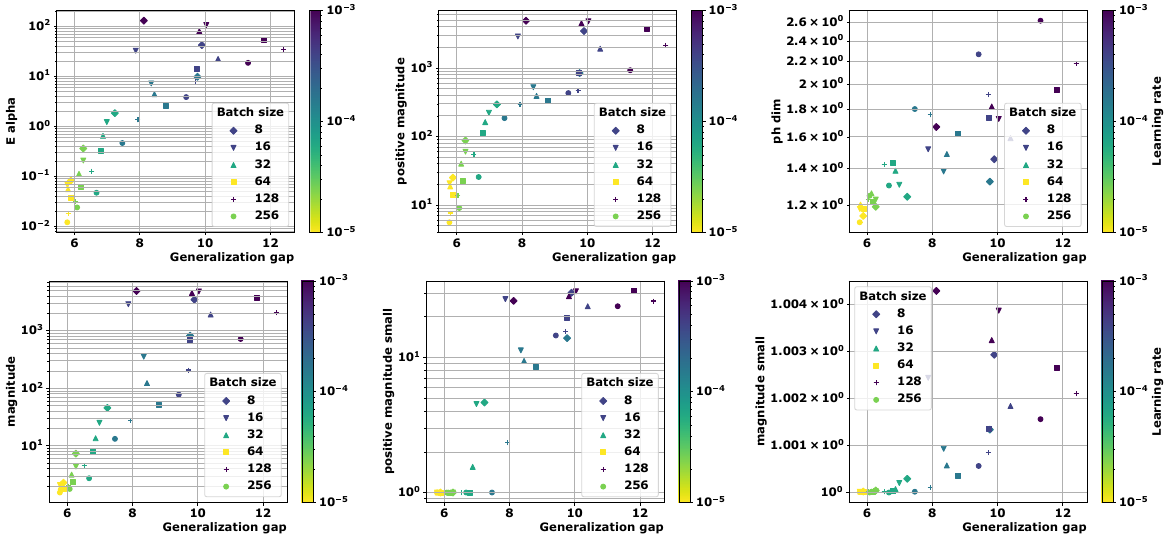}
      \caption{CaiT on CIFAR$10$ with $01$-pseudometric.}
      \label{fig:cait_cifar10_01_complete_final}
\end{figure}

\begin{table}[!ht]
\caption{Correlation coefficients for all quantities for \textbf{CaiT model} and \textbf{CIFAR100 dataset}. The corresponding plots can be seen in \ref{fig:cait_cifar100_pseudo_complete_final}, \ref{fig:cait_cifar100_euclidean_complete_final} and \ref{fig:cait_cifar100_01_complete_final}}
\label{table:cait_cifar100}
\vskip 0.15in
\begin{center}
\begin{small}
\begin{sc}
\tabcolsep=0.5cm
\begin{tabular}{@{} lccccc @{}} 
\toprule
 {Metric} & {Complexity} & {$\boldsymbol{\psi}_{\text{lr}}$} & {$\boldsymbol{\psi}_{\text{bs}}$} & {$\Psi$ } & {$\tau$} \\ 
\midrule
\multirow{ 6}{*}{$\rho_S$} & $E_{\alpha}$ & 0.67 & 0.13 & 0.40 & 0.54 \\
& $\Mag(\sqrt{n})$  & 0.67 & 0.13 & 0.40 & 0.52\\
 & $\Mag(0.01)$ & 0.47 & -0.18 & 0.14 & 0.36 \\
 & $\PMag(\sqrt{n})$ & 0.67 & 0.13 & 0.40 & 0.53 \\
& $\PMag(0.01)$  & 0.76 & 0.53 & \textbf{0.64} & \textbf{0.71} \\
& $\dimph$ \cite{dupuis2023generalization} & 0.67 & -0.13 & 0.27 & 0.56 \\
\midrule
\multirow{ 6}{*}{$\|\cdot\|_2$} & $E_{\alpha}$ & 0.67 & 0.40 & \textbf{0.53} & 0.64  \\
 & $\Mag(\sqrt{n})$ & 0.68  & 0.33 & 0.50 & \textbf{0.65} \\ 
& $\Mag(0.01)$ \cite{andreeva_metric_2023} & 0.66 & -0.33 & 0.17 & 0.54 \\ 
 & $\PMag(\sqrt{n})$ & 0.68 & 0.33 & 0.50 & \textbf{0.65} \\ 
 & $\PMag(0.01)$ & 0.62 & 0.09 & 0.36 & 0.43 \\ 
& $\dimph$ \cite{birdal2021intrinsic} & 0.64 & -0.09 & 0.28 & 0.50  \\ 
\midrule
\multirow{ 6}{*}{$01$} & $E_{\alpha}$ & 0.67 & 0.13 & 0.40 & 0.52 \\
 & $\Mag(\sqrt{n})$ & 0.67 & 0.13 & 0.40 & \textbf{0.57} \\
 & $\Mag(0.01)$ & 0.61 & 0.18 & 0.40 & 0.43 \\
 & $\PMag(\sqrt{n})$ & 0.67 & 0.11 & 0.39 & 0.53 \\
& $\PMag(0.01)$  & 0.65 & 0.41 & \textbf{0.53} & 0.48 \\
& $01$ loss & 0.58 & 0.07 & 0.32 & \textbf{0.57} \\

\bottomrule
\end{tabular}
\end{sc}
\end{small}
\end{center}
\vskip -0.1in
\end{table}

\begin{figure}[tbh]
\centering
        \includegraphics[width=\linewidth]{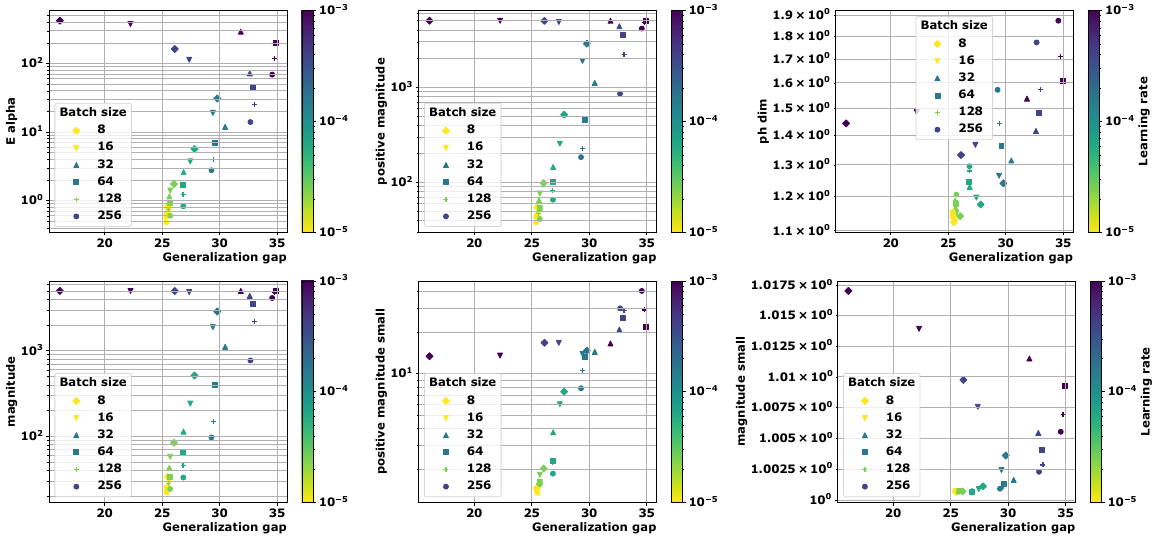}
      \caption{CaiT on CIFAR$100$ with $\rho_S$-pseudometric.}
      \label{fig:cait_cifar100_pseudo_complete_final}
\end{figure}

\begin{figure}[tbh]
\centering
        \includegraphics[width=\linewidth]{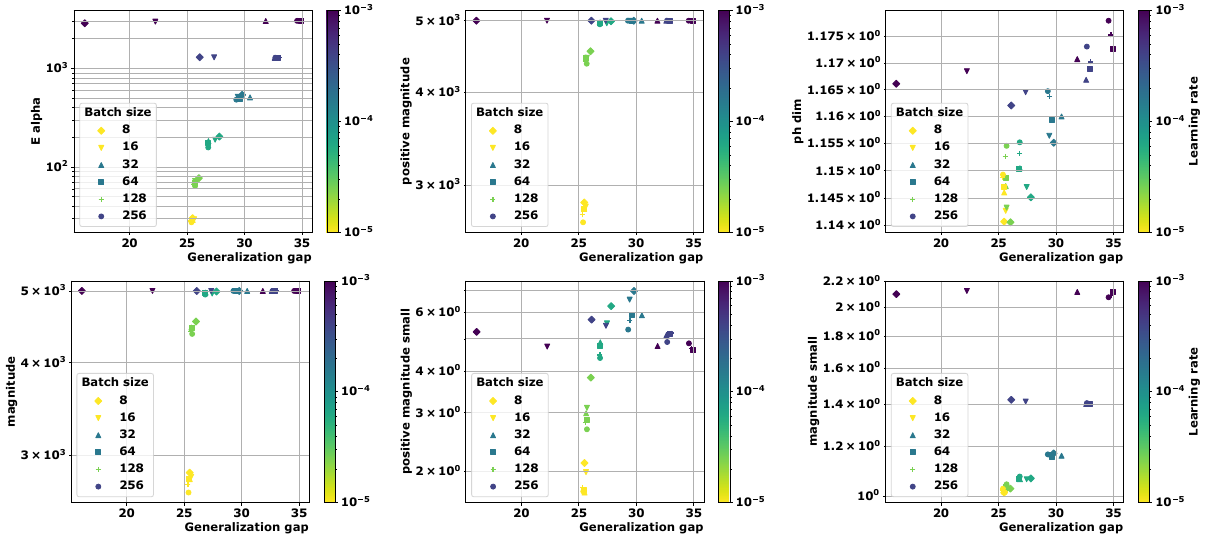}
      \caption{CaiT on CIFAR$100$ with $\|\cdot\|_2$.}
      \label{fig:cait_cifar100_euclidean_complete_final}
\end{figure}

\begin{figure}[tbh]
\centering
        \includegraphics[width=\linewidth]{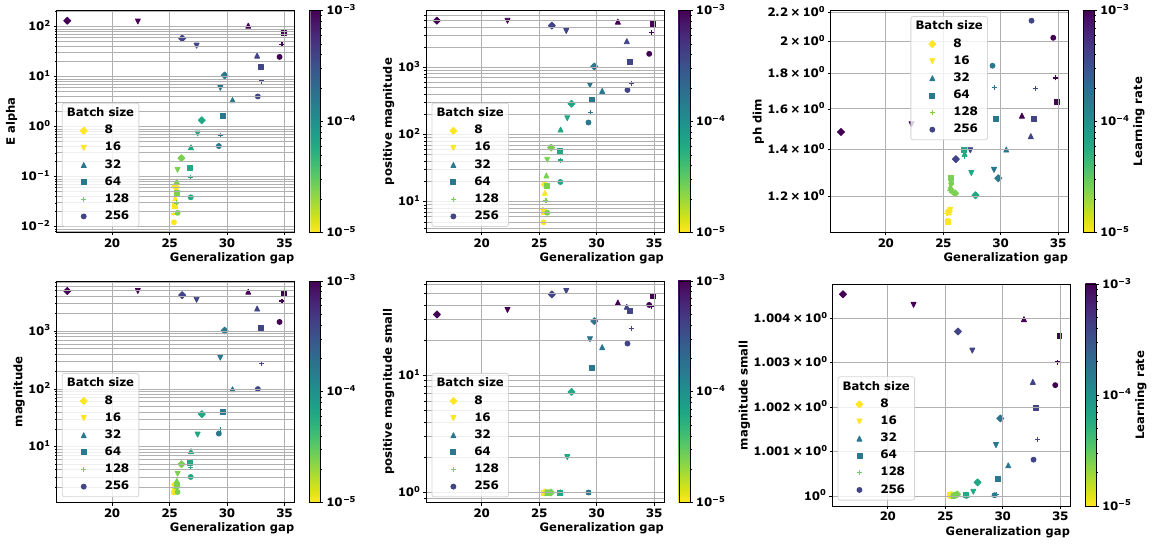}
      \caption{CaiT on CIFAR$100$ with $01$-pseudometric.}
      \label{fig:cait_cifar100_01_complete_final}
\end{figure}

\begin{table}[!ht]
\caption{Correlation coefficients for all quantities for \textbf{Swin model} and \textbf{CIFAR10}. The corresponding plots are in Figure \ref{fig:swin_cifar10_pseudo_complete_final}, \ref{fig:swin_cifar10_euclidean_complete_final} and \ref{fig:swin_cifar10_01_complete_final}.}
\label{table:swin_cifar10}
\vskip 0.15in
\begin{center}
\begin{small}
\begin{sc}
\begin{tabular}{@{} lccccc @{}} 
\toprule
{Metric} & {Complexity} & {$\boldsymbol{\psi}_{\text{lr}}$} & {$\boldsymbol{\psi}_{\text{bs}}$} & {$\boldsymbol{\Psi}$ } & {$\tau$} \\ 
\midrule
\multirow{ 6}{*}{$\rho_S$} & $E_{\alpha}$ & 0.97 & 0.58 & \textbf{0.77} & 0.86 \\
& $\Mag(\sqrt{n})$  & 0.97 & 0.57 & \textbf{0.77} & 0.84\\
 & $\Mag(0.01)$ & 0.87  & 0.58 & 0.72 & 0.75 \\
 & $\PMag(\sqrt{n})$ & 0.98 & 0.55  & \textbf{0.77} & \textbf{0.87} \\
 & $\PMag(0.01)$ & 0.76 &  0.20 & 0.48 & 0.65  \\
& $\dimph$ \cite{dupuis2023generalization} & 0.97 & -0.57 & 0.19 & 0.67 \\
\midrule
\multirow{ 6}{*}{$\|\cdot\|_2$} & $E_{\alpha}$ & 0.97 & -0.04 & 0.46 & \textbf{0.84} \\
 & $\Mag(\sqrt{n})$ & 0.97 & -0.43 & 0.27 & 0.77 \\
& $\Mag(0.01)$ \cite{andreeva_metric_2023} & 0.98 & -0.22 & 0.38 & 0.80 \\
 & $\PMag(\sqrt{n})$ & 0.98 & -0.43 & 0.27 & 0.77 \\
 & $\PMag(0.01)$ & 0.51 & 0.53 & \textbf{0.52} & 0.47 \\
& $\dimph$ \cite{birdal2021intrinsic} & 0.95 & -0.57 & 0.18 & 0.69 \\
\midrule
\multirow{ 6}{*}{$01$} & $E_{\alpha}$ & 0.97 & 0.58 & \textbf{0.77} & 0.84 \\
 & $\Mag(\sqrt{n})$ & 0.97 & 0.58 & 0.77 & \textbf{0.86} \\
& $\Mag(0.01)$  & 0.94 & 0.48 & 0.71 & 0.79 \\
 & $\PMag(\sqrt{n})$ & 0.98 & 0.58 & 0.78 & 0.87 \\
 & $\PMag(0.01)$ & 0.92 & 0.42 & 0.67 & 0.78 \\
 & $\dimph$ & 0.93 & -0.28 & 0.32 & 0.69 \\
\bottomrule
\end{tabular}
\end{sc}
\end{small}
\end{center}
\vskip -0.1in
\end{table}

\begin{figure}[tbh]
\centering
        \includegraphics[width=\linewidth]{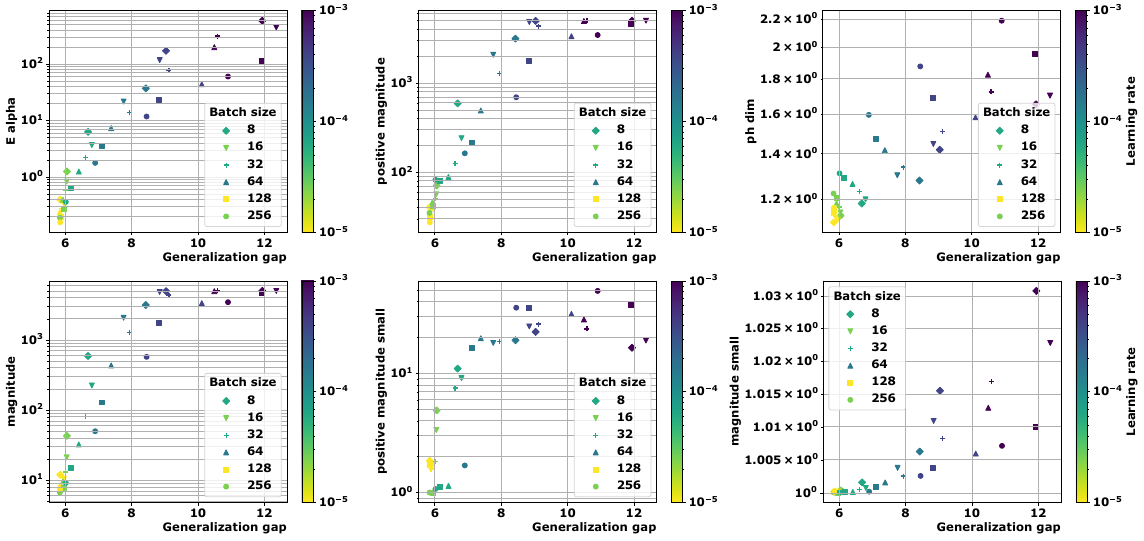}
      \caption{Swin on CIFAR$10$ with $\rho_S$-pseudometric.}
      \label{fig:swin_cifar10_pseudo_complete_final}
\end{figure}

\begin{figure}[tbh]
\centering
        \includegraphics[width=\linewidth]{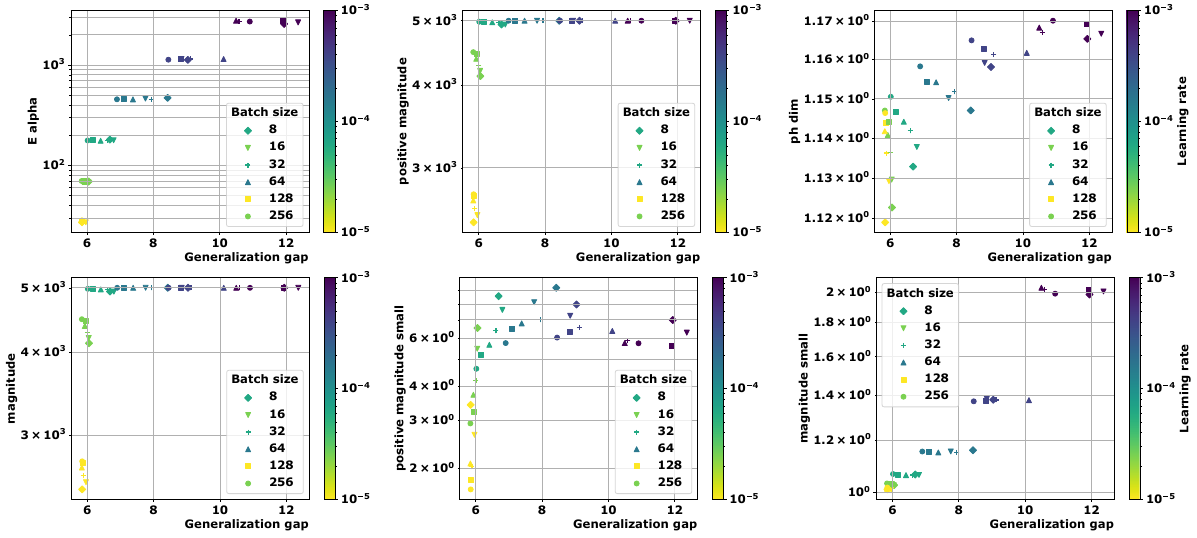}
      \caption{Swin on CIFAR$10$ with $\|\cdot\|_2$.}
    \label{fig:swin_cifar10_euclidean_complete_final}
\end{figure}

\begin{figure}[tbh]
\centering
        \includegraphics[width=\linewidth]{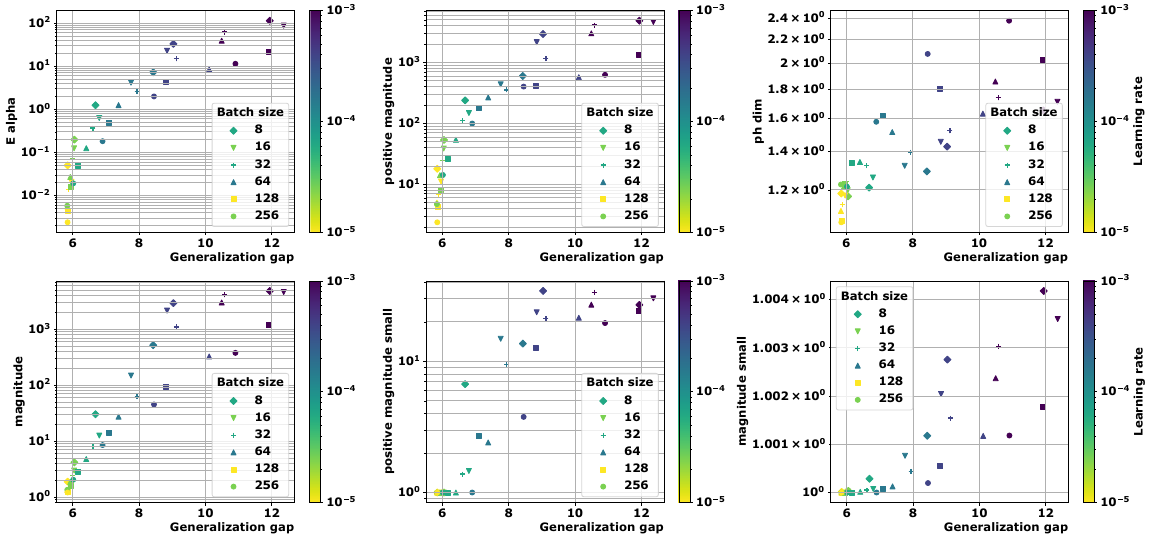}
      \caption{Swin on CIFAR$10$ with $01$-pseudometric.}
      \label{fig:swin_cifar10_01_complete_final}
\end{figure}

\begin{table}[!ht]
\caption{Correlation coefficients for all quantities for \textbf{Swin model} and \textbf{CIFAR100}. See Figures \ref{fig:swin_cifar100_pseudo_complete_final}, \ref{fig:swin_cifar100_euclidean_complete_final} and \ref{fig:swin_cifar100_01_complete_final} for the corresponding plots.}
\label{table:swin_cifar100}
\vskip 0.15in
\begin{center}
\begin{small}
\begin{sc}
\begin{tabular}{@{} lccccc @{}} 
\toprule
{Metric} & {Complexity} & {$\boldsymbol{\psi}_{\text{lr}}$} & {$\boldsymbol{\psi}_{\text{bs}}$} & {$\boldsymbol{\Psi}$ } & {$\tau$}  \\ 
\midrule
\multirow{ 6}{*}{$\rho_S$} & $E_{\alpha}$ & 0.69 & 0.47 & 0.58  & 0.62 \\
&  $\Mag(\sqrt{n})$ & 0.56 & 0.47 & 0.51 & 0.51\\
&  $\Mag(0.01)$ & 0.31 & 0.47 & 0.39  & 0.33 \\
& $\PMag(\sqrt{n})$   & 0.69 & 0.47 & 0.58 & 0.63 \\
& $\PMag(0.01)$ & 0.71 & 0.58 & \textbf{0.64} & \textbf{0.68}  \\
&  $\dimph$ \cite{dupuis2023generalization}  & 0.69  & -0.47 & 0.11 & 0.50 \\
\midrule
\multirow{ 6}{*}{$\|\cdot\|_2$} & $E_{\alpha}$ & 0.69 & 0.22 & 0.46 & 0.63 \\
 & $\Mag(\sqrt{n})$ & 0.71 & -0.57 & 0.07 & 0.53 \\
 & $\Mag(0.01)$ \cite{andreeva_metric_2023} & 0.69 & -0.44 & 0.12  & 0.53 \\
& $\PMag(\sqrt{n})$  & 0.71 & -0.57 & 0.07 & 0.53 \\
& $\PMag(0.01)$  & 0.64 & 0.51 & \textbf{0.58} & 0.46 \\
& $\dimph$ \cite{birdal2021intrinsic} & 0.69 & -0.47 & 0.11 & 0.45 \\
\midrule
\multirow{ 6}{*}{$01$} & $E_{\alpha}$ & 0.69  & 0.47 & \textbf{0.58} & 0.61 \\
 & $\Mag(\sqrt{n})$ & 0.69 & 0.47 & \textbf{0.58} & \textbf{0.62} \\
& $\Mag(0.01)$  & 0.61 & 0.27 & 0.44 & 0.50 \\
 & $\PMag(\sqrt{n})$ & 0.69 & 0.47 & \textbf{0.58} & \textbf{0.62} \\
& $\PMag(0.01)$  & 0.65 & 0.49 & 0.57 & 0.54 \\
 & $\dimph$  & 0.64 & 0.04 & 0.34 & 0.51 \\

\bottomrule
\end{tabular}
\end{sc}
\end{small}
\end{center}
\vskip -0.1in
\end{table}

\begin{figure}[tbh]
\centering
        \includegraphics[width=\linewidth]{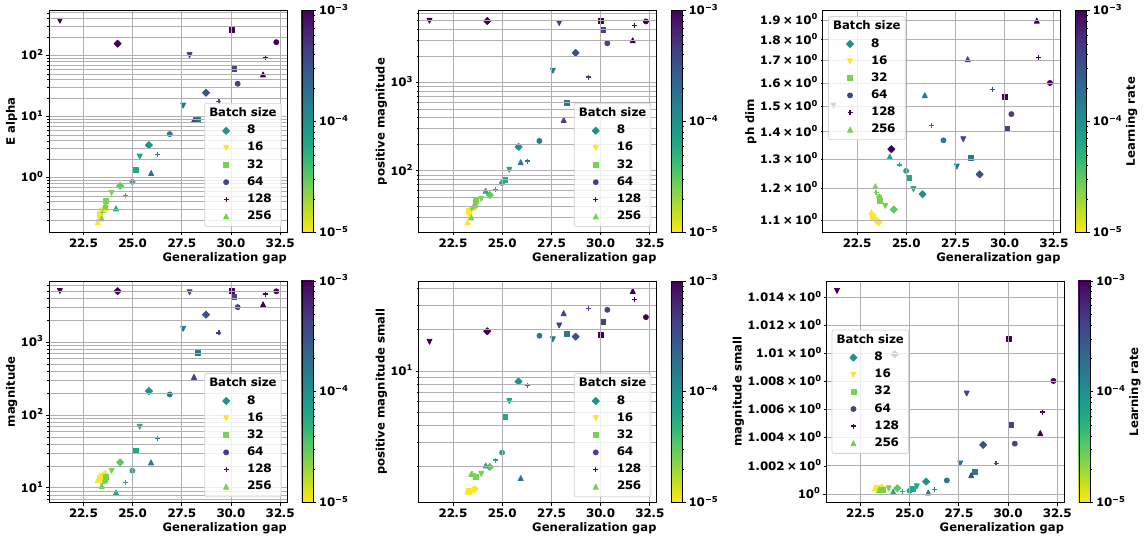}
      \caption{Swin on CIFAR$100$ with $\rho_S$-pseudometric.}
      \label{fig:swin_cifar100_pseudo_complete_final}
\end{figure}

\begin{figure}[tbh]
\centering
        \includegraphics[width=\linewidth]{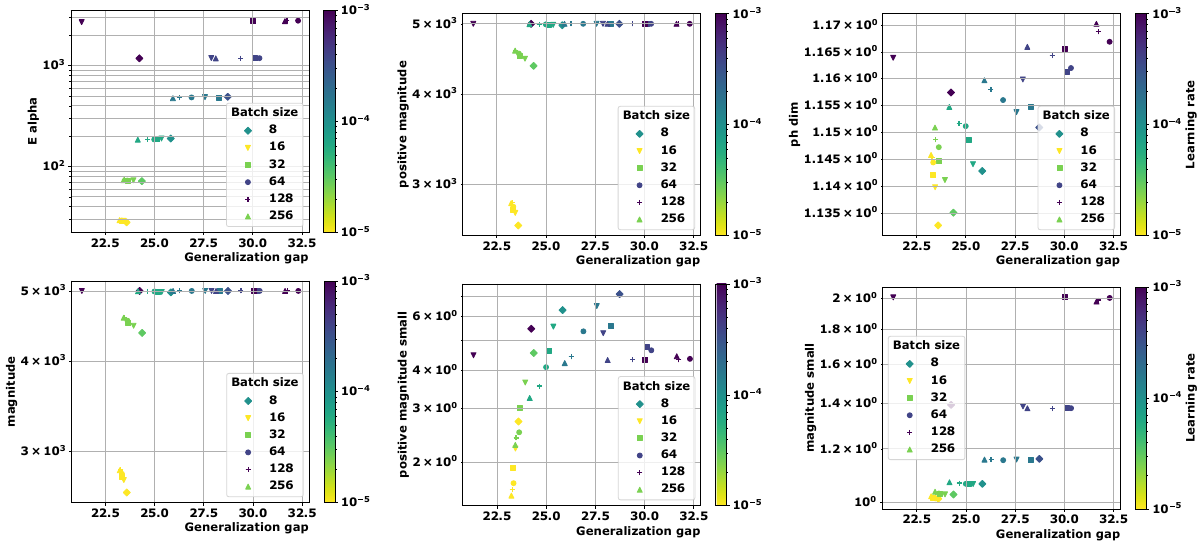}
      \caption{Swin on CIFAR$100$ with $\|\cdot\|_2$.}
      \label{fig:swin_cifar100_euclidean_complete_final}
\end{figure}

\begin{figure}[tbh]
\centering
        \includegraphics[width=\linewidth]{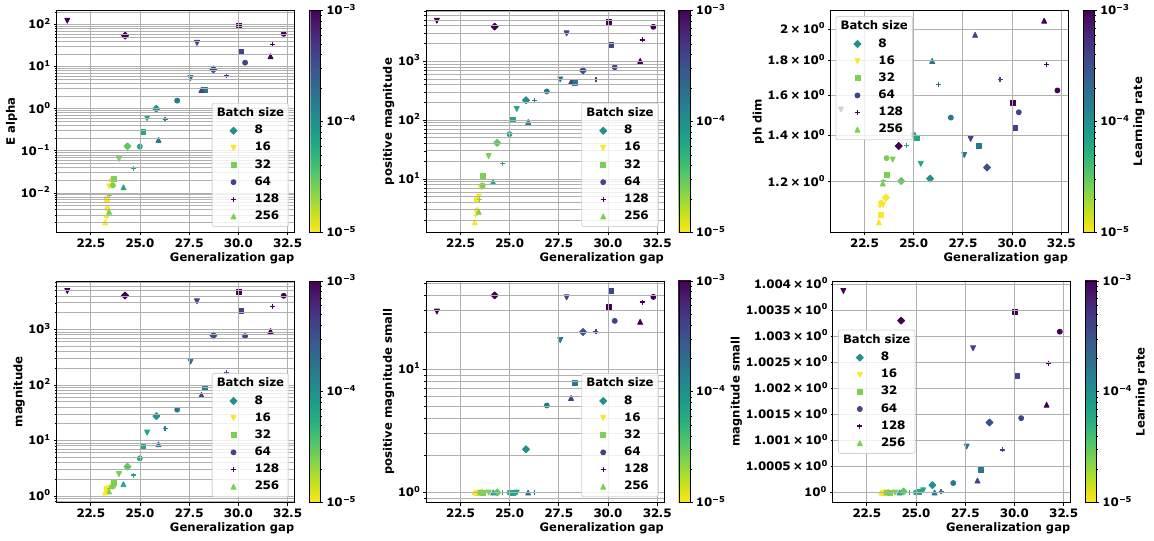}
      \caption{Swin on CIFAR$100$ with $01$.}
      \label{fig:swin_cifar100_01_complete_final}
\end{figure}

\clearpage
\subsection{Graph Neural Networks -- Additional Experiments}
\label{sec:gnn-additional}

In Table \ref{table:big-table}, we already presented the correlation coefficients for all quantities for the GNN models considered in our study (GraphSage, GatedGCN) \cite{dwivedi2023benchmarking} (we have selected the models which achieve 100\% training accuracy)) and Graph-MNIST. We can observe a nice correlation, outperforing dim-PH in most experiments. As it was observed for the transformer-based experiments, the correlation seems to be better for the data-dependent-metrics. This is an important fact, as no sparse random projection was used to compute the Euclidean distance matrices in the GNN experiments (it was not necessary as these models have less parameters than the tramsformers considered above). This shows that the fact the data-dependent pseudometrics outperform the Euclidean distance also happens in the absence of these projections. It also shows that all quantities seem to yield better correlations in the absence of random projections, at least in the GNN expsriments.

The corresponding plots for GatedGCN can be seen in Figure \ref{fig:gatedgcn_mnist_pseudo_complete_final} with the pseudometric, Figure \ref{fig:gatedgcn_mnist_euclidean_complete_final} for the Euclidean and \ref{fig:gatedgcn_mnist_01_complete_final} for $01$. 
The plots for GraphSage are reported in Figure \ref{fig:graphsage_mnist_pseudo_complete_final}, Figure \ref{fig:graphsage_mnist_euclidean_complete_final} and Figure \ref{fig:graphsage_mnist_01_complete_final}. 

We can observe a strong correlation on these figures, outperforing dim-PH in most cases. As it was observed for the transformer-based experiments, the correlation seems to be better for the data-dependent-metrics. This is an important fact, as no sparse random projection was used to compute the Euclidean distance matrices in the GNN experiments\footnote{A sparse random projection was not necessary as these models have less parameters than the tramsformers considered above}. This shows that data-dependent pseudometrics outperform the Euclidean distance also in the absence of these projections. In addition, all quantities seem to yield better correlations in the absence of random projections, at least in the GNN expsriments.

Interestingly, a few failure cases can be seen on these plots. Indeed, $\Mag(0.01)$ and $\PMag(0.01)$ seem to be almost constant and near $1$. This indicates that the scale choice $s=0.01$ was not suited for these experiments; this behavior was already reflected in Table \ref{table:big-table} through very low Kendall's coefficients, indicating the absence of meaningful correlation. However, $\Mag(\sqrt{n})$ and $\PMag(\sqrt{n})$ provide significantly better correlation, which supports our main claims, as $s=\sqrt{n}$ has been argued in \cref{sec:magnitude-bounds} to be a particulary relevant choice of scale factor.

Note finally that the PH-dim plots for the $01$-pseudometric failed to produce numbers in these graphs experiments (this is why they are either missing or look irrelevant). As before, we gave away this fact in Table \ref{table:big-table} by imposing our granulated Kendall's coefficients implementation to return zeros in the absence of correlation, hence the small numbers observed in this case. That being said, this behavior should not be seen as an issue. Indeed, PH-dim with $01$-pseudometric consists (in theory) in estimating the dimension of a subset of a discrete hypercube, which is always $0$. The reason we still reported PH-dim for this pseudometric is for consistence and to test the implementation of \cite{birdal2021intrinsic,dupuis2023generalization} in this non-standard setting; it is however not theoretically grounded.

\begin{figure}[tbh]
\centering
        \includegraphics[width=\linewidth]{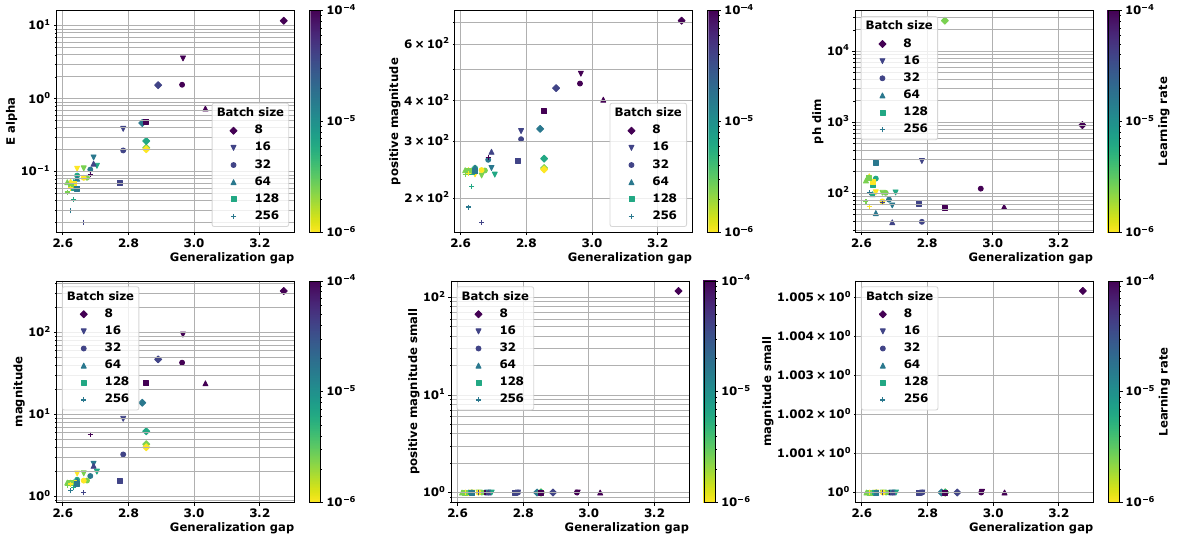}
      \caption{GraphSage on MNIST with $\rho_S$-pseudometric.}
      \label{fig:graphsage_mnist_pseudo_complete_final}
\end{figure}

\begin{figure}[tbh]
\centering
        \includegraphics[width=\linewidth]{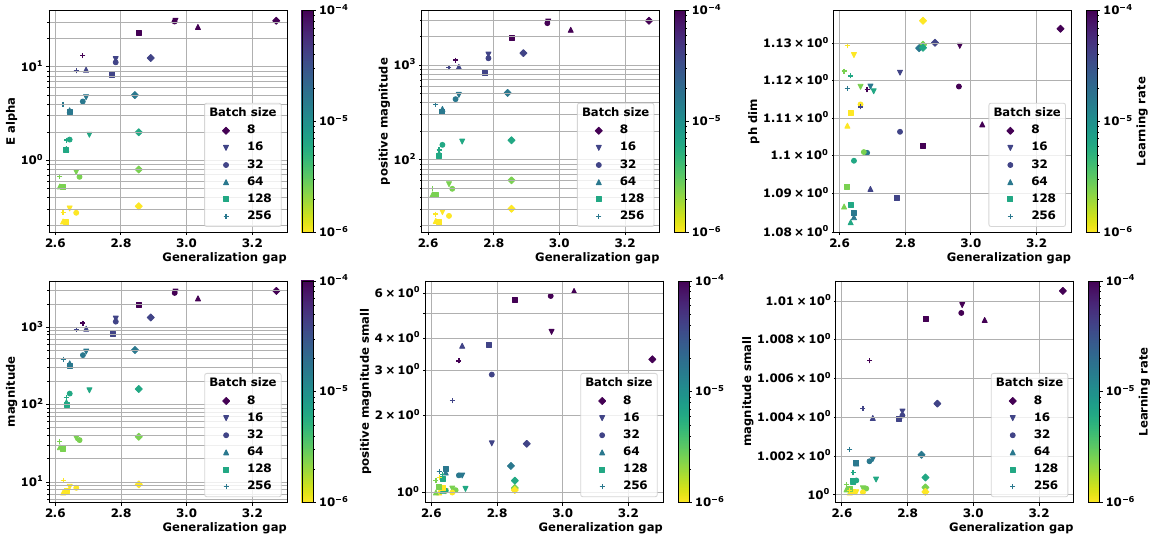}
      \caption{GraphSage on MNIST with $\|\cdot\|_2$.}
      \label{fig:graphsage_mnist_euclidean_complete_final}
\end{figure}

\begin{figure}[tbh]
\centering
        \includegraphics[width=\linewidth]{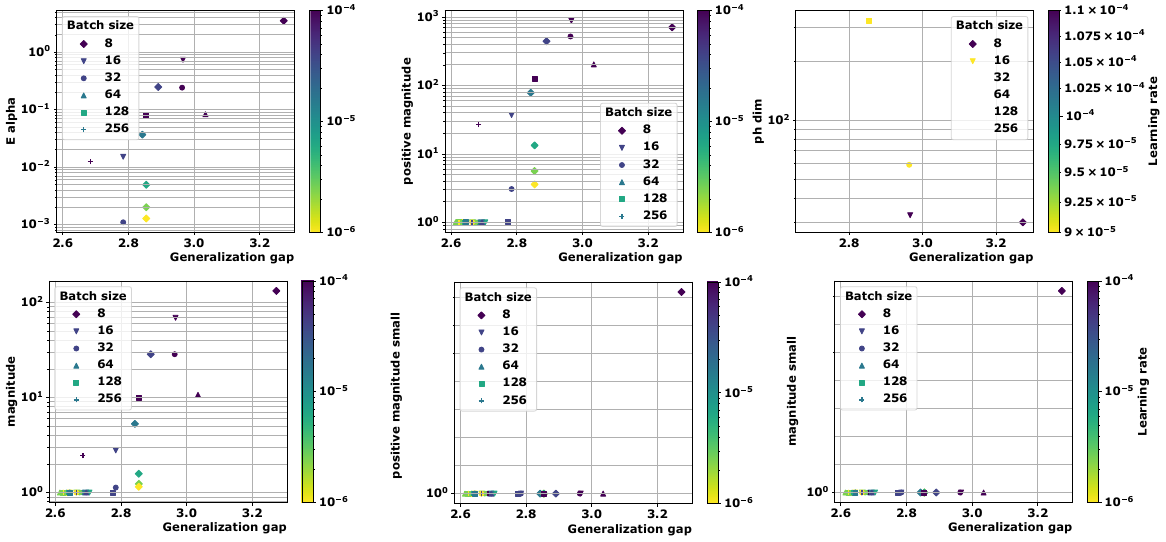}
      \caption{GraphSage on MNIST with $01$.}
      \label{fig:graphsage_mnist_01_complete_final}
\end{figure}

\begin{figure}[tbh]
\centering
        \includegraphics[width=\linewidth]{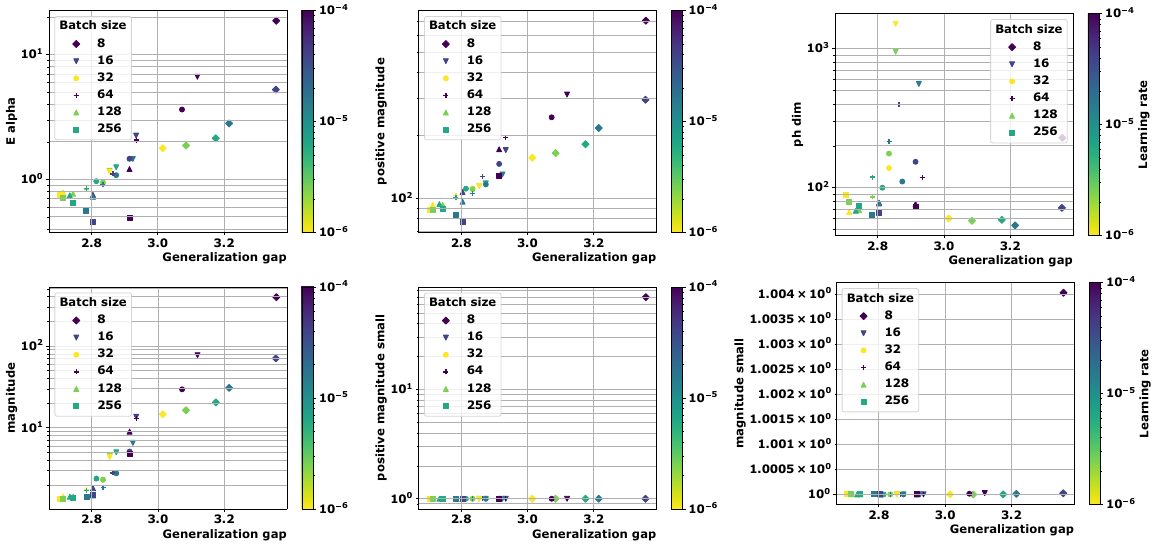}
      \caption{GatedGCN on MNIST with $\rho_S$-pseudometric.}
      \label{fig:gatedgcn_mnist_pseudo_complete_final}
\end{figure}

\begin{figure}[tbh]
\centering
        \includegraphics[width=\linewidth]{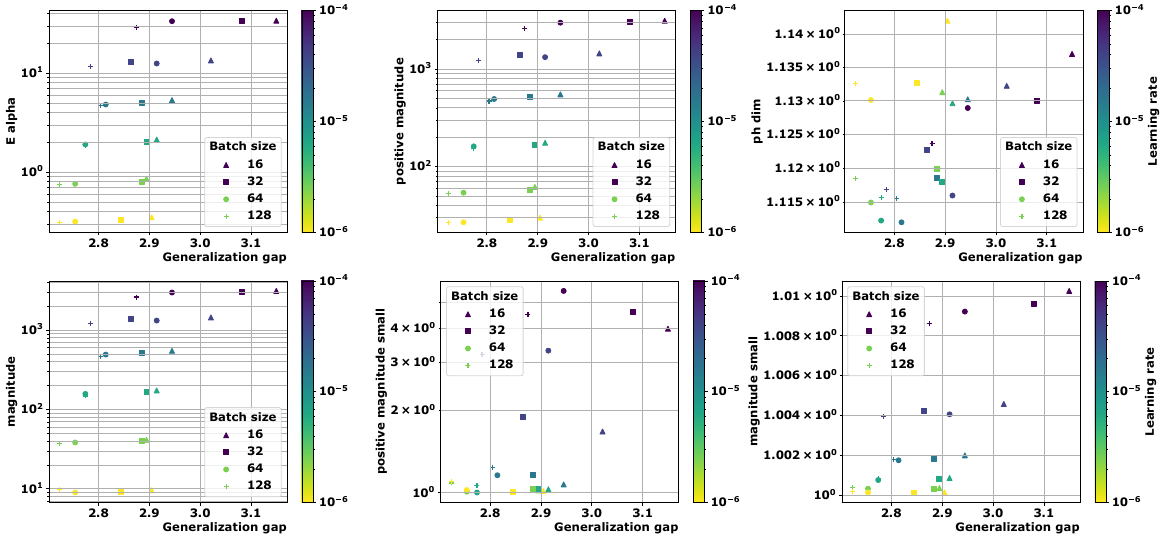}
      \caption{GatedGCN on MNIST with $\|\cdot\|_2$.}
      \label{fig:gatedgcn_mnist_euclidean_complete_final}
\end{figure}

\begin{figure}[tbh]
\centering
        \includegraphics[width=\linewidth]{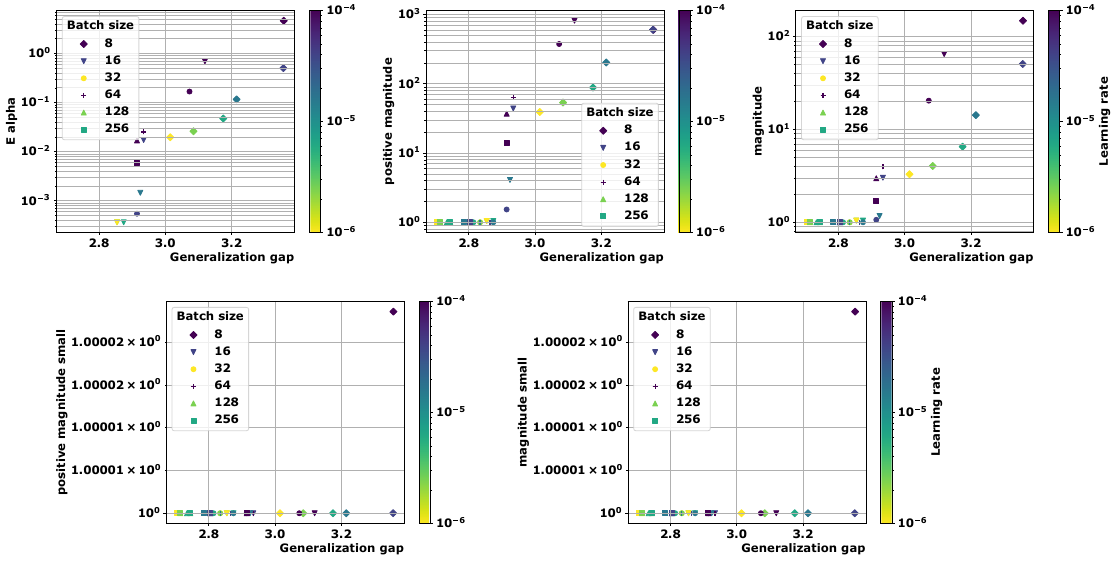}
      \caption{GatedGCN on MNIST with $01$.}
      \label{fig:gatedgcn_mnist_01_complete_final}
\end{figure}

\end{document}